\setlist{topsep=0pt, leftmargin=*}
\newlist{todolist}{itemize}{2}
\setlist[todolist]{label=$\square$}
\definecolor{darkred}{RGB}{150,0,0}
\definecolor{darkgreen}{RGB}{0,150,0}
\definecolor{darkblue}{RGB}{0,0,200}
\newcommand\blfootnote[1]{%
\begingroup
\renewcommand\thefootnote{}\footnote{#1}%
\addtocounter{footnote}{-1}%
\endgroup
}
\newtheorem{thm}{Theorem}
\newtheorem{defn}[thm]{Definition}
\newtheorem{rem}[thm]{Remark}
\newtheorem{lem}[thm]{Lemma}
\newtheorem{assumption}{Assumption}
\newcommand{\mc}{\mathcal}
\newcommand{\m}[1]{{\bf{#1}}}
\renewcommand{\mc}[1]{\ensuremath{\mathcal{#1}}} 	
\newcommand{\g}[1]{\mbox{\boldmath $#1$}}
\newcommand{\mb}[1]{{\mathbb{#1}}}
\DeclareMathOperator{\trace}{trace}
\title{Fair Canonical Correlation Analysis}
\author{Zhuoping Zhou$^{\pi}$, Davoud Ataee Tarzanagh$^{\pi}$, Bojian Hou$^{\pi}$\\
    \textbf{Boning Tong, Jia Xu, Yanbo Feng, Qi Long$^\sigma$, Li Shen$^\sigma$}\vspace{0.1cm}\\
  University of Pennsylvania \\
 \texttt{\{zhuopinz@sas.,tarzanaq@,boningt@seas.,jiaxu7@,yanbof@seas.\}upenn.edu}\\
 \texttt{\{bojian.hou,qlong,li.shen\}@pennmedicine.upenn.edu}\\
}
\begin{document}
\addtocontents{toc}{\protect\setcounter{tocdepth}{0}}
\maketitle
\blfootnote{\hspace{-.1cm}$^\pi$ Equal contribution\ \ 
$^\sigma$ Corresponding authors
}
\begin{abstract}
This paper investigates fairness and bias in Canonical Correlation Analysis (CCA), a widely used statistical technique for examining the relationship between two sets of variables. We present a framework that alleviates unfairness by minimizing the correlation disparity error associated with protected attributes. Our approach enables CCA to learn global projection matrices from all data points while ensuring that these matrices yield comparable correlation levels to group-specific projection matrices. Experimental evaluation on both synthetic and real-world datasets demonstrates the efficacy of our method in reducing correlation disparity error without compromising CCA accuracy.
\end{abstract}
\section{Introduction}
Canonical Correlation Analysis (CCA) is a multivariate statistical technique that explores the relationship between two sets of variables \cite{hotelling1992relations}. Given two datasets $\m{X} 
\in \mb{R}^{N \times D_x}$ and  $\m{Y} \in  \mb{R}^{N \times D_y} $ on the same set of $N$ observations,\footnote{~The columns of $\m{X}$ and $\m{Y}$ have been standardized.} CCA seeks the $R$--dimensional subspaces where the projections of $\m{X}$ and $\m{Y}$ are maximally correlated, i.e. finds $\m{U} \in  \mb{R}^{D_x \times R}$ and $\m{V} \in  \mb{R}^{D_y \times R }$ such that 
\begin{equation}\tag{CCA}\label{eqn:main:cca}
\textnormal{maximize}~~ \trace \left(\m{U}^\top\m{X}^\top\m{Y}\m{V}\right)   \quad \textnormal{subject to}  \quad  \m{U}^\top\m{X}^\top \m{X}\m{U}=\m{V}^\top\m{Y}^\top\m{Y}\m{V} = \m{I}_R.
\end{equation}
CCA finds applications in various fields, including biology \cite{cca_bio}, neuroscience \cite{cca_brain}, medicine \cite{cca_eeg}, and engineering \cite{cca_fault}, for unsupervised or semi-supervised learning. It improves tasks like clustering, classification, and manifold learning by creating meaningful dimensionality-reduced representations \cite{uurtio2017tutorial}. However, CCA can exhibit \textit{unfair} behavior when analyzing data with protected attributes, like sex or race. For instance, in Alzheimer's disease (AD) analysis, CCA can establish correlations between brain imaging and cognitive decline. Yet, if it does not consider the influence of sex, it may result in disparate correlations among different groups because AD affects males and females differently, particularly in cognitive decline~\cite{Laws2016-ga,zong2022medfair}.

The influence of machine learning on individuals and society has sparked a growing interest in the topic of fairness \cite{mehrabi2021survey}. While fairness techniques are well-studied in supervised learning \cite{barocas2023fairness,donini2018empirical,dwork2012fairness}, attention is shifting to equitable methods in unsupervised learning \cite{caton2020fairness,celis2017ranking,chierichetti2017fair,kleindessner2019guarantees,oneto2020fairness,samadi2018price,tantipongpipat2019multi}. Despite extensive work on fairness in machine learning, fair CCA (F-CCA) remains unexplored.  This paper investigates F-CCA and introduces new approaches to mitigate bias in \eqref{eqn:main:cca}.

For further discussion, we compare CCA with our proposed F-CCA in sample projection, as illustrated in Figure~\ref{fig:illustration}. In Figure~\ref{fig:illustration}(a), we have samples $\m{x}_1$ and $\m{x}_2$ from matrix $\m{X}$, and in Figure~\ref{fig:illustration}(b), their corresponding samples $\m{y}_1$ and $\m{y}_2$ are from matrix $\m{Y}$. CCA learns $\m{U}$ and $\m{V}$ to maximize correlation, inversely related to the angle between the sample vectors. Figure~\ref{fig:illustration}(c) demonstrates the proximity within the projected sample pairs $(\m{U}^\top\m{x}_1, \m{V}^\top\m{y}_1)$ and $(\m{U}^\top\m{x}_2, \m{V}^\top\m{y}_2)$. In Figure~\ref{fig:illustration}(d)-(i), we compare the results of different learning strategies. There are five pairs of samples, with female pairs highlighted in red and male pairs shown in blue. Random projection (Figure~\ref{fig:illustration}(e)) leads to randomly large angles between corresponding sample vectors. CCA reduces angles compared to random projection (Figure~\ref{fig:illustration}(f)), but significant angle differences between male and female pairs indicate bias. Using sex-based projection matrices heavily biases the final projection, favoring one sex over the other (Figures~\ref{fig:illustration}(g) and \ref{fig:illustration}(h)). To address this bias, our F-CCA maximizes correlation within pairs and ensures equal correlations across different groups, such as males and females (Figure~\ref{fig:illustration}(i)). Note that while this illustration represents individual fairness, the desired outcome in practice is achieving similar average angles for different groups.

\begin{figure}[t]
\centering
\includegraphics[width=\textwidth]{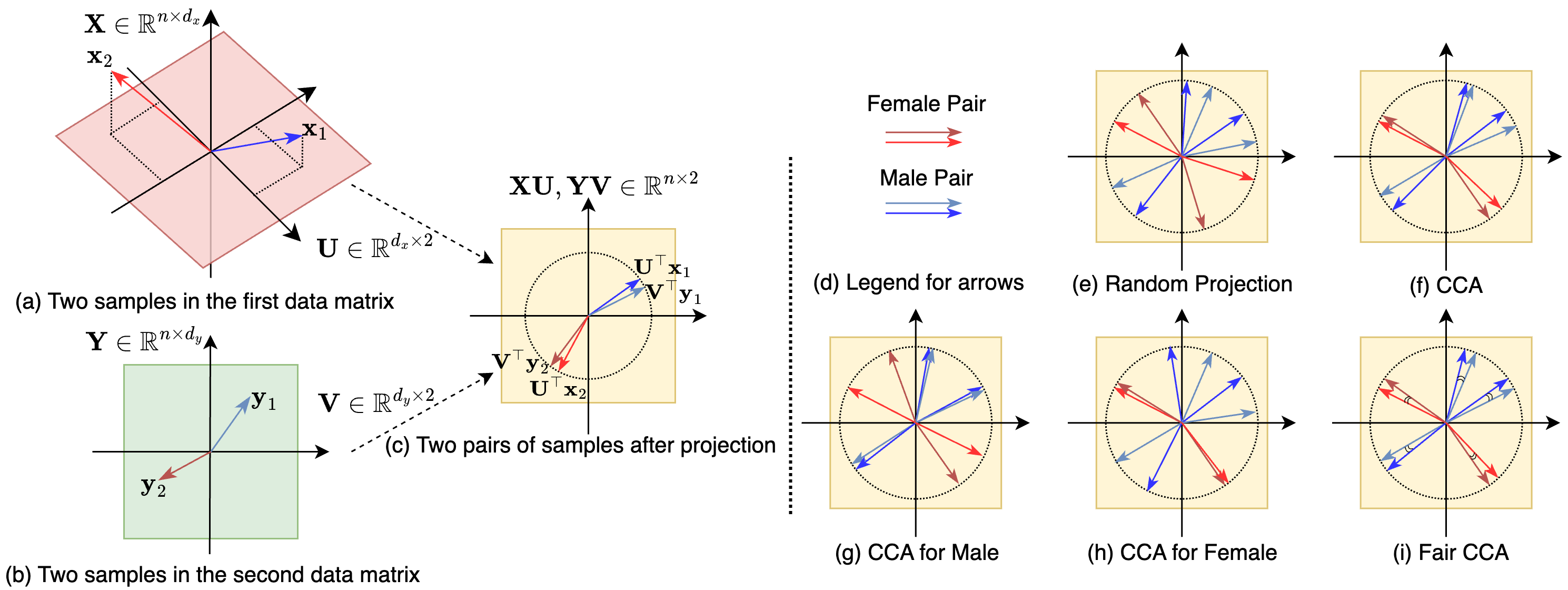}
\caption{Illustration of CCA and F-CCA, with the sensitive attribute being sex (female and male). Figures (a)--(c) demonstrate the general framework of CCA, while Figures (d)--(i) provide a comparison of the projected results using various strategies. It is important to note that the correlation between two corresponding samples is inversely associated with the angle formed by their projected vectors. F-CCA aims to equalize the angles among all pairs $(\mathbf{x}, \mathbf{y})$.} \label{fig:illustration}
\end{figure} 
\textbf{Contributions.} This paper makes the following key contributions:
\begin{itemize}
\item We introduce fair CCA (F-CCA), a model that addresses fairness issues in \eqref{eqn:main:cca} by considering multiple groups and minimizing the correlation disparity error of protected attributes. F-CCA aims to learn global projection matrices from all data points while ensuring that these projection matrices produce a similar amount of correlation as group-specific projection matrices.
\item We propose two optimization frameworks for F-CCA: multi-objective and single-objective. The multi-objective framework provides an automatic trade-off between global correlation and equality in group-specific correlation disparity errors. The single-objective framework offers a simple approach to approximate fairness in CCA while maintaining a strong global correlation, requiring a tuning parameter to balance these objectives.
 \item We develop a gradient descent algorithm on a generalized Stiefel manifold to solve the multi-objective problem, with convergence guarantees to a Pareto stationary point.  This approach extends Riemannian gradient descent \cite{boumal2023introduction,boumal2019global} to multi-objective optimization, accommodating a broader range of retraction maps than exponential retraction \cite{ferreira2020iteration, bento2012unconstrained}. Furthermore, we provide a similar algorithm for single-objective problems, also with convergence guarantees to a stationary point. 
%
\item We provide extensive empirical results showcasing the efficacy of the proposed algorithms. Comparison against the CCA method on synthetic and real datasets highlights the benefits of the F-CCA approach, validating the theoretical findings \footnote{Code is available at \url{https://github.com/PennShenLab/Fair_CCA}.}.
\end{itemize}
\textbf{Organization:} Section~\ref{sec:relat:work} covers related work. Our proposed approach is detailed in Section~\ref{sec:main:lfg}, along with its theoretical guarantees. Section~\ref{sec:exp} showcases numerical experiments, while Section~\ref{sec:conc} discusses implications and future research directions.
\section{Related work }\label{sec:relat:work}
\textbf{Canonical Correlation Analysis (CCA).}
CCA was first introduced by \cite{hotelling1936relations,hotelling1935most}. Since then, it has been utilized to explore relations between variables in various fields of science, including economics \cite{waugh1942regressions}, psychology \cite{dunham1975canonical,hopkins1969statistical}, geography \cite{monmonier1973improving}, medicine \cite{lindsey1985canonical}, physics \cite{wong1980study}, chemistry \cite{tu1989canonical}, biology \cite{sullivan1982distribution}, time-series modeling \cite{heij1991modified}, and signal processing \cite{schell1995programmable}. Recently, CCA has demonstrated its applicability in modern fields of science such as neuroscience, machine learning, and bioinformatics \cite{sha2023methods,shen2020pieee}. CCA has been used to explore relations for developing brain-computer interfaces \cite{cao2015sequence,nakanishi2015comparison} and in the field of imaging genetics \cite{fang2016joint}. CCA has also been applied for feature selection \cite{ogura2013variable}, feature extraction and fusion \cite{shen2013orthogonal}, and dimension reduction \cite{wang2013dimension}. Additionally, numerous studies have applied CCA in bioinformatics and computational biology, such as \cite{rousu2013biomarker,sarkar2015dna,seoane2014canonical}. The broad range of application domains highlights the versatility of CCA in extracting relations between variables, making it a valuable tool in scientific research.
\vspace{.2cm}
\\
\textbf{Fairness.} 
Fairness in machine learning has been a growing area of research, with much of the work focusing on fair supervised methods \cite{barocas2023fairness,chouldechova2018frontiers,donini2018empirical,dwork2012fairness,tarzanagh2023fairness,zafar2017fairness}. However, there has also been increasing attention on fair methods for unsupervised learning tasks \cite{caton2020fairness,celis2017ranking,chierichetti2017fair,kleindessner2019guarantees,kleindessner2023efficient,oneto2020fairness,samadi2018price,tantipongpipat2019multi,peltonen2023fair,tarzanagh2021fair}. In particular,  Samadi et al. \cite{samadi2018price} proposed a semi-definite programming approach to ensure fairness in PCA. Kleindessner et al. \cite{ kleindessner2023efficient,kleindessner2019guarantees} focused on fair PCA formulation for multiple groups and proposed a kernel-based fair PCA. Kamani et al. \cite{kamani2022efficient} introduced an efficient gradient method for fair PCA, addressing multi-objective optimization. In this paper, we propose a novel multi-objective framework for F-CCA, converting constrained F-CCA problems to unconstrained ones on a generalized Riemannian manifold. This framework enables the adaptation of efficient gradient techniques for numerical optimization on Riemannian manifolds.
\vspace{.2cm}
\\
\textbf{Riemannian Optimization.} Riemannian optimization extends Euclidean optimization to smooth manifolds, enabling the minimization of $f(\mathbf{x})$ on a Riemannian manifold $\mc{M}$ and converting constrained problems into unconstrained ones \cite{absil2008optimization,boumal2023introduction}. It finds applications in various domains such as matrix/tensor factorization \cite{ishteva2011best,tan2014riemannian}, PCA \cite{edelman1998geometry}, and CCA \cite{yger2012adaptive}. Specifically, CCA can be formulated as Riemannian optimization on the Stiefel manifold \cite{chen2020alternating,meng2021online}. In our work, we utilize Riemannian optimization to develop a multi-objective framework for F-CCAs on generalized Stiefel manifolds.
\section{Fair Canonical Correlation Analysis}\label{sec:main:lfg}
This section introduces the formulation and optimization algorithms for F-CCA.
\subsection{Preliminary}\label{sec:prel}
Real numbers are represented as $\mathbb{R}$, with $\mathbb{R}_{+}$ for nonnegative values and $\mathbb{R}_{++}$ for positives. Vectors and matrices use bold lowercase and uppercase letters (e.g., $\mathbf{a}$, $\mathbf{A}$) with elements $a_i$ and $a_{ij}$. For $\mathbf{x}, \mathbf{y} \in \mathbb{R}^m $, $\mathbf{x} \prec \mathbf{y} $ and $\mathbf{x} \preceq \mathbf{y} $ mean $\mathbf{y}-\mathbf{x} \in \mathbb{R}_{++}^m $ and $\mathbf{y}-\mathbf{x} \in \mathbb{R}_{+}^m$, respectively. For a symmetric matrix $\mathbf{A} \in \mathbb{R}^{N\times N}$, $\mathbf{A} \succ 0$ and $\mathbf{A} \succeq 0$ denote positive definiteness and positive semidefiniteness (PSD), respectively. $\m{I}_D$, $\m{J}_D$, and $\m{0}_D$ are $D \times D$ identity, all-ones, and all-zeros matrices. $\Lambda_{i}(\m{A})$ stands for the $i$-th singular values of $\m{A}$. Matrix norms are defined as  
$\|\m{A}\|_1= \sum_{ij}\vert a_{ij}\vert$, $\|\m{A}\|= \max_{i}\Lambda_{i}(\m{A})$, and $\|\m{A}\|_{\text{F}}:= (\sum_{ij}\vert a_{ij}\vert^2)^{1/2}$.   We introduce some preliminaries on manifold optimization \cite{absil2008optimization,bento2012unconstrained,boumal2023introduction}. Given a PSD matrix $\m{B}\in \mb{R}^{D \times D}$, the generalized Stiefel manifold is  defined as 
\begin{equation}
\texttt{St}(D,R,\m{B})= \left\{ \m{Z} \in \mb{R}^{D  \times R}\ \big|\ \m{Z}^\top \m{B} \m{Z} = \m{I}_R \right\}.
\end{equation}
The tangent space of the manifold $\mc{M} = \texttt{St}(D, R, \m{B})$ at $\m{Z} \in \mc{M}$ is given by
\begin{align}
{\mc{T}_\m{Z}}\mc{M} = \left\{ \m{W} \in \mb{R}^{D \times R} \big|\  \m{Z}^\top \m{B}\m{W} + \m{W}^\top \m{B} \m{Z}=\m{0}_R\right\}.
 \end{align}
The tangent bundle of a smooth manifold $\mc{M}$, which consists of ${\mc{T}_\m{Z}}\mc{M} $  at all $\m{Z} \in \mc{M}$,  is defined as 
 \begin{align}
    \mc{T} \mc{M} = \left\{ (\m{Z},\m{W})\big|\  \m{Z} \in \mc{M},~\m{W} \in {\mc{T}_\m{Z}}\mc{M}\right\}. 
\end{align}
\begin{defn}
\label{defn:retraction}A retraction on a differentiable manifold $\mc{M}$ is a smooth mapping from its tangent bundle $\mc{T}\mc{M}$ to $\mc{M}$ that satisfies the following conditions, with $R^{\m{z}}$ being the retraction of $R$ to $\mc{T}_{\m{Z}} \mc{M}$:
\begin{enumerate}
		\item $R^{\m{z}}(\m{0})=\m{Z}$, for all $\m{Z}\in\mc{M}$, where $\m{0}$ denotes the zero element of $\mc{T}_{\m{Z}}\mc{M}$.
		\item For any $\m{Z}\in\mc{M}$, it holds that
		$\lim_{\mc{T}_{\m{Z}}\mc{M}\ni\g{\g{\xi}}\rightarrow 0}\frac{\|R^{\m{z}}(\g{\g{\xi}})-(\m{Z}+\g{\g{\xi}})\|_F}{\|\g{\xi}\|_F} = 0.$
	\end{enumerate}
\end{defn}
In the numerical experiments, this work employs a generalized polar decomposition-based retraction. Given a PSD matrix $\m{B}\in \mb{R}^{D \times D}$, for any $\g{\xi} \in {\mc{T}_\m{Z}}\mc{M}$ with $\mc{M} =\texttt{St}(D,R,\m{B})$, it is defined as:
\begin{equation}\label{eqn:gpd:ret}
 R^\m{z}(\g{\xi})=\bar{\m{U}}(\m{Q}\g{\Lambda}^{-\frac{1}{2}}\m{Q}^\top)\bar{\m{V}}^\top, 
\end{equation}
where $\bar{\m{U}}\Sigma \bar{\m{V}}^\top=\g{\xi}$ is the singular value decomposition of $\g{\xi}$, and $\m{Q},\g{\Lambda}$ are obtained from the eigenvalue decomposition $\m{Q}\g{\Lambda} \g{Q}^\top=\bar{\m{U}}^\top \m{B}\bar{\m{U}}$. Further details on retraction choices are in Appendix~\ref{sec:retract}.
\subsection{Correlation Disparity Error}
As previously mentioned, applying CCA to the entire dataset could lead to a biased result, as some groups might dominate the analysis while others are overlooked. To avoid this, we can perform CCA separately on each group and compare the results. Indeed, we can compare the performance of CCA on each group's data with the performance of CCA on the whole dataset, which includes all groups' data. The goal is to find a balance between the benefits and sacrifices of different groups so that each group's contribution to the CCA analysis is treated fairly. In particular, suppose the datasets $\m{X}
\in \mb{R}^{N \times D_x}$ and  $\m{Y} 
\in  \mb{R}^{N \times D_y} $ on the same set of $N$ observations, 
belong to $K$ different groups $\{(\m{X}^k,\m{Y}^k)\}_{k=1}^K$ with $\m{X}^k \in \mb{R}^{ N_k\times D_x}$ and $\m{Y}^k \in \mb{R}^{ N_k \times D_y}$, based on demographics or some other semantically meaningful clustering. 
These groups need not be mutually exclusive; each group can be defined as a different weighting of the data. 

To determine how each group is affected by F-CCA, we can compare the structure learned from each group's data $ (\m{X}^k,\m{Y}^k)$ with the structure learned from all groups' data combined $(\m{X},\m{Y})$. A fair CCA approach seeks to balance the benefits and drawbacks of each group's contribution to the analysis. Specifically, if we train global subspaces $\m{U} \in  \mb{R}^{D_x \times R}$ and $\m{V} \in  \mb{R}^{ D_y\times R }$ on $k$-th group dataset $(\m{X}^k,\m{Y}^k)$, we can identify the group-specific (local) weights represented by $(\m{U}^k, \m{V}^k)$ that has the best performance on that dataset. Thus, F-CCA algorithm should be able to learn global weights $(\m{U}, \m{V})$ on all data points while ensuring that each group's correlation on the CCA learned by the whole dataset is equivalent to the group-specific subspaces learned only by its own data.

To define these fairness criteria, we introduce correlation disparity error as follows:
\begin{defn}[\textbf{Correlation Disparity Error}]\label{defn:dispa:gen}
Consider a pair of datasets $(\m{X},\m{Y})$ with $K$ sensitive groups with data matrix $\{(\m{X}^k,\m{Y}^k)\}_{k=1}^K$ representing each sensitive group's data samples. Then, for any $ (\m{U}, \m{V})$, the correlation disparity error for each sensitive group $k \in [K]$ is defined as:
\begin{equation}\label{eq:graph:dispa}
    \mc{E}^k\left( \m{U}, \m{V} \right) :=  \trace \left({\m{U}^{k,\star}}^\top{\m{X}^k}^\top\m{Y}^{k}\m{V}^{k,\star}\right)-\trace \left({\m{U}}^\top{\m{X}^k}^\top\m{Y}^{k}\m{V}\right),\qquad  1\leq k\leq K.
\end{equation}
Here, $(\m{U}^{k,\star}, \m{V}^{k, \star})$ is the maximizer of the following group-specific CCA problem:
\begin{equation}\label{eqn:groupwise:uv}
\textnormal{maximize}~~\trace \left({\m{U}^k}^\top{\m{X}^{k}}^\top\m{Y}^{k}\m{V}^{k}\right)  ~~ \textnormal{subj. to}  \quad  {\m{U}^k}^\top{\m{X}^{k}}^\top \m{X}^k\m{U}^{k}={\m{V}^k}^\top{\m{Y}^{k}}^\top\m{Y}^k\m{V}^{k}= \m{I}_R.
\end{equation}  
\end{defn}
This measure shows how much correlation we are suffering for any global $ (\m{U}, \m{V})$, with respect to the loss of optimal local $(\m{U}^{k,\star}, \m{V}^{k, \star})$ that we can learn based on data points $(\m{X}^k, \m{Y}^k)$. Using Definition~\ref{defn:dispa:gen}, we can define F-CCA  as follows:
\begin{defn}[\textbf{Fair CCA}]
A CCA pair $ (\m{U}^\star, \m{V}^\star)$ is called fair if the correlation disparity error among $K$ different groups is equal, i.e., 
\begin{equation}\label{eq:fair:cca}
    \mc{E}^k\left( \m{U}^\star, \m{V}^\star \right)=  \mc{E}^s\left( \m{U}^\star, \m{V}^\star \right),\qquad \forall k\neq s, \quad k,s \in [K].
\end{equation}
A CCA pair $  (\m{U}^\star, \m{V}^\star)$ that achieves the same disparity error for all groups is called a fair CCA.
\end{defn}
Next, we introduce the concept of pairwise correlation disparity error for CCA, which measures the variation in correlation disparity among different groups.
\begin{defn}[\textbf{Pairwise Correlation  Disparity Error}]\label{defn:pcde}
The pairwise correlation disparity error for any global $  (\m{U}, \m{V})$ and group-specific subspaces $\{ (\m{U}^{k,\star}, \m{V}^{k,\star}) \}_{k=1}^K$, is defined as 
\begin{align}\label{eq:pde}
              \Delta^{k,s} \left( \m{U}, \m{V}\right)&:= \phi\left( \mathcal{E}^k\left(\m{U}, \m{V}\right) - \mathcal{E}^s\left(\m{U}, \m{V}\right) \right), \qquad \forall k\neq s, \quad k,s \in [K].
\end{align}    
Here, $\phi: \mb{R} \rightarrow \mb{R}_{+} $ is a penalty function such as $\phi(x) =\exp(x)$, $\phi(x) =x^2$, or   $\phi(x) =|x|$. 
\end{defn}
\begin{minipage}{.48\textwidth}
\begin{algorithm}[H]
  \caption*{\textbf{Algorithm\,1}: A Multi-Objective  Gradient Method for F-CCA (MF-CCA)}
  \begin{algorithmic}[1]
     \State \textbf{Input}: $(\m{X}, \m{Y})$,  $ (\m{U}_0,\m{V}_0)$, $(R, T) \in \mb{N} \times \mb{N}$; stepsizes $\{(\eta^\m{u}_t, \eta^\m{v}_t)\}_{t=1}^T$;
     \State Find the $R$-rank subspaces for each group, $\{(\m{U}^{k,*}, \m{V}^{k,*})\}_{k=1}^K$ using \eqref{eqn:groupwise:uv}.
     \For{$t=0, \ldots, T-1$}
     \State Find $(\m{P}_{t}^\m{u}, \m{P}_{t}^{\m{v}})$ by solving \eqref{eqn:multi:sub}.
\State $\m{U}_{t+1} \leftarrow R^{\m{u}}\left(\eta^{\m{u}}_t\m{P}^{\m{u}}_t\right)$
\State $\m{V}_{t+1} \leftarrow R^{\m{v}}\left(\eta^{\m{v}}_t\m{{P}}^{\m{v}}_t\right)$
  \EndFor
\State \textbf{Output}: $(\m{U}_T,\m{V}_T)$, $\{(\m{U}^{k,*}, \m{V}^{k,*})\}_{k=1}^K$.
  \end{algorithmic}
\end{algorithm}
\end{minipage}%
\hspace{.1cm}
\begin{minipage}{.5\textwidth}
\begin{algorithm}[H]
  \caption*{\textbf{Algorithm\,2}: A Single-Objective Gradient Method for F-CCA (SF-CCA)}
  \begin{algorithmic}[1]
    \State \textbf{Input}: $(\m{X}, \m{Y})$,  $ (\m{U}_0,\m{V}_0)$, $(R, T) \in \mb{N}  \times \mb{N}$;  $\lambda\in \mb{R}_{++}$; and stepsizes $\{(\eta^\m{u}_t, \eta^\m{v}_t)\}_{t=1}^T$;
     \State Find the $R$-rank subspaces for each group, $\{(\m{U}^{k,*}, \m{V}^{k,*})\}_{k=1}^K$ using \eqref{eqn:groupwise:uv}.
     \For{$t=0, \ldots, T-1$}
      \State Find  $(\m{G}_{t}^{\m{u}}, \m{G}_{t}^{\m{v}})$ by solving \eqref{eqn:single:sub}.
\State $\m{U}_{t+1} \leftarrow R^{\m{u}}\left(\eta^{\m{u}}_t\m{{G}}_t^{\m{u}}\right)$
\State $\m{V}_{t+1} \leftarrow R^{\m{v}}\left(\eta^{\m{v}}_t\m{{G}}_t^{\m{v}}\right)$
  \EndFor
\State \textbf{Output}: $ (\m{U}_T,\m{V}_T)$, $\{(\m{U}^{k,*}, \m{V}^{k,*})\}_{k=1}^K$.
  \end{algorithmic}
\end{algorithm}
\end{minipage}

The motivation for incorporating disparity error regularization in our approach can be attributed to the work by \cite{mahdi2019efficient,samadi2018price} in the context of PCA. To facilitate convergence analysis, we will primarily consider smooth penalization functions, such as squared or exponential penalties.

\subsection{ A Multi-Objective Framework for Fair CCA}

In this section, we introduce an optimization framework for balancing correlation and disparity errors.
Let \(f_1\left(\m{U}, \m{V} \right):= -\trace \left(\m{U}^\top\m{X}^\top\m{Y}\m{V}\right), f_2\left(\m{U}, \m{V} \right):=\Delta^{1,2} \left( \m{U}, \m{V}\right),\dots, f_M\left( \m{U}, \m{V} \right):=\Delta^{K-1,K} \left( \m{U}, \m{V}\right)\). 
The optimization problem of finding an optimal Pareto point of $\m{F}$ is denoted by
\begin{align}~\label{eqn:multi:cca}
\begin{array}{ll}
\underset{ \m{U}, \m{V}}{\text{minimize}} &
\begin{array}{c}
\hspace{.2cm} \m{F}(\m{U}, \m{V}) := \left[ f_1\left(\m{U}, \m{V} \right), f_2\left(\m{U}, \m{V} \right), \ldots,  f_M\left(\m{U}, \m{V} \right)\right], 
\end{array}\\
\text{subj. to} & \begin{array}[t]{l}
\hspace{.2cm}   \m{U}\in \mc{U}, ~~~  \m{V}\in \mc{V},
\end{array}
\end{array}
\end{align}
where $\mc{U}:=\{ \m{U}\big| \m{U}^\top \m{X}^\top \m{X}\m{U}=\m{I}_R\}$ and  $\mc{V} := \{\m{V}\big|\m{V}^\top\m{Y}^\top\m{Y}\m{V}=\m{I}_R\}$.

A point $(\m{U}, \m{V}) \in \mc{U} \times \mc{V}$  satisfying  $\mbox{\texttt{Im}}(\nabla \m{F}(\m{U}, \m{V}))\cap (-{\mathbb R}^{M}_{++})=\emptyset$ is  called  {\it critical Pareto}.  Here, $\mbox{\texttt{Im}}$ denotes the image of Jacobian of $\m{F}$. 
An {\it optimum Pareto  point} of $\m{F}$ is a point $(\m{U}^\star, \m{V}^\star)\in \mc{U} \times \mc{V}$ such that there exists no other $(\m{U}, \m{V})\in  \mc{U} \times \mc{V}$ with $\m{F}(\m{U}, \m{V})\prec \m{F}(\m{U}^\star, \m{V}^\star)$. 
Moreover, a point $\m{U}^\star, \m{V}^\star\in \mc{U} \times \mc{V}$ is a {\it weak  optimal Pareto} of $\m{F}$ if there is no $(\m{U}, \m{V})\in \mc{U} \times \mc{V}$ with $\m{F}(\m{U}, \m{V})\preceq \m{F}(\m{U}^\star, \m{V}^\star)$. The multi-objective framework \eqref{eqn:multi:cca}  addresses the challenge of handling conflicting objectives and achieving optimal trade-offs between them. 

To effectively solve Problem~\eqref{eqn:multi:cca}, we propose utilizing a gradient descent method on the Riemannian manifold that ensures convergence to a \textit{Pareto stationary point}.  The proposed gradient descent algorithm for solving \eqref{eqn:multi:cca} is provided in \textbf{Algorithm\,{1}}. For each $ (\m{U}, \m{V}) \in \mc{U} \times \mc{V}$, let $\m{P}:=(\m{P}^\m{u}, \m{P}^{\m{v}})$ with $\m{P}^\m{u} \in \mc{T}_{\m{U}}\mc{U}$ and 
$\m{P}^\m{v}\in \mc{T}_{\m{V}}\mc{V}$. The iterates $(\m{P}_{t}^\m{u}, \m{P}_{t}^{\m{v}})$ in Step~4 are obtained by solving the following subproblem in the joint tangent plane $\mc{T}_{\m{U}} \mc{U} \times \mc{T}_{\m{V}} \mc{V}$: 
\begin{equation}\label{eqn:multi:sub}
\min_{\m{P}\in \mc{T}_{\m{U}} \mc{U} \times \mc{T}_{\m{V}} \mc{V}  } ~~Q_t(\m{P}),~~\textnormal{where}~~Q_t(\m{P}):=\left\{\max _{i \in [M]} \trace\left(\m{P}^\top\nabla f_{i}((\m{U}_t, \m{V}_t))\right)+ \frac{1}{2}\|\m{P}\|^{2}_{\textnormal{F}}\right\}.    
\end{equation}

If $(\m{U}_t, \m{V}_t)\notin\mc{U}\times \mc{V}$ is not a Pareto stationary point, Problem~\eqref{eqn:multi:sub} has a unique nonzero solution $\m{P}_t$ (see Lemma~\ref{lem:step:well}), known as the \textit{steepest descent direction} for $\m{F}$ at $(\m{U}_t, \m{V}_t)$. In Steps 5 and 6, $R^{\m{u}}$ and $R^{\m{v}}$ denote the retractions onto the tangent spaces $\mc{T}_{\m{U}} \mc{U}$ and $\mc{T}_{\m{V}} \mc{V}$, respectively; refer to Definition~\ref{defn:retraction}. 
%
\begin{assumption}\label{assum}
For a given subset $\mc{S}$ of the tangent bundle  $\mc{T} \mc{U} \times \mc{T} \mc{V}$, there exists a constant $L_F$ such that, for all $(\m{Z}, \m{P})\in\mc{S}$,   we have 
$
\m{F}( R^{\m{z}}(\m{P})) \preceq \m{F}(\m{Z}) + \g{\nabla}+(L_F/2)\left\| \m{P}\right\|^{2}_{\textnormal{F}}\m{1}_M,
$
where  $\nabla_i:=\left \langle \nabla f_i(\m{Z}), \m{P} \right\rangle$, $\g{\nabla}:=[\nabla_1, \cdots, \nabla_M]^\top \in \mb{R}^M$, and $R^{\m{z}}$ is the retraction.
\end{assumption}
The above assumption extends \cite[A 4.3]{boumal2023introduction} to multi-objective optimization, and it always holds for the  \textit{exponential} map (exponential retraction) if the gradient of $\m{F}$ is $L_F$-Lipschitz continuous \cite{ferreira2020iteration,bento2012unconstrained}.

\begin{thm}\label{thm:m:nonasym} 
Suppose Assumption~\ref{assum} holds. Let $(\m{U}_{t},\m{V}_{t})$ be the sequence generated by \textnormal{MF-CCA}. Let $ f_{ i}^*:=\inf\{f_i(\m{U},\m{V}):~ (\m{U}, \m{V}) \in \mc{U} \times \mc{V}\}$, for all $i\in [M]$ and define $f_{i_*}(\m{U}_0,\m{V}_0)-f_{ i_*}^*:=\min \left\{ f_i(\m{U}_0,\m{V}_0)-f_{i}^*:~ i \in [M] \right\}$. If $\eta^\m{u}_t=\eta^\m{v}_t=\eta \leq 1/L_F$ for all $t \in \{0, \ldots, T-1\}$, then
\begin{equation*}
    \min \Big\{\left\|\m{P}_{t}\right\|_{\textnormal{F}}:~ t=0,\ldots, T-1 \Big\}\leq  \frac{2}{\eta} \left[\frac{ f_{i_*}(\m{U}_0,\m{V}_0)-f_{ i_*}^*}{T}\right]^{\frac{1}{2}}.
\end{equation*}
\end{thm}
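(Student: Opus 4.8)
The plan is to run the standard multi-objective descent analysis (in the spirit of Fliege--Svaiter) but with the Euclidean Taylor expansion replaced by the retraction inequality supplied by Assumption~\ref{assum}. The argument splits into three parts: (i) characterize the subproblem solution $\m{P}_t$ so that the \emph{single} direction $\m{P}_t$ is a common descent direction for every objective $f_i$ at once; (ii) convert this into a per-coordinate sufficient-decrease inequality through the retraction; and (iii) telescope and take minima over $i$ and $t$.

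\emph{Step 1 (steepest common descent direction).} First I would analyze the subproblem \eqref{eqn:multi:sub} over the linear tangent space $\mc{T}_{\m{U}_t}\mc{U}\times \mc{T}_{\m{V}_t}\mc{V}$. Writing $g_i:=\nabla f_i(\m{U}_t,\m{V}_t)$ and rewriting the inner $\max_i$ as $\max_{\lambda\in\Delta_M}\sum_i\lambda_i\langle\m{P},g_i\rangle$ over the simplex $\Delta_M$, the map $Q_t$ is convex in $\m{P}$ and concave (linear) in $\lambda$, so a minimax theorem lets me swap the order. The inner minimization in $\m{P}$ is then an unconstrained quadratic on the tangent space, solved by $\m{P}_t=-\Pi\big(\sum_i\lambda_i^*g_i\big)$, where $\Pi$ is the orthogonal projection onto $\mc{T}_{\m{U}_t}\mc{U}\times \mc{T}_{\m{V}_t}\mc{V}$ and $\lambda^*$ is the optimal weight; its value is $Q_t(\m{P}_t)=-\tfrac12\|\m{P}_t\|_{\textnormal{F}}^2$. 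This is exactly the well-posedness and uniqueness content of Lemma~\ref{lem:step:well}, which I would invoke. Comparing with the definition $Q_t(\m{P}_t)=\max_i\langle\m{P}_t,g_i\rangle+\tfrac12\|\m{P}_t\|_{\textnormal{F}}^2$ yields the key identity
\[
\max_{i\in[M]}\langle \nabla f_i(\m{U}_t,\m{V}_t),\m{P}_t\rangle=-\|\m{P}_t\|_{\textnormal{F}}^2 ,
\]
so in particular $\langle \nabla f_i(\m{U}_t,\m{V}_t),\m{P}_t\rangle\le-\|\m{P}_t\|_{\textnormal{F}}^2$ for \emph{all} $i\in[M]$ simultaneously.

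\emph{Step 2 (sufficient decrease) and Step 3 (telescoping).} Next I would apply Assumption~\ref{assum} at $\m{Z}_t=(\m{U}_t,\m{V}_t)$ with the step $\m{P}=\eta\m{P}_t$, which is precisely the MF-CCA update $\m{Z}_{t+1}=R^{\m{z}}(\eta\m{P}_t)$ (the product of Steps 5--6 of Algorithm~1, using $\eta^\m{u}_t=\eta^\m{v}_t=\eta$). Reading off coordinate $i$ of the vector inequality and combining the Step~1 bound with $\eta\le 1/L_F$,
\[
f_i(\m{Z}_{t+1})\le f_i(\m{Z}_t)+\eta\langle \nabla f_i(\m{Z}_t),\m{P}_t\rangle+\frac{L_F\eta^2}{2}\|\m{P}_t\|_{\textnormal{F}}^2\le f_i(\m{Z}_t)-\frac{\eta}{2}\|\m{P}_t\|_{\textnormal{F}}^2 ,
\]
valid for every $i\in[M]$ and every $t$. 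For a fixed $i$ I would then sum over $t=0,\dots,T-1$, bound the resulting telescoping sum by $f_i(\m{U}_0,\m{V}_0)-f_i^{*}$, and finally minimize the right-hand side over $i$ to replace it with $f_{i_*}(\m{U}_0,\m{V}_0)-f_{i_*}^{*}$. Lower-bounding $\sum_t\|\m{P}_t\|_{\textnormal{F}}^2\ge T\min_t\|\m{P}_t\|_{\textnormal{F}}^2$ gives $\min_t\|\m{P}_t\|_{\textnormal{F}}^2\le \tfrac{2}{\eta T}\big(f_{i_*}(\m{U}_0,\m{V}_0)-f_{i_*}^{*}\big)$, and taking square roots produces the claimed $O(1/\sqrt{T})$ stationarity rate; since $\|\m{P}_t\|_{\textnormal{F}}\to 0$ certifies Pareto stationarity, this is the desired guarantee.

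I expect the main obstacle to be Step~1, namely establishing that one direction $\m{P}_t$ decreases all objectives at once. This common-descent identity is precisely what the minimax/duality structure of \eqref{eqn:multi:sub} delivers, and one must take care that the whole computation lives on the \emph{linear} tangent space $\mc{T}_{\m{U}_t}\mc{U}\times \mc{T}_{\m{V}_t}\mc{V}$, so that the inner quadratic is genuinely unconstrained there and the $\nabla f_i$ may be taken modulo their normal components via $\Pi$. The crucial point enabling the final minimization over $i$ is that the \emph{same} sequence $\{\m{P}_t\}$ appears on the left-hand side of the telescoped inequality for every $i$. The remaining subtlety is that the descent through the retraction $R^{\m{z}}$ is underwritten by Assumption~\ref{assum} rather than by a genuine gradient-Lipschitz Taylor bound; once that is in hand, Steps~2 and~3 are routine bookkeeping.
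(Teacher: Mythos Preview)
Your proposal is correct and follows essentially the same route as the paper. The paper packages your Step~1 as Lemma~\ref{lem:ineq.aux} (obtained directly from the KKT characterization in Lemma~\ref{lem:step:well} rather than through a minimax swap, but with the same conclusion $\max_i\langle\nabla f_i,\m{P}_t\rangle=-\|\m{P}_t\|_{\textnormal{F}}^2$), combines it with Assumption~\ref{assum} into the descent Lemma~\ref{lem:multi:descent}, and then telescopes and minimizes over $i$ exactly as in your Step~3.
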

\begin{proof}[Proof Sketch] We employ Lemma~\ref{lem:step:well} to establish the unique solution $\mathbf{P}_t $ for subproblem \eqref{eqn:multi:sub}. Lemmas \ref{lem:ineq.aux} and \ref{lem:multi:descent} provide estimates for the decrease of function $\mathbf{F}$ along $\mathbf{P}_t$: For any $\eta_t \geq 0$, we have
$
\mathbf{F}(\mathbf{U}_{t+1}, \mathbf{V}_{t+1}) \preceq \mathbf{F}(\mathbf{U}_t, \mathbf{V}_t) - ( \eta_t-L_F \eta_t^2/2) \left \Vert \mathbf{P}_t \right \Vert^2_{\textnormal{F}} \mathbf{1}_M.
$ Summing this inequality over $t=0, 1, \ldots, T-1$ and applying our step size condition yields the desired result. 
\end{proof}
Theorem~\ref{thm:m:nonasym} provides a generalization of \cite[Corollary 4.9]{boumal2023introduction} to the multi-objective optimization, showing that the norm of Pareto descent directions converges to zero. Consequently, the solutions produced by the algorithm converge to a stationary fair subspace. It is worth mentioning that multi-objective optimization in \cite{ferreira2020iteration,bento2012unconstrained} relies on the Riemannian exponential map, whereas the above theorem covers broader (and practical) retraction maps.
%
\subsection{A Single-Objective Framework for Fair CCA}
In this section, we introduce a straightforward and effective single-objective framework. This approach simplifies F-CCA optimization, lowers computational requirements, and allows for fine-tuning fairness-accuracy trade-offs using the hyperparameter $\lambda$. Specifically, by employing a regularization parameter $\lambda>0$, our proposed fairness model for F-CCA is expressed as follows: 
\begin{align}~\label{eqn:faircca:slevel}
\begin{array}{ll}
\underset{ \m{U}, \m{V}}{\text{minimize}} &
\begin{array}{c}
\hspace{.2cm} f(\m{U},\m{V}):=-\trace \left(\m{U}^\top\m{X}^\top\m{Y}\m{V}\right)+ \lambda \Delta \left( \m{U}, \m{V}\right),
\end{array}\\
\text{subj. to} & \begin{array}[t]{l}
\hspace{.2cm}   \m{U}\in \mc{U}, ~~~  \m{V}\in \mc{V},
\end{array}
\end{array}
\end{align}
where \(\Delta\left(\m{U},\m{V}\right)=\sum_{i,j\in[K],i\neq j}\Delta^{i,j}\left(\m{U},\m{V}\right)\); see Definiton~\ref{defn:pcde}.

The choice of $\lambda$ in the model determines the emphasis placed on different objectives. When $\lambda$ is large, the model prioritizes fairness over minimizing subgroup errors. Conversely, if $\lambda$ is small, the focus shifts towards minimizing subgroup correlation errors rather than achieving perfect fairness. In other words, it is possible to obtain perfectly F-CCA subspaces; however, this may come at the expense of larger errors within the subgroups. The constant $\lambda$ in the model allows for a flexible trade-off between fairness and minimizing subgroup correlation errors, enabling us to find a balance based on the specific requirements and priorities of the problem at hand. 

The proposed gradient descent algorithm for solving \eqref{eqn:faircca:slevel} is provided as \textbf{Algorithm\,{2}}. For each $ (\m{U}, \m{V}) \in \mc{U} \times \mc{V}$, let $\m{G}:=(\m{G}^\m{u}, \m{G}^{\m{v}})$ with $\m{G}^\m{u} \in \mc{T}_{\m{U}}\mc{U}$ and 
$\m{G}^\m{v}\in \mc{T}_{\m{V}}\mc{V}$. The  iterates $(\m{G}_{t}^\m{u}, \m{G}_{t}^{\m{v}})$ are obtained by solving the following problem in the joint tangent plane $\mc{T}_{\m{U}} \mc{U} \times \mc{T}_{\m{V}} \mc{V}$: 
\begin{equation}\label{eqn:single:sub}
\min_{\m{G}\in \mc{T}_{\m{U}} \mc{U} \times \mc{T}_{\m{V}} \mc{V}  } ~~q_t(\m{G}),~~\textnormal{where}~~q_t(\m{G}):=\left\{ \trace\left(\m{G}^\top\nabla f((\m{U}_t, \m{V}_t))\right)+ \frac{1}{2}\|\m{G}\|^{2}_{\textnormal{F}}\right\}.    
\end{equation}
The solutions $(\m{G}_{t}^\m{u}, \m{G}_{t}^{\m{v}})$ are maintained on the manifolds using the retraction operations $R^\m{u}$ and $R^\m{v}$. 
\begin{assumption}\label{assum:sing}
For a subset $\mc{S} \subseteq \mc{T} \mc{U} \times \mc{T} \mc{V}$, there exists a constant $L_f$ such that for all $(\m{Z}, \m{G})\in\mc{S}$, 
$
f( R^{\m{z}}(\m{G})) \leq  \m{F}(\m{Z}) +\left \langle \nabla f(\m{Z}), \m{G} \right\rangle+ (L_f/2)\left\| \m{G}\right\|^{2}_{\textnormal{F}},
$
with $R^{\m{z}}$ as the retraction.
\end{assumption}
\begin{thm}\label{thm:s:nonasym}  
Suppose Assumption~\ref{assum:sing} holds. Let $(\m{U}_{t},\m{V}_{t})$ be the sequence generated by \textnormal{SF-CCA}. Let $ f^*:=\inf\{f(\m{U},\m{V}):~ (\m{U}, \m{V}) \in \mc{U} \times \mc{V}\}$. If $\eta^\m{u}_t=\eta^\m{v}_t=\eta \leq 1/L_f $ for all $t\in [T]$, then 
\begin{equation*}
    \min \Big\{\|\m{G}_{t}\|_{\textnormal{F}}:~ t=0,\ldots, T-1 \Big\}\leq \frac{2}{\eta}\left[\frac{ f(\m{U}_0,\m{V}_0)-f^*}{T}\right]^{\frac{1}{2}}.
\end{equation*}
\end{thm}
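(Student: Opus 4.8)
The plan is to mirror the single-objective analogue of the argument sketched for Theorem~\ref{thm:m:nonasym}, which here simplifies considerably because there is only one scalar objective rather than $M$ of them. First I would analyze the subproblem \eqref{eqn:single:sub}. Since $q_t(\m{G}) = \trace(\m{G}^\top\nabla f(\m{U}_t,\m{V}_t)) + \tfrac12\|\m{G}\|_{\textnormal{F}}^2$ is a strongly convex quadratic in $\m{G}$ and the feasible set $\mc{T}_{\m{U}}\mc{U}\times\mc{T}_{\m{V}}\mc{V}$ is a linear subspace, the minimizer is unique and equals the negative orthogonal projection of the Euclidean gradient onto the joint tangent space, $\m{G}_t = -\,\textnormal{proj}_{\mc{T}_{\m{U}}\mc{U}\times\mc{T}_{\m{V}}\mc{V}}(\nabla f(\m{U}_t,\m{V}_t))$ (the minimization decouples over the product of tangent spaces). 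The key identity I would extract is $\inner{\nabla f(\m{U}_t,\m{V}_t)}{\m{G}_t} = -\|\m{G}_t\|_{\textnormal{F}}^2$, together with the fact that $\m{G}_t = \m{0}$ exactly when $(\m{U}_t,\m{V}_t)$ is Riemannian stationary. This is the single-objective counterpart of Lemma~\ref{lem:step:well}.

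Next I would invoke the descent estimate of Assumption~\ref{assum:sing} with $\m{Z} = (\m{U}_t,\m{V}_t)$ and the \emph{actual} step $\m{G} = \eta\,\m{G}_t$ executed in Steps 5--6 of SF-CCA. Substituting the inner-product identity yields the per-iteration decrease
\[
f(\m{U}_{t+1},\m{V}_{t+1}) \leq f(\m{U}_t,\m{V}_t) - \Big(\eta - \tfrac{L_f}{2}\eta^2\Big)\|\m{G}_t\|_{\textnormal{F}}^2.
\]
Under the stepsize condition $\eta \leq 1/L_f$ one has $\eta - \tfrac{L_f}{2}\eta^2 \geq \tfrac{\eta}{2}$, so each step decreases $f$ by at least $\tfrac{\eta}{2}\|\m{G}_t\|_{\textnormal{F}}^2$.

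Finally I would telescope this inequality over $t = 0,\dots,T-1$. The accumulated decrease is bounded above by $f(\m{U}_0,\m{V}_0) - f(\m{U}_T,\m{V}_T) \leq f(\m{U}_0,\m{V}_0) - f^*$, giving $\tfrac{\eta}{2}\sum_{t=0}^{T-1}\|\m{G}_t\|_{\textnormal{F}}^2 \leq f(\m{U}_0,\m{V}_0) - f^*$. Replacing the sum by $T$ times its smallest term and taking square roots produces the claimed bound on $\min_t\|\m{G}_t\|_{\textnormal{F}}$ (the prefactor in the statement being a slightly looser constant of the same order as the tight one, which is harmless since $\eta\leq 1/L_f$).

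The main obstacle I anticipate is the first step: rigorously showing that the tangent-space subproblem has a unique solution equal to the projected gradient and extracting the exact identity $\inner{\nabla f}{\m{G}_t} = -\|\m{G}_t\|_{\textnormal{F}}^2$. Everything downstream is a routine descent-lemma telescoping. The only further care needed is aligning the retraction argument $\eta\m{G}_t$ fed to Assumption~\ref{assum:sing} with the updates in Algorithm~2, and verifying that the iterates remain in the prescribed subset $\mc{S}$ of the tangent bundle so the assumption applies at every iteration.
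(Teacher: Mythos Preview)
Your proposal is correct and follows essentially the same route as the paper: apply Assumption~\ref{assum:sing} to the step $\eta\,\m{G}_t$, use the identity $\langle\nabla f(\m{U}_t,\m{V}_t),\m{G}_t\rangle=-\|\m{G}_t\|_{\textnormal{F}}^2$ coming from the subproblem~\eqref{eqn:single:sub}, obtain the per-iteration decrease, telescope, and bound the minimum by the average. Your treatment of the subproblem (identifying $\m{G}_t$ as minus the tangent-space projection of $\nabla f$) is in fact more carefully spelled out than the paper's proof, which writes the descent step somewhat loosely in Euclidean-gradient notation before switching to $\|\m{G}_t\|_{\textnormal{F}}$ in the summation.
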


\textbf{Comparison between MF-CCA and SF-CCA:}
MF-CCA addresses conflicting objectives and achieves optimal trade-offs automatically, but it necessitates the inclusion of $\binom{K}{2}$ additional objectives. SF-CCA, on the other hand, provides a simpler approach but requires tuning an extra hyperparameter $\lambda$. When choosing between the two methods, it is crucial to consider the trade-off between complexity and simplicity, as well as the number of objectives and the need for hyperparameter tuning.
\section{Experiments}\label{sec:exp}
In this section, we provide empirical results showcasing the efficacy of the proposed algorithms.
\subsection{Evaluation Criteria and Selection of Tuning Parameter}\label{sec:eva}
F-CCA's performance is evaluated on correlation and fairness for each dimension of subspaces. Let $\m{U}=[\m{u}_1, \cdots, \m{u}_R] \in \mb{R}^{D_x \times R}$ and $\m{V}=[\m{v}_1, \cdots, \m{v}_R] \in \mb{R}^{D_y \times R}$. The $r$-th canonical correlation  is defined as follows:
\begin{subequations}\label{eqn:measure}
\begin{equation}\label{eqn:measure:rhor}
\rho_r=\frac{\m{u}_r^\top\m{X}^\top\m{Y}\m{v}_r}{\sqrt{\m{u}_r^\top\m{X}^\top\m{X}\m{u}_r\m{v}_r^\top\m{Y}^\top\m{Y}\m{v}_r}},\quad r=1,\dots,R.
\qquad 
\end{equation}
Next, in terms of fairness, we establish the following two key measures:
\begin{align}
\Delta_{\max,r}&=\max_{i,j\in[K]}|\mathcal{E}^i(\m{u}_r,\m{v}_r)-\mathcal{E}^j(\m{u}_r,\m{v}_r)|, \quad r=1,\dots,R,\\
\Delta_{\textnormal{sum},r}&=\sum_{i,j\in[K]}|\mathcal{E}^i(\m{u}_r,\m{v}_r)-\mathcal{E}^j(\m{u}_r,\m{v}_r)|, \quad r=1,\dots,R.
\end{align}
\end{subequations}
Here, $\Delta_{\max,r}$ measures maximum disparity error, while $\Delta_{\textnormal{sum},r}$ represents aggregate disparity error. The aim is to reach $\Delta_{\max,r}$ and $\Delta_{\textnormal{sum},r}$ of $0$ without sacrificing correlation ($\rho_r$) compared to CCA. We conduct a detailed analysis using component-wise measurements \eqref{eqn:measure} instead of matrix versions; for more discussions, see Appendix~\ref{app:sec:fcorr}.
 
The \texttt{canoncorr} function from MATLAB and \cite{krzanowski2000principles} is used to solve \eqref{eqn:main:cca}.  For MF-CCA and SF-CCA, the learning rate is searched on a grid in $\{1e-1, 5e-2, 1e-2, \dots, 1e-5\}$, and for SF-CCA, $\lambda$ is searched on a grid in $\{1e-2, 1e-1, 0.5, 1, 2, \dots, 10\}$. Sensitivity analysis of $\lambda$ is provided in Appendix \ref{app:sec:tradoff}. The learning rate decreases with the square root of the iteration number. Termination of algorithms occurs when the descent direction norm is below $1e-4$.
\subsection{Dataset}
\subsubsection{Synthetic Data} \label{sec:syn}
Following \cite{Bach2005API,min2019tensor}, our synthetic data are generated using  the Gaussian distribution 
$$
\begin{pmatrix}\m{X}\\\m{Y}\end{pmatrix}\sim N\left(\begin{bmatrix}\m{\mu_{X}}\\\m{\mu_{Y}}\end{bmatrix},\begin{bmatrix}\m{\Sigma_{X}}&\m{\Sigma_{XY}}\\\m{\Sigma_{YX}}&\m{\Sigma_{Y}}\end{bmatrix}\right).
$$ 
Here, $\m{\mu_{X}}\in\mathbb{R}^{D_x\times1}$ and $\m{\mu_{Y}}\in\mathbb{R}^{D_y\times1}$ are the means of  data matrices $\m{X}$ and $\m{Y}$, respectively;  covariance matrices $\m{\Sigma_{X}},\m{\Sigma_{Y}}$ and the cross-covariance matrix $\m{\Sigma_{XY}}$ are constructed as follows. Given ground truth projection matrices 
\(\m{U}\in\mathbb{R}^{D_x\times R},\m{V}\in\mathbb{R}^{D_y\times R}\) and canonical correlations \(\g{\rho}=(\rho_1,\rho_2,\dots,\rho_R)\) defined in \eqref{eqn:measure:rhor}. Let \(\m{U}=\m{Q_X}\m{R_X}\) and \(\m{V}=\m{Q_Y}\m{R_Y}\) be the QR decomposition of \(\m{U}\) and \(\m{V}\), then we have
\begin{subequations}\label{eqn:synth:cov}
\begin{align}
\m{\Sigma_{XY}}&=\m{\Sigma_X}\m{U}~\text{diag}(\g{\rho})~\m{V}^\top\m{\Sigma_Y},\\
\m{\Sigma_X}&=\m{Q_XR_X}^{-\top}\m{R_X}^{-1}\m{Q_X}^\top+\tau_x\m{T_X}(\m{I}_{D_x}-\m{Q_XQ_X}^\top)\m{T_X}^\top,\\
\m{\Sigma_Y}&=\m{Q_YR_Y}^{-\top}\m{R_Y}^{-1}\m{Q_Y}^\top+ \tau_y\m{T_Y}(\m{I}_{D_y}-\m{Q_YQ_Y}^\top)\m{T_Y}^\top.
\end{align}
\end{subequations}
Here, \(\m{T_X} \in \mathbb{R}^{D_x\times D_x}\) and \(\m{T_Y}\in\mathbb{R}^{D_y\times D_y}\) are randomly generated by normal distributions, and $\tau_x=1$ and $\tau_y =0.001$ are scaling hyperparameters. 
%
%
For subgroup distinction, we added noise to canonical vectors and adjusted sample sizes: 300, 350, 400, 450, and 500 observations each. In the numerical experiment, different canonical correlations are assigned to each subgroup alongside two global canonical vectors \(\m{U}\) and \(\m{V}\) to generate five distinct subgroups.

\subsubsection{Real Data}
\textbf{National Health and Nutrition Examination Survey (NHANES).}
We utilized the 2005-2006 subset of the NHANES database \url{https://www.cdc.gov/nchs/nhanes}, including physical measurements and self-reported questionnaires from participants. We partitioned the data into two distinct subsets: one with 96 phenotypic measures and the other with 55 environmental measures. 
Our objective was to apply F-CCA to explore the interplay between phenotypic and environmental factors in contributing to health outcomes, considering the impact of education. Thus, we segmented the dataset into three subgroups based on educational attainment (i.e., lower than high school, high school, higher than high school), with 2,495, 2,203, and 4,145 observations in each subgroup.
\\
\textbf{Mental Health and Academic Performance Survey (MHAAPS).}
This dataset is available at \url{https://github.com/marks/convert_to_csv/tree/master/sample_data}. It consists of three psychological variables, four academic variables, as well as sex information for a cohort of 600 college freshmen (327 females and 273 males). The primary objective of this investigation revolves around examining the interrelationship between the psychological variables and academic indicators, with careful consideration given to the potential influence exerted by sex. 
\\
\textbf{Alzheimer's Disease Neuroimaging Initiative (ADNI).}
We utilized AV45 (amyloid) and AV1451 (tau) positron emission tomography (PET) data from the ADNI database (\url{http://adni.loni.usc.edu}) \cite{weiner2013aad,weiner2017recent}. ADNI data are analyzed for fairness in medical imaging classification \cite{mazure2016sex,reeves2008sex,zong2022medfair}, and sex disparities in ADNI's CCA study can harm generalizability, validity, and intervention tailoring. We utilized F-CCA to account for sex differences. Our experiment links 52 AV45 and 52 AV1451 features in 496 subjects (255 females, 241 males).
%

\subsection{Results and Discussion}
In the simulation experiment, we follow the methodology described in Section \ref{sec:syn} to generate two sets of variables, each containing two subgroups of equal size. 
Canonical weights are trained and used to project the two sets of variables into a 2-dimensional space using CCA, SF-CCA, and MF-CCA. From Figure \ref{fig:2dillustration}, it is clear that the angle between the distributions of the two subgroups, as projected by SF-CCA and MF-CCA, is smaller in comparison. This result indicates that F-CCA has the ability to reduce the disparity between distinct subgroups.
\begin{table}[t]
\centering
\setlength{\tabcolsep}{2pt}
\small
\caption{Numerical results in terms of Correlation ($\rho_r$), Maximum Disparity ($\Delta_{\max,r}$), and Aggregate Disparity ($\Delta_{\textnormal{sum},r}$) metrics. Best values are in bold, and second-best are underlined. We focus on the initial five projection dimensions, but present only two dimensions here; results for other dimensions are in the supplementary material. We put the results of other projection dimensions in the supplementary material. ``$\uparrow$'' means the larger the better and ``$\downarrow$'' means the smaller the better. Note that MHAAPS has only 3 features, so we report results for its 1 and 2 dimensions.
}
\label{tab:results}
\begin{tabular}{@{}cl|c|ccccccccc@{}}
\toprule
\multicolumn{2}{c|}{\multirow{2}{*}{\textbf{Dataset}}} & \multicolumn{1}{c|}{\multirow{2}{*}{\textbf{Dim.}}} & \multicolumn{3}{c|}{$\rho_r\uparrow$} & \multicolumn{3}{c|}{$\Delta_{\max,r}\downarrow$} & \multicolumn{3}{c}{$\Delta_{\textnormal{sum},r}\downarrow$} \\
\cmidrule(l){4-12} 
\multicolumn{2}{c|}{} & \multicolumn{1}{c|}{$(r)$} & CCA & MF-CCA & \multicolumn{1}{c|}{SF-CCA} & CCA & MF-CCA & \multicolumn{1}{c|}{SF-CCA} & CCA & MF-CCA & SF-CCA \\ \midrule
\multicolumn{2}{c|}{\multirow{2}{*}{\begin{tabular}[c]{@{}c@{}}Synthetic\\ Data\end{tabular}}} & \multicolumn{1}{c|}{2} & \textbf{0.7533} & \underline{0.7475} & \multicolumn{1}{c|}{0.7309} & 0.3555 & \underline{0.2866} & \multicolumn{1}{c|}{\textbf{0.2241}} & 3.3802 & \underline{2.8119} & \textbf{2.2722} \\
\multicolumn{2}{c|}{}  & \multicolumn{1}{c|}{5} & \textbf{0.4717} & \underline{0.4681} & \multicolumn{1}{c|}{0.4581} & 0.4385 & \underline{0.3313} & \multicolumn{1}{c|}{\textbf{0.2424}} & 4.1649 & \underline{3.1628} & \textbf{2.2304} \\ \midrule
\multicolumn{2}{c|}{\multirow{2}{*}{\begin{tabular}[c]{@{}c@{}}NHANES\end{tabular}}} & \multicolumn{1}{c|}{2} & \textbf{0.6392} & \underline{0.6360} & \multicolumn{1}{c|}{0.6334} & 0.0485 & \underline{0.0359} & \multicolumn{1}{c|}{\textbf{0.0245}} & 0.1941 & \underline{0.1435} & \textbf{0.0980} \\
\multicolumn{2}{c|}{} & \multicolumn{1}{c|}{5} & \textbf{0.4416} & \underline{0.4393} & \multicolumn{1}{c|}{0.4392} & 0.1001 & \textbf{0.0818} & \multicolumn{1}{c|}{\underline{0.0824}} & 0.4003 & \textbf{0.3272} & \underline{0.3297} \\ \midrule
\multicolumn{2}{c|}{\multirow{2}{*}{\begin{tabular}[c]{@{}c@{}}MHAAPS\end{tabular}}} & \multicolumn{1}{c|}{1} & \textbf{0.4464} & {0.4451} & \multicolumn{1}{c|}{\underline{0.4455}} & 0.0093 & \underline{0.0076} & \multicolumn{1}{c|}{\textbf{0.0044}} & 0.0187 & \underline{0.0152} & \textbf{0.0088} \\
\multicolumn{2}{c|}{} & \multicolumn{1}{c|}{2} & \textbf{0.1534} & \underline{0.1529} & \multicolumn{1}{c|}{{0.1526}} & 0.0061 & \underline{0.0038} & \multicolumn{1}{c|}{\textbf{0.0019}} & 0.0122 & \underline{0.0075} & \textbf{0.0039} \\ \midrule
\multicolumn{2}{c|}{\multirow{2}{*}{\begin{tabular}[c]{@{}c@{}}ADNI\end{tabular}}} & \multicolumn{1}{c|}{2} & \textbf{0.7778} & \underline{0.7776} & \multicolumn{1}{c|}{{0.7753}} & 0.0131 & \underline{0.0119} & \multicolumn{1}{c|}{\textbf{0.0064}} & 0.0263 & \underline{0.0238} & \textbf{0.0127} \\
\multicolumn{2}{c|}{} & \multicolumn{1}{c|}{5} & \textbf{0.6810} & \underline{0.6798} & \multicolumn{1}{c|}{0.6770} & 0.0477 & \underline{0.0399} & \multicolumn{1}{c|}{\textbf{0.0324}} & 0.0954 & \underline{0.0799} & \textbf{0.0648} \\ \bottomrule
\end{tabular}%
\end{table}
\begin{figure}[t]
\small
\centering
\begin{minipage}{0.28\linewidth}
\centerline{\includegraphics[width=\textwidth]{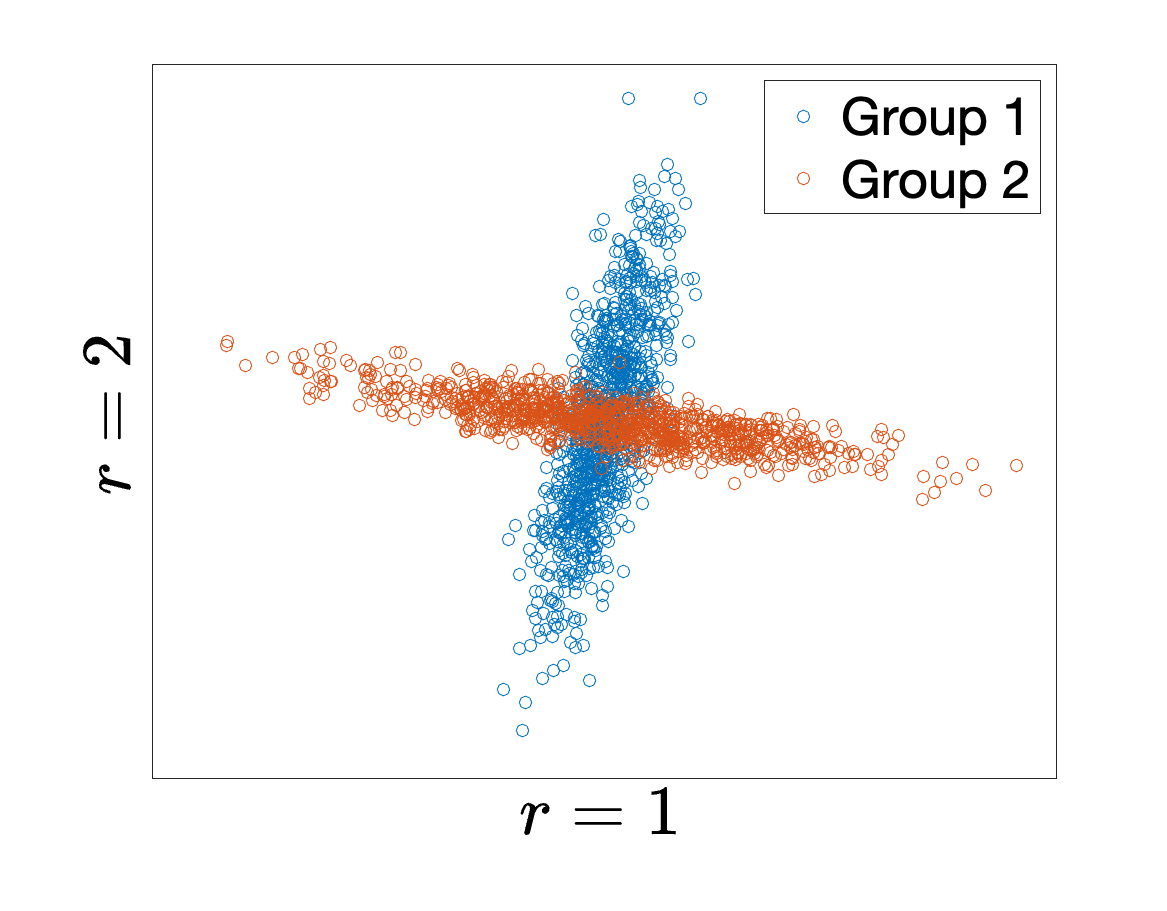}}
\vspace{-0.2cm}
\subcaption{$\m{X}\m{U}$ of CCA}
\end{minipage}
\begin{minipage}{0.28\linewidth}
\centerline{\includegraphics[width=\textwidth]{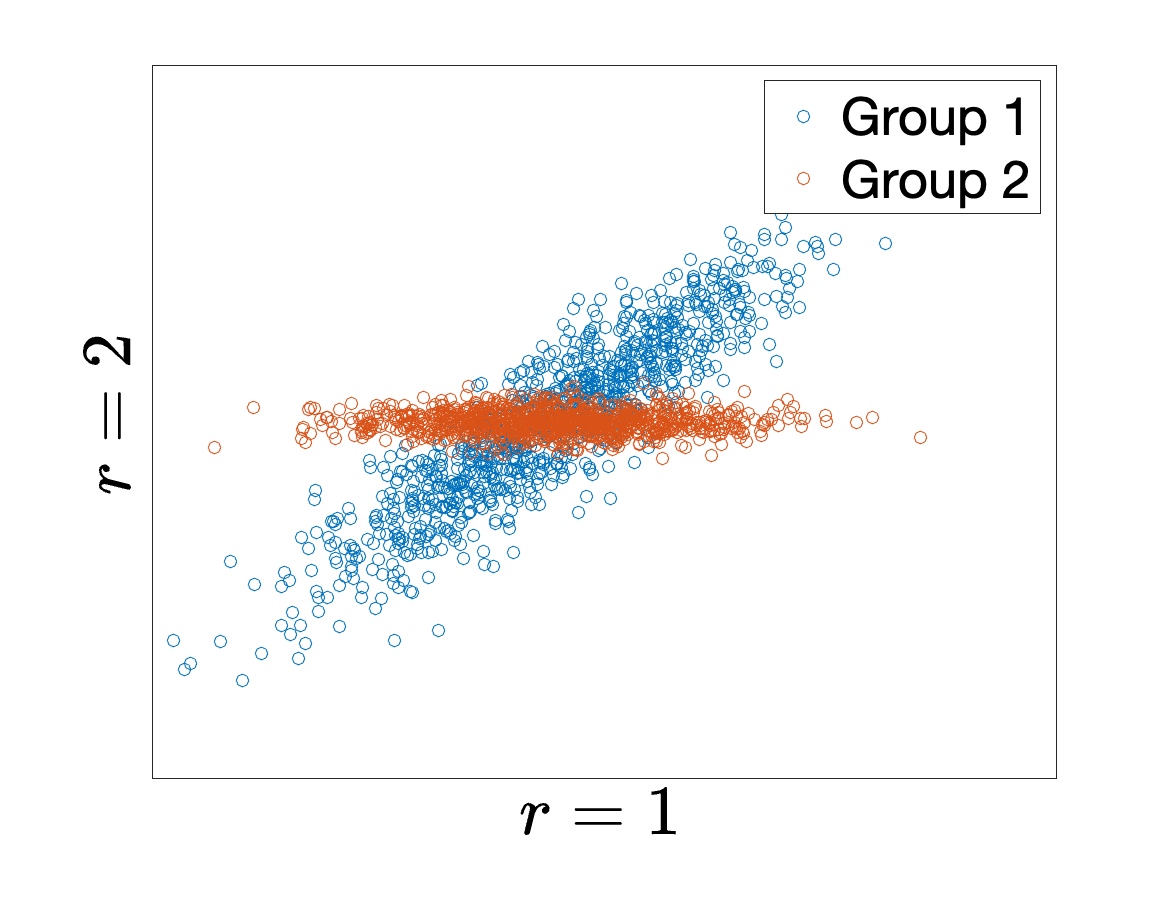}}
\vspace{-0.2cm}
\subcaption{$\m{X}\m{U}$ of MF-CCA}
\end{minipage}
\begin{minipage}{0.28\linewidth}
\centerline{\includegraphics[width=\textwidth]{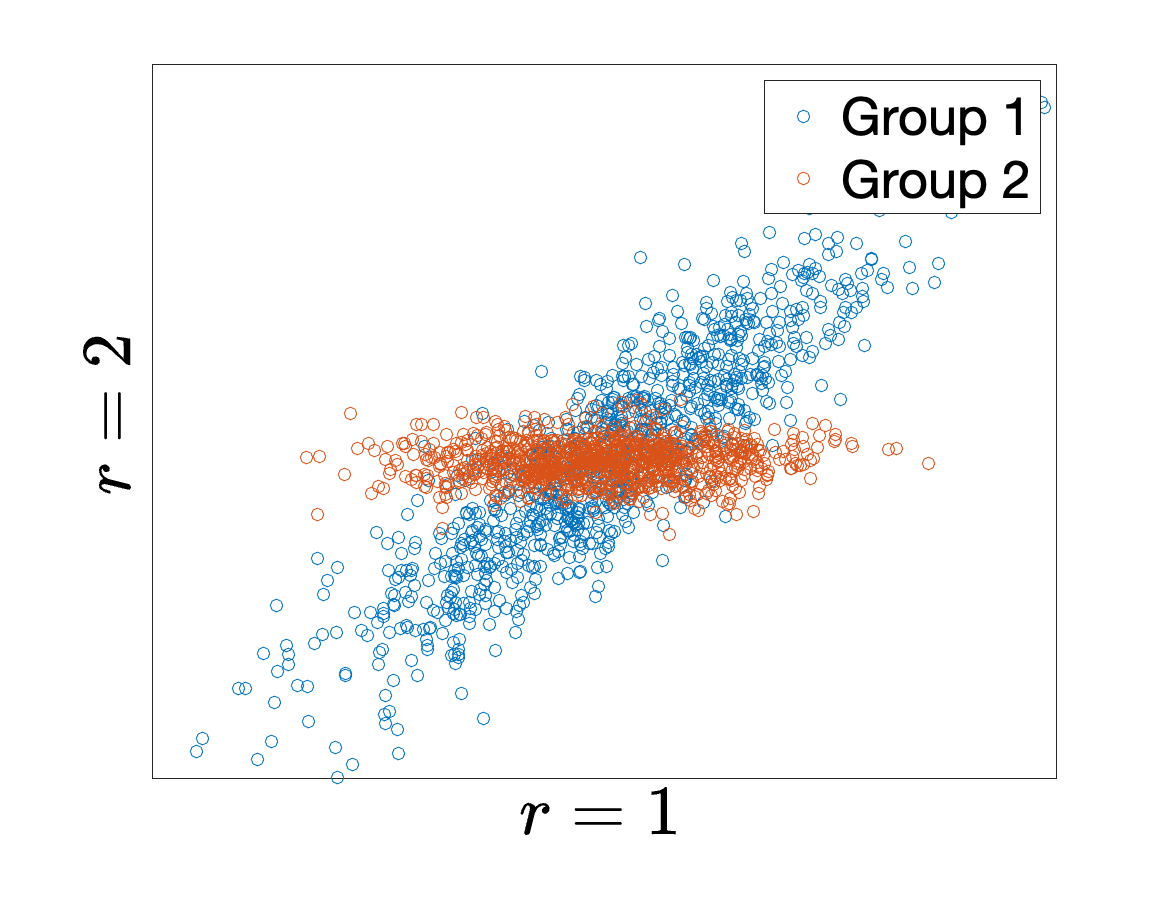}}
\vspace{-0.2cm}
\subcaption{$\m{X}\m{U}$ of SF-CCA}
\end{minipage}
\begin{minipage}{0.28\linewidth}
\centerline{\includegraphics[width=\textwidth]{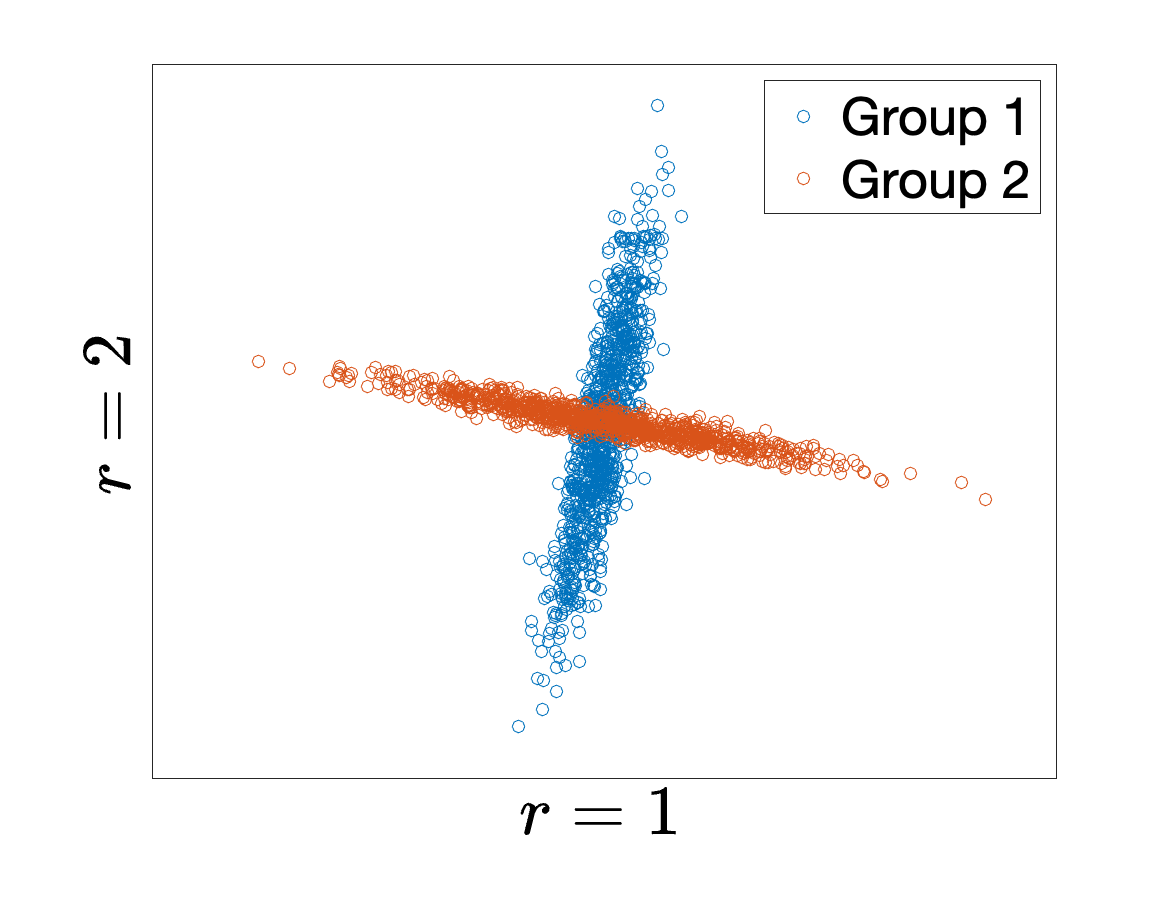}}
\vspace{-0.2cm}
\subcaption{$\m{Y}\m{V}$ of CCA}
\end{minipage}
\begin{minipage}{0.28\linewidth}
\centerline{\includegraphics[width=\textwidth]{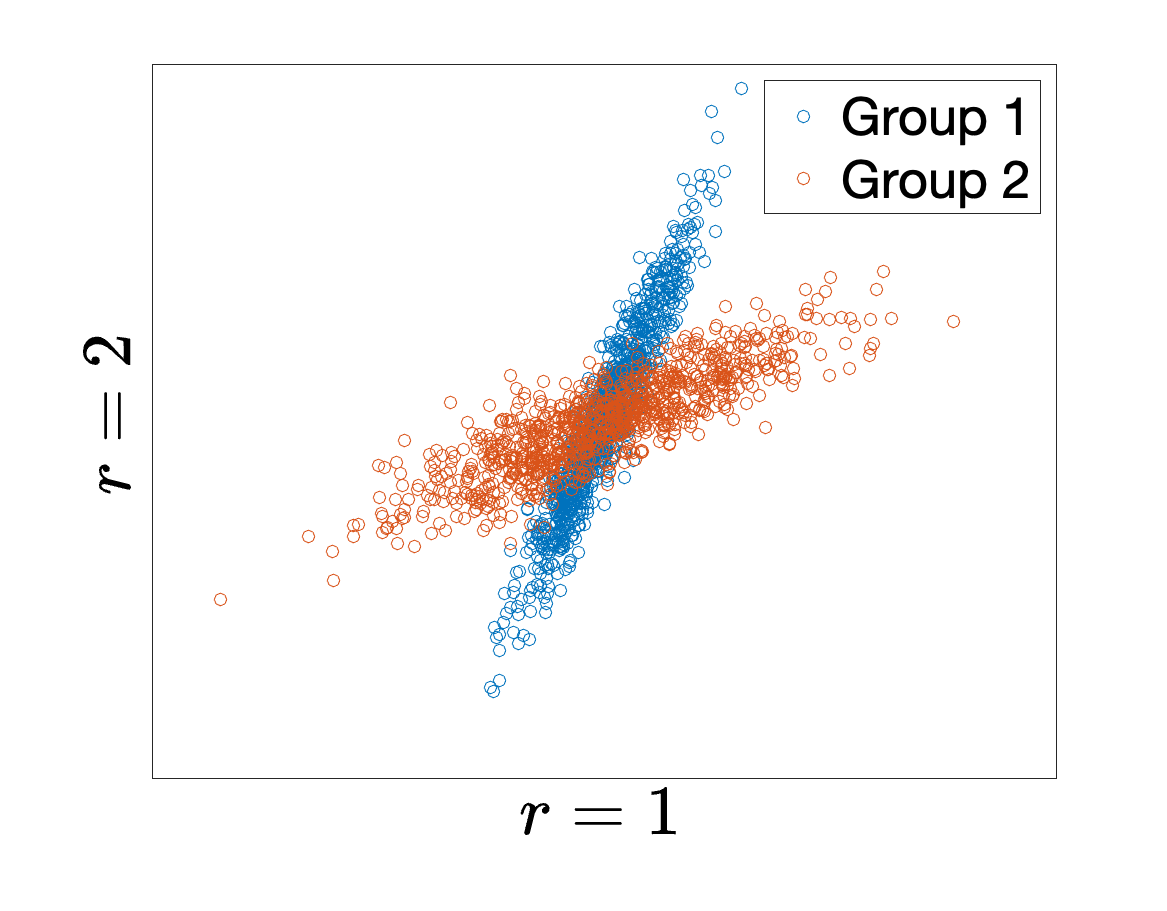}}
\vspace{-0.2cm}
\subcaption{$\m{Y}\m{V}$ of MF-CCA}
\end{minipage}
\begin{minipage}{0.28\linewidth}
\centerline{\includegraphics[width=\textwidth]{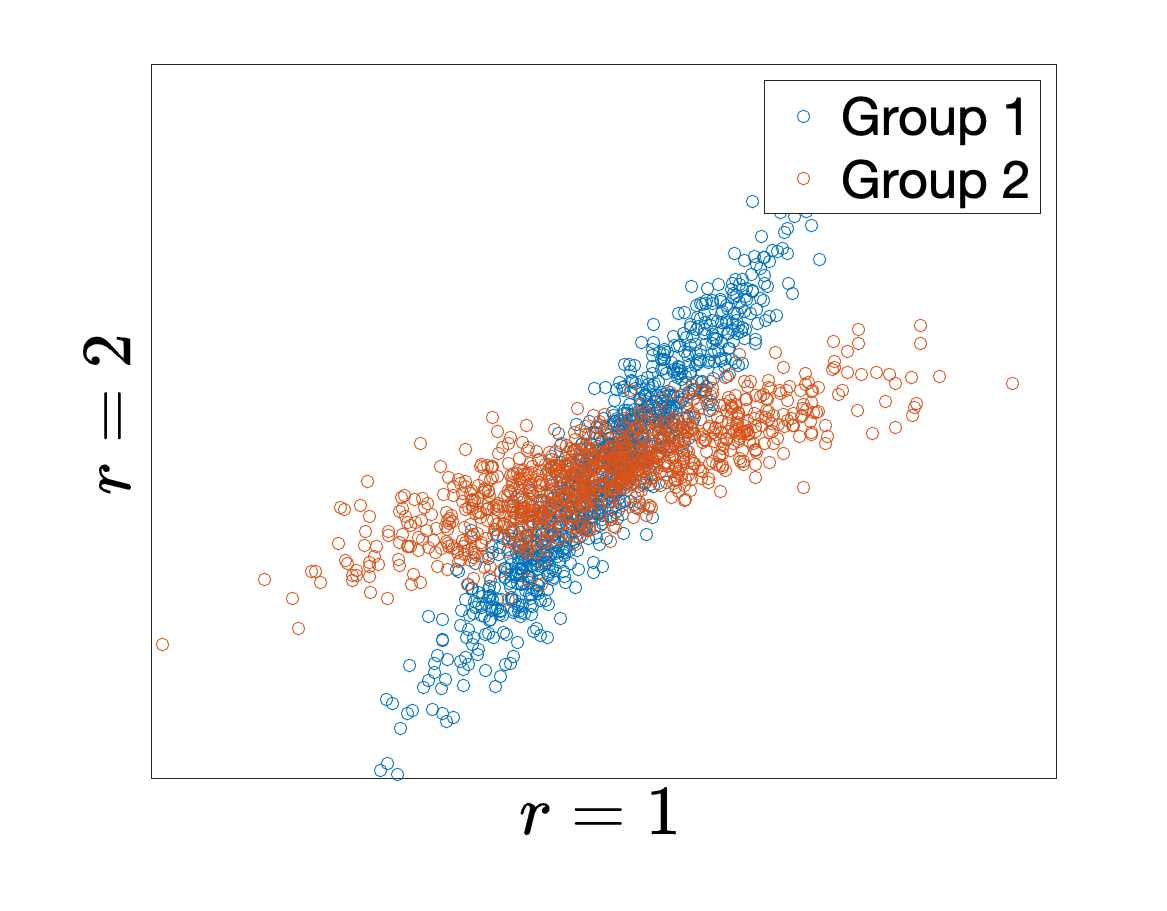}}
\vspace{-0.2cm}
\subcaption{$\m{Y}\m{V}$ of SF-CCA}
\end{minipage}
\caption{Scatter plot of the synthetic data points after projected to the 2-dimensional space. The distributions of the two groups after projection by CCA are orthogonal to each other. Our SF-CCA and MF-CCA can make the distributions of the two groups close to each other.}
\label{fig:2dillustration}
\end{figure}
\begin{table}[t]
\centering
\small
\caption{Mean computation time in seconds (±std) of 10 repeated experiments for $R=5$  on the real dataset and $R=7$ on the synthetic dataset. Experiments are run on  Intel(R) Xeon(R) CPU E5-2660.}
\label{tab:runtime}
\begin{tabular}{@{}cccc@{}}
\toprule
\multicolumn{1}{c|}{\textbf{Dataset}} & \textbf{CCA} & \textbf{MF-CCA} & \textbf{SF-CCA} \\ \midrule
\multicolumn{1}{c|}{Synthetic Data} & \multicolumn{1}{l}{0.0239$\pm$0.0026}& 109.0693$\pm$5.5418 & 29.1387$\pm$2.0828  \\ 
\midrule
\multicolumn{1}{c|}{NHANES} & 0.0483$\pm$0.0059 & 42.3186$\pm$1.9045 & 14.9156$\pm$1.8941 \\ 
\midrule
\multicolumn{1}{c|}{MHAAPS} & 0.0021$\pm$0.0047 & 3.5235$\pm$2.0945 & 0.8238$\pm$0.8155 \\ 
\midrule
\multicolumn{1}{c|}{ADNI} & 0.0039$\pm$0.0032 & 2.7297$\pm$0.5136 & 1.8489$\pm$1.0519 \\ \bottomrule
\end{tabular}
\end{table}

Table \ref{tab:results} shows the quantitative performance of the three models: CCA, MF-CCA, and SF-CCA. They are evaluated based on $\rho_r$, $\Delta_{\max,r}$, and $\Delta_{\textnormal{sum},r}$ defined in \eqref{eqn:measure}  across five experimental sets. Table \ref{tab:runtime} displays the mean runtime of each model. Several key observations emerge from the analysis.
\begin{figure}[t]
\small
\centering
\begin{minipage}{0.42\linewidth}
\centerline{\includegraphics[width=\textwidth]{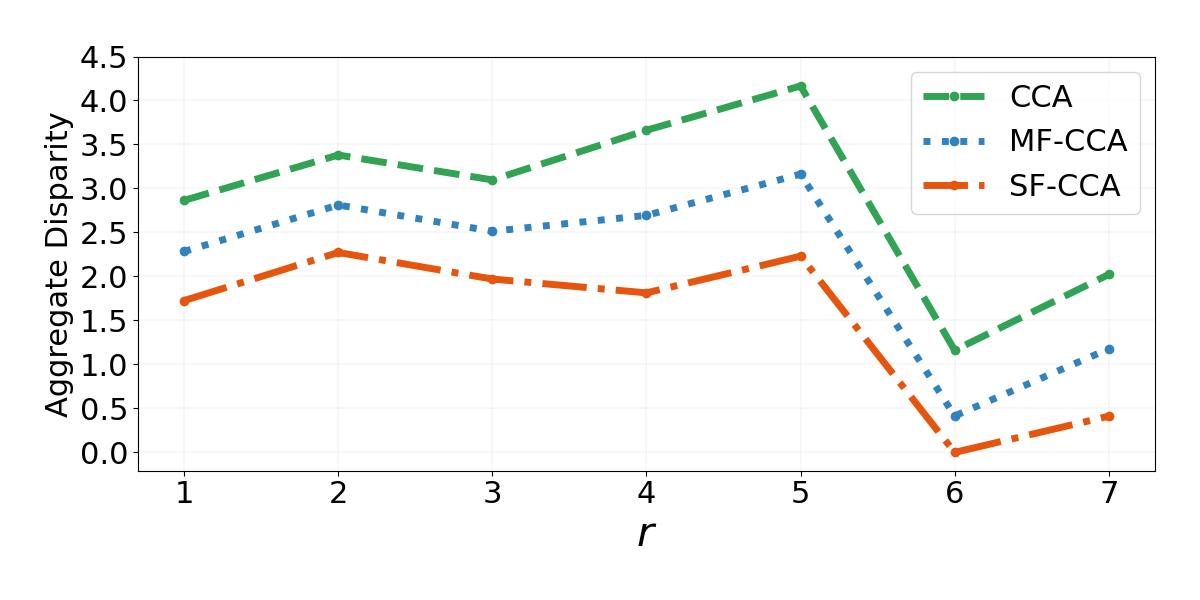}}
\vspace{-0.1cm}
\subcaption{Synthetic Data}
\vspace{0.15cm}
\end{minipage}
\hspace{.05cm}
\begin{minipage}{0.42\linewidth}
\centerline{\includegraphics[width=\textwidth]{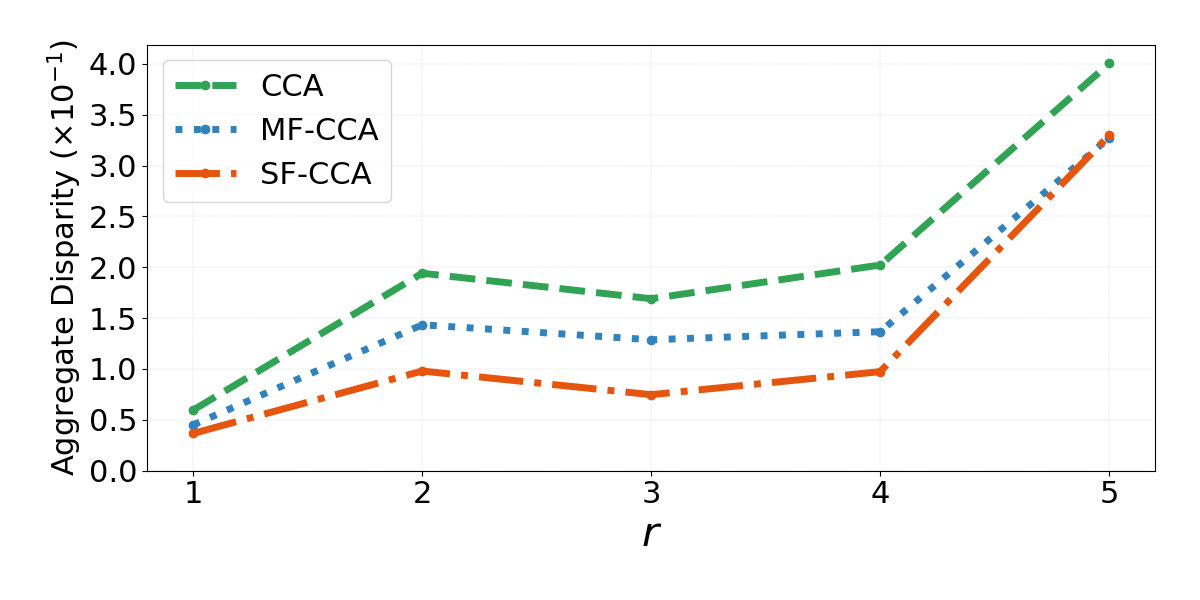}}
\vspace{-0.1cm}
\subcaption{NHANES}
\vspace{0.15cm}
\end{minipage}
\hspace{.05cm}
\begin{minipage}{0.42\linewidth}
\centerline{\includegraphics[width=\textwidth]{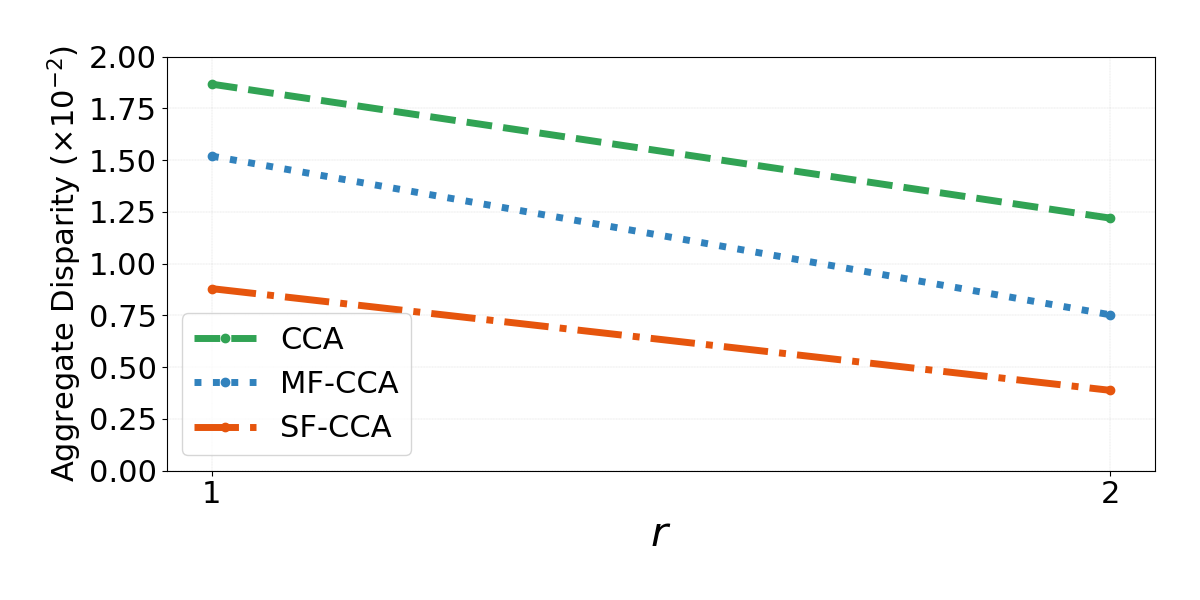}}
\vspace{-0.1cm}
\subcaption{MHAAPS}
\end{minipage}
\hspace{.05cm}
\begin{minipage}{0.42\linewidth}
\centerline{\includegraphics[width=\textwidth]{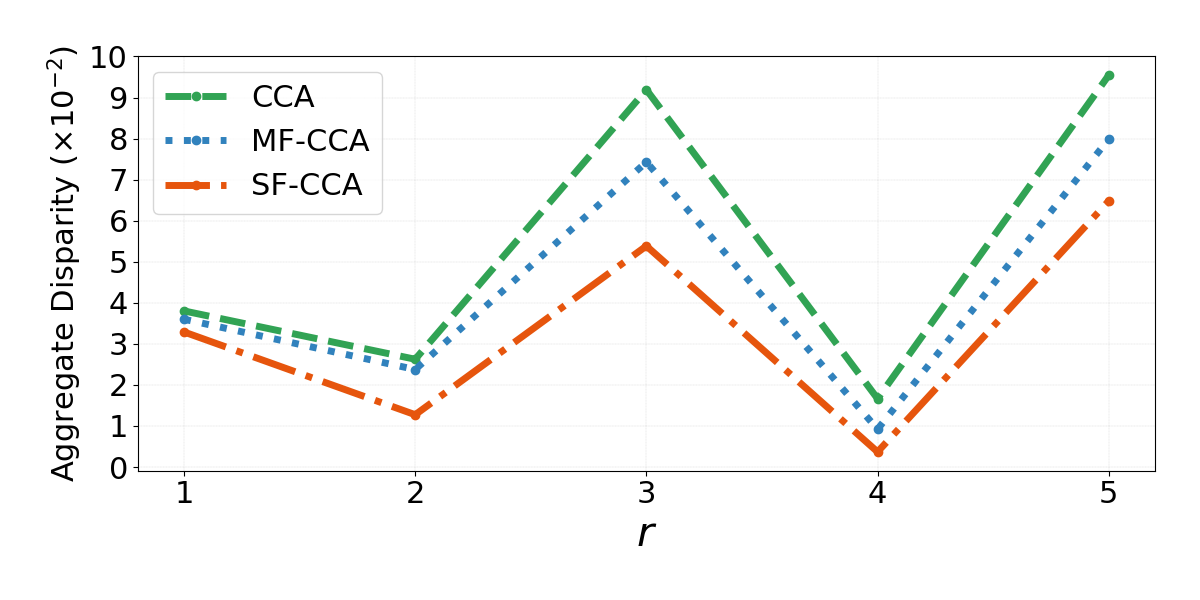}}
\vspace{-0.1cm}
\subcaption{ADNI}
\end{minipage}
\vspace{-0.1cm}
\caption{Aggregate disparity of CCA, MF-CCA, and SF-CCA (results from Table~\ref{tab:results}).} \label{fig:disparity}
\end{figure}
Firstly, MF-CCA and SF-CCA demonstrate substantial improvements in fairness compared to CCA. However, it is important to note that F-CCA, employed in both MF-CCA and SF-CCA, compromises some degree of correlation due to its focus on fairness considerations during computations. Secondly, SF-CCA outperforms MF-CCA in terms of fairness improvement, although it sacrifices correlation. This highlights the effectiveness of the single-objective optimization approach in SF-CCA. Moreover, the datasets consist of varying subgroup quantities (5, 3, 2, and 2) and an imbalanced number of samples in distinct subgroups. F-CCA consistently performs well across these datasets, confirming its inherent scalability. Lastly, although SF-CCA requires more effort to tune hyperparameters, SF-CCA still exhibits a notable advantage in terms of time complexity compared to MF-CCA, demonstrating computational efficiency. 
Disparities among various CCA methods are visually represented in Figure~\ref{fig:disparity}. Notably, the conventional CCA consistently demonstrates the highest disparity error. Conversely, SF-CCA and MF-CCA consistently outperform CCA across all datasets, underscoring their efficacy in promoting fairness within analytical frameworks.

\begin{figure}[t]
\small
\centering
\begin{minipage}{0.42\linewidth}
\centerline{\includegraphics[width=\textwidth]{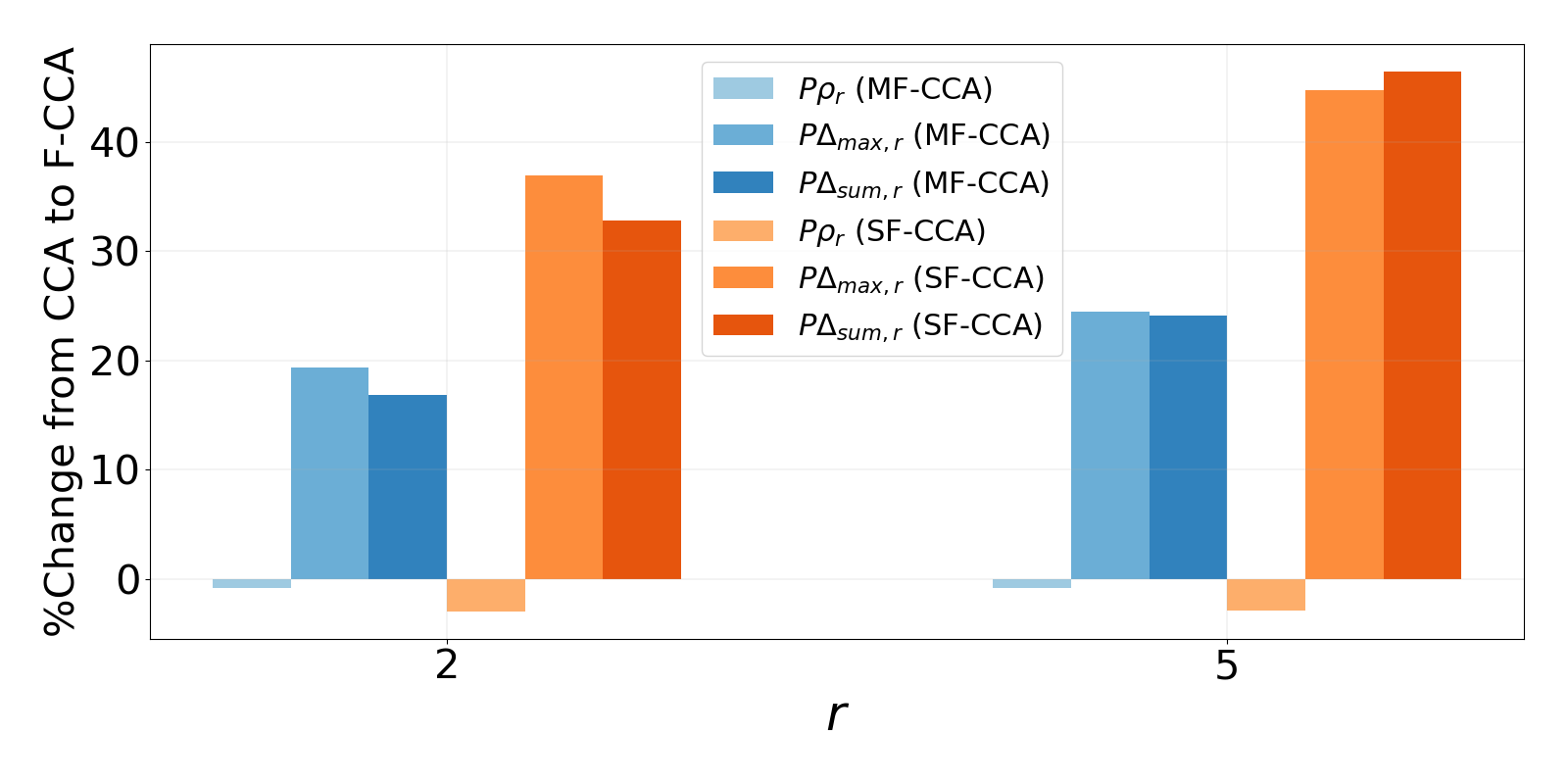}}
\vspace{-0.1cm}
\subcaption{Synthetic Data}
\vspace{0.15cm}
\end{minipage}
\hspace{.05cm}
\begin{minipage}{0.42\linewidth}
\centerline{\includegraphics[width=\textwidth]{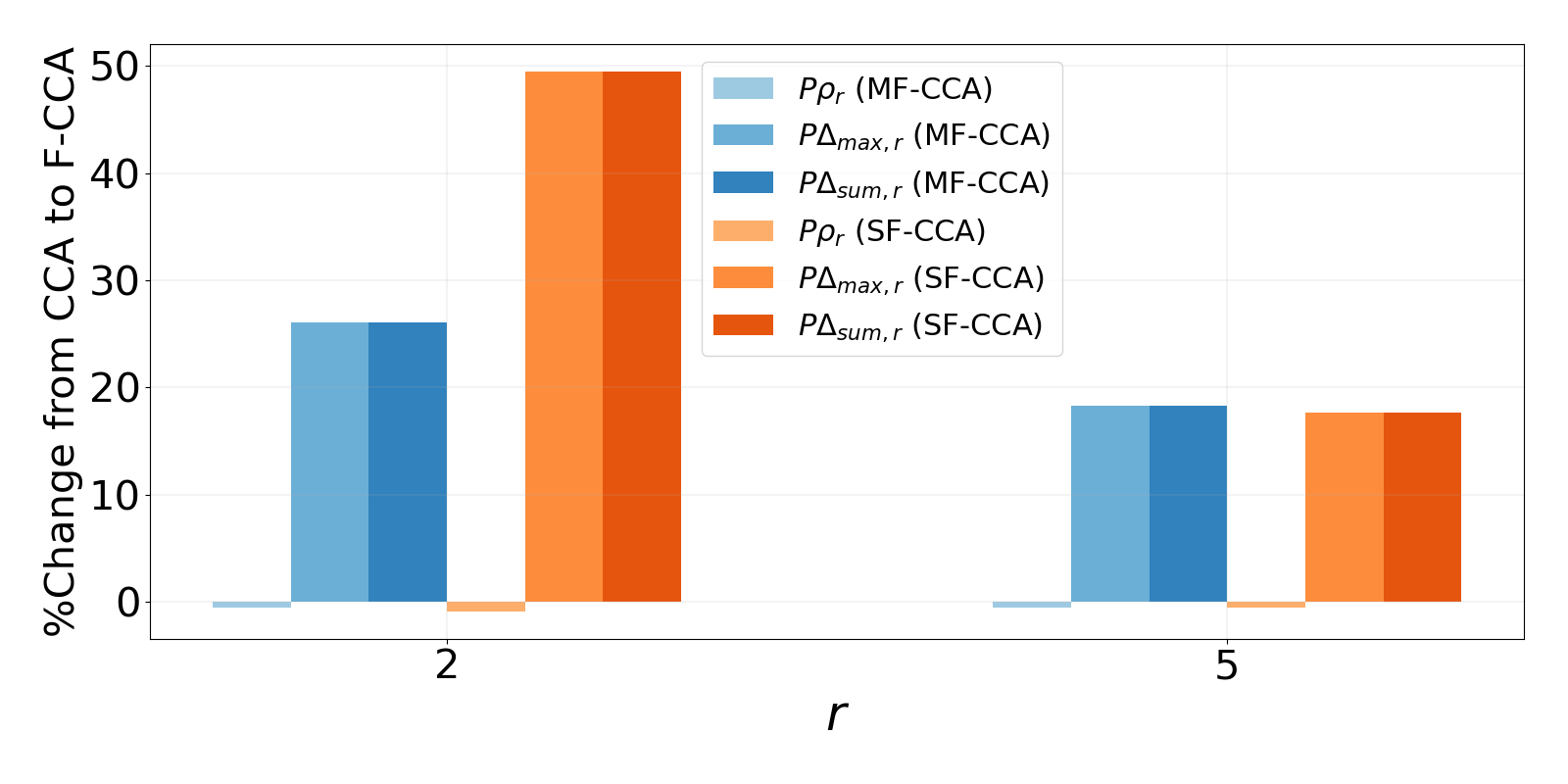}}
\vspace{-0.1cm}
\subcaption{NHANES}
\vspace{0.15cm}
\end{minipage}
\hspace{.05cm}
\begin{minipage}{0.42\linewidth}
\centerline{\includegraphics[width=\textwidth]{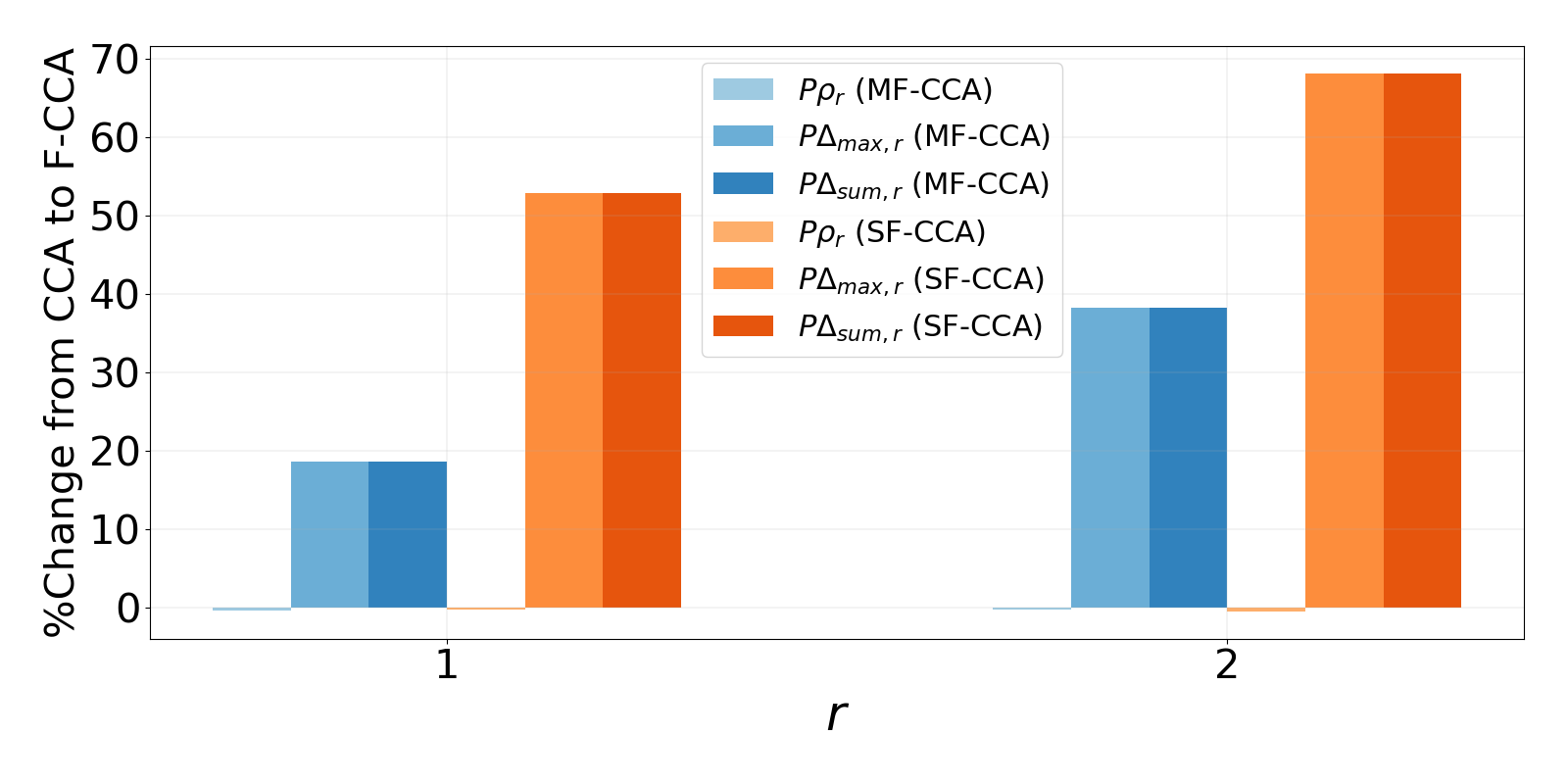}}
\vspace{-0.1cm}
\subcaption{MHAAPS}
\end{minipage}
\hspace{.05cm}
\begin{minipage}{0.42\linewidth}
\centerline{\includegraphics[width=\textwidth]{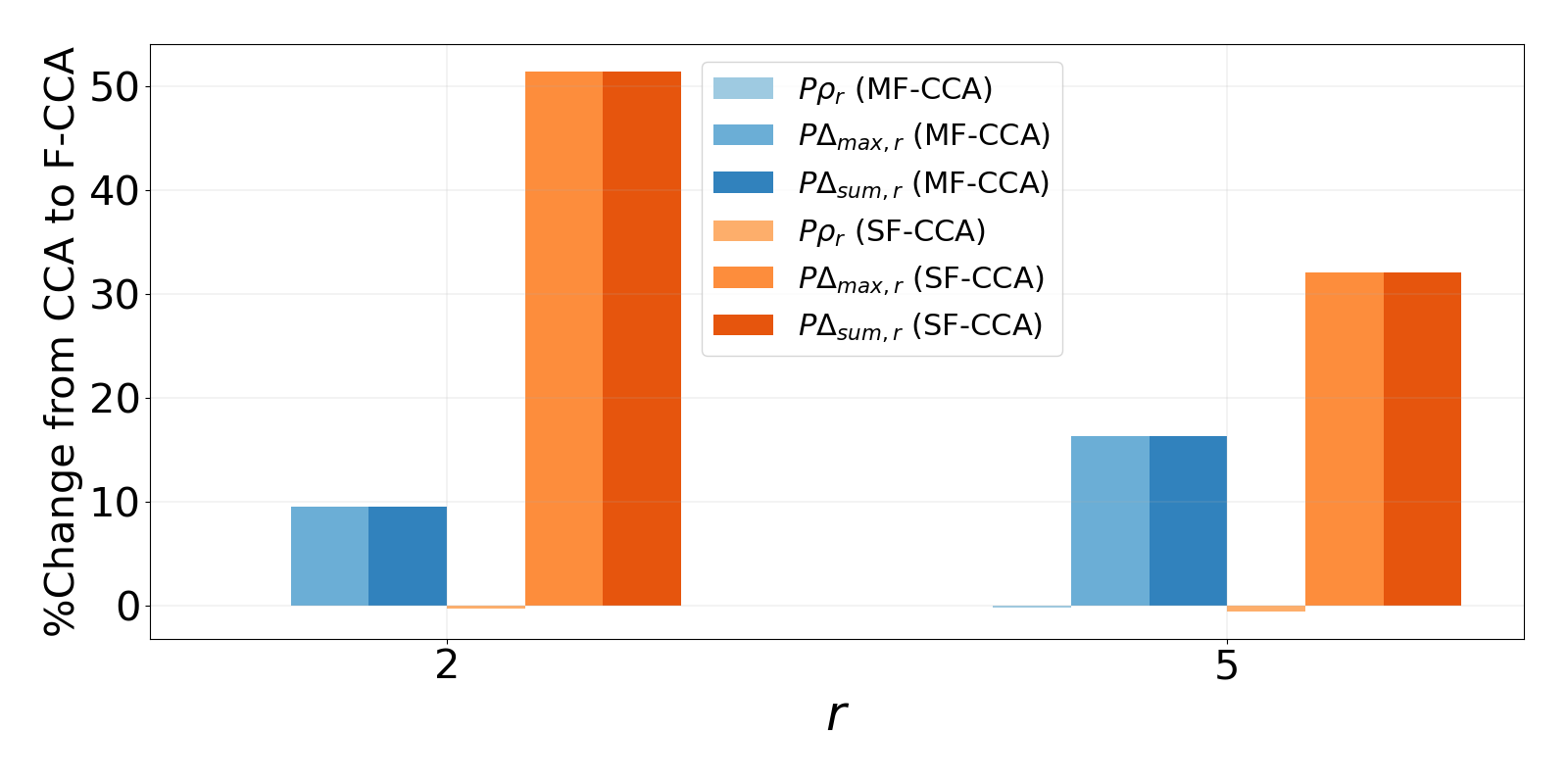}}
\vspace{-0.1cm}
\subcaption{ADNI}
\end{minipage}
\vspace{-0.1cm}
\caption{{Percentage change} from CCA to F-CCA (results from Table~\ref{tab:results}). Each dataset panel shows two cases with projection dimensions ($r$).  $P\rho_r $ is slight, while $P\Delta_{\max,r}$ and $P\Delta_{\textnormal{sum},r}$ changes are substantial, signifying fairness improvement without significant accuracy sacrifice.}
\label{fig:performance shift}
\end{figure}

In Table \ref{tab:results}, we define the \textit{percentage change} of correlation ($\rho_r$), maximum disparity gap ($\Delta_{\max,r}$), and aggregate disparity ($\Delta_{\textnormal{sum},r}$), respectively,  as follows: $P\rho_r :=(\rho_r \text{ of F-CCA}-\rho_r \text{ of CCA})/(\rho_r \text{ of CCA})\times 100$, $P\Delta_{\max,r} := -(\Delta_{\max,r} \text{ of F-CCA}-\Delta_{\max,r} \text{ of CCA})/(\Delta_{\max,r} \text{ of CCA})\times 100$, and  $P\Delta_{\textnormal{sum},r} := -(\Delta_{\textnormal{sum},r} \text{ of F-CCA}-\Delta_{\textnormal{sum},r} \text{ of CCA})/(\Delta_{\textnormal{sum},r} \text{ of CCA})\times 100$. Here, F-CCA is replaced with either MF-CCA or SF-CCA to obtain the percentage change for MF-CCA or SF-CCA.  Figure~\ref{fig:performance shift} illustrates the percentage changes of each dataset.  $P\rho_r $ is slight, while $P\Delta_{\max,r}$ and $P\Delta_{\textnormal{sum},r}$ changes are substantial, signifying fairness improvement without significant accuracy sacrifice.
\section{Conclusion, Limitations, and Future Directions}\label{sec:conc}
We propose F-CCA, a novel framework to mitigate unfairness in CCA. F-CCA aims to rectify the bias of CCA by learning global projection matrices from the entire dataset, concurrently guaranteeing that these matrices generate correlation levels akin to group-specific projection matrices. Experiments show that F-CCA is effective in reducing correlation disparity error without sacrificing much correlation. We discuss potential extensions and future problems stemming from our work.
\begin{itemize}
\item While F-CCA effectively reduces unfairness while maintaining CCA model accuracy, its potential to achieve a minimum achievable disparity correlation remains unexplored. A theoretical exploration of this aspect could provide valuable insights.

\item F-CCA holds promise for extensions to diverse domains, including multiple modalities \cite{zhuang2020technical}, deep CCA \cite{andrew2013deep}, tensor CCA \cite{min2019tensor}, and sparse CCA \cite{Hardoon2011-is}. However, these extensions necessitate novel formulations and in-depth analysis.

\item  Our approach of multi-objective optimization on smooth manifolds may find relevance in other problems, such as fair PCA \cite{samadi2018price}.   Further, bilevel optimization approaches \cite{li2021autobalance,tarzanagh2022fednest,tarzanagh2022online} can be designed on a smooth manifold to learn a single Pareto-efficient solution and provide an automatic trade-off between accuracy and fairness. 

\item  With applications encompassing clustering, classification, and manifold learning,  F-CCA ensures fairness when employing CCA techniques for these downstream tasks. It can also be jointly analyzed with fair clustering \cite{chierichetti2017fair,tarzanagh2021fair,kleindessner2019guarantees} and fair classification \cite{zafar2017fairness,donini2018empirical}.
\end{itemize}
\section{Acknowledgements} 
This work was supported in part by the NIH grants U01 AG066833, U01 AG068057, RF1 AG063481, R01 LM013463, P30 AG073105, and U01 CA274576, and the NSF grant IIS 1837964. The ADNI data were obtained from the Alzheimer's Disease Neuroimaging Initiative database (\url{https://adni.loni.usc.edu}), funded by NIH U01 AG024904.  Moreover, the NHANES data were sourced from the NHANES database (\url{https://www.cdc.gov/nchs/nhanes}).

We appreciate the reviewers' valuable feedback, which significantly improved this paper.
\bibliographystyle{plain}
\bibliography{faircca}
\newpage
\appendix
 \textbf{Roadmap.} The appendix is organized as follows:  In Section~\ref{app:sec:main:lfg}, we present the subproblem solvers for MF-CCA and SF-CCA, along with the proofs for Theorems~\ref{thm:m:nonasym} and \ref{thm:s:nonasym}. In Section~\ref{app:sec:exp}, we provide comprehensive insights into the three real datasets utilized and present supplementary experimental results. Furthermore, Section~\ref{app:exp:detail} outlines the experimental specifics and the procedure for selecting hyperparameters.
\addtocontents{toc}{\protect\setcounter{tocdepth}{3}}
\tableofcontents
\newpage

\section{Addendum to Section~\ref{sec:main:lfg}}\label{app:sec:main:lfg}

\subsection{Preliminaries on Retractions}\label{sec:retract}
%
In reference to Definition~\ref{defn:retraction}, the following options together with  generalized polar decomposition defined in \eqref{eqn:gpd:ret} represent commonly employed retractions\cite{absil2008optimization,bento2012unconstrained,bento2017iteration,edelman1998geometry,ferreira2020iteration,lin2020projection,wen2013feasible}
of a matrix $\g{\xi} \in \mb{R}^{D\times R}$:
\begin{itemize}
\item \textbf{Exponential mapping.} It takes $8DR^2 + \mc{O}(R^3)$ flops and has the closed-form expression:  
\begin{equation*}
R^\m{z}_{\exp}(\g{\xi}) \ = \ \begin{bmatrix} \m{Z} & \m{Q}\end{bmatrix}\exp\left(\begin{bmatrix} -\m{Z}^\top\g{\xi} & -\m{R}^\top \\ \m{R} & 0 \end{bmatrix}\right)\begin{bmatrix} \m{I}_R \\ 0 \end{bmatrix},
\end{equation*}
where $\m{QR} = -(\m{I}_r - \m{Z}\m{Z}^\top)\g{\xi}$ is the unique QR factorization. 
\item \textbf{Polar decomposition.} It takes $3DR^2 + \mc{O}(R^3)$ flops and has the closed-form expression: 
\begin{equation*}
R^\m{z}_{\textnormal{polar}}(\g{\xi}) \ = \ (\m{Z} + \g{\xi})(\m{I}_R + \g{\xi}^\top\g{\xi})^{-1/2}.
\end{equation*} 
\item \textbf{QR decomposition.} It takes $2DR^2 + \mc{O}(R^3)$ flops and has the closed-form expression: 
\begin{equation*}
R^\m{z}_{\textnormal{qr}}(\g{\xi}) \ = \ \textnormal{qr}(\m{Z} + \g{\xi}), 
\end{equation*} 
where $\textnormal{qr}(\m{A})$ is the Q factor of the QR factorization of $\m{A}$.
\item \textbf{Cayley transformation.} It takes $7DR^2 + \mc{O}(R^3)$ flops and has the closed-form expression:
\begin{equation*}
R^\m{Z}_{\textnormal{cayley}}(\g{\xi}) \ = \ \left(\m{I}_R - \frac{1}{2} \m{W}(\g{\xi}) \right)^{-1}\left(\m{I}_R + \frac{1}{2} \m{W}(\g{\xi}) \right)\m{Z},
\end{equation*}
where $\m{W}(\g{\xi}) = (\m{I}_R - \m{Z}\m{Z}^\top/2)\g{\xi} \m{Z}^\top - \m{Z}\g{\xi}^\top(\m{I}_R - \m{Z}\m{Z}^\top/2)$.
\end{itemize}
This work specifically focuses on generalized polar decomposition defined in \eqref{eqn:gpd:ret} for retraction.

\subsection{Subproblem Solver for MF-CCA } 
To solve the optimization problem (\ref{eqn:multi:cca}), we introduce a method for finding the descent direction on the joint manifold, as described in Equation \eqref{eqn:multi:sub}. In this section, we will provide a detailed explanation of the relevant calculations. We expand the optimization problem into two sub-optimization problems, one for updating the canonical weight $\m{U}$ and the other for updating the canonical weight $\m{V}$. Here, we focus on outlining the process of updating $\m{U}$, which is described below.

In the $t$-th iteration, the gradient of the multi-objective function (\ref{eqn:multi:cca}) with respect to $\mathbf{U}_t$ is computed as follows:
\begin{subequations}\label{eqn:grads}
\begin{equation}
    \begin{split}
    \nabla_{\m{U}} f_1(\m{U}_t,\m{V}_t)&=\frac{\partial f_1(\m{U}_t,\m{V}_t)}{\partial\m{U}_t}=-\m{X}^\top\m{Y}\m{V}_t,\\
   \nabla_{\m{U}} f_{j}(\m{U}_t,\m{V}_t)&=\frac{\partial\Delta^{k,s}(\m{U}_t,\m{V}_t)}{\partial\m{U}_t}\\    &= \phi'\left( \mathcal{E}^k\left(\m{U}, \m{V}\right) - \mathcal{E}^s\left(\m{U}, \m{V}\right) \right)\cdot\left(\frac{\partial\mc{E}^k(\m{U}_t,\m{V}_t)}{\partial\m{U}_t}-\frac{\partial\mc{E}^s(\m{U}_t,\m{V}_t)}{\partial\m{U}_t}\right),        
    \end{split}
\end{equation}
for all $M\geq j\geq 2$, $k,s\in[K],k\neq s$, where 
\begin{equation}
    \begin{split}
\frac{\partial\mc{E}^k(\m{U}_t,\m{V}_t)}{\partial\m{U}_t}&=\trace \left({\m{U}_t^{k,\star}}^\top{\m{X}^k}^\top\m{Y}^{k}\m{V}_t^{k,\star}\right)-{\m{X}^k}^\top\m{Y}^k\m{V}_t,\\
\frac{\partial\mc{E}^s(\m{U}_t,\m{V}_t)}{\partial\m{U}_t}&=\trace \left({\m{U}_t^{s,\star}}^\top{\m{X}^s}^\top\m{Y}^{s}\m{V}_t^{s,\star}\right)-{\m{X}^s}^\top\m{Y}^s\m{V}_t.
    \end{split}
\end{equation}
\end{subequations}
Similarly, we can compute the gradient of $f$ w.r.t $\m{V}_t$. 

Lemma \ref{lem:step:well}, inspired by \cite{ferreira2020iteration, bento2012unconstrained},  states that the direction $\m{P}_t$ can be expressed as a linear combination of gradients \(\{\nabla f_1(\m{U}_t,\m{V}_t),\nabla f_2(\m{U}_t,\m{V}_t),\dots,\nabla f_M(\m{U}_t,\m{V}_t)\}\). 

\begin{lem}\label{lem:step:well}
The unconstrained optimization problem in  \eqref{eqn:multi:sub} has a unique solution. Moreover, $\m{P}_t$ is the solution of the problem in \eqref{eqn:multi:sub} if only if there exist $\mu_{i} \geq 0, i \in [M]$, such that
\begin{equation}\label{eqn:multi:sub:analy}
\m{P}_t=-\sum_{i \in [M]} \mu_{i} \nabla f_{i}(\m{U}_t, \m{V}_t), \quad \sum_{i \in [M]} \mu_{i}=1, \quad \mu_i\geq0,~\textnormal{for}~1\leq i\leq M.    
\end{equation}
\end{lem}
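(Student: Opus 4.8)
The plan is to treat the subproblem \eqref{eqn:multi:sub} as a strongly convex program over the linear subspace $T := \mc{T}_{\m{U}}\mc{U} \times \mc{T}_{\m{V}}\mc{V}$ and to extract the claimed representation from its Karush--Kuhn--Tucker (KKT) system. First I would settle existence and uniqueness: the map $\m{P}\mapsto \max_{i\in[M]}\trace(\m{P}^\top\nabla f_i(\m{U}_t,\m{V}_t))$ is a finite maximum of linear functionals, hence convex and continuous, while $\tfrac12\|\m{P}\|_F^2$ is $1$-strongly convex; their sum $Q_t$ is therefore $1$-strongly convex and coercive on the closed convex set $T$. Minimizing a strongly convex coercive function over a nonempty closed convex set yields a unique minimizer $\m{P}_t$, which establishes the first assertion.

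For the representation, I would rewrite \eqref{eqn:multi:sub} in epigraph form by introducing a scalar $\tau$:
\begin{equation*}
\min_{\m{P}\in T,\ \tau\in\mb{R}}\ \tau + \tfrac12\|\m{P}\|_F^2 \quad\text{s.t.}\quad \trace\!\left(\m{P}^\top\nabla f_i(\m{U}_t,\m{V}_t)\right) \le \tau,\quad i\in[M].
\end{equation*}
This is a convex quadratic program with affine constraints, and Slater's condition holds trivially (any $\m{P}\in T$ with $\tau$ large is strictly feasible), so the KKT conditions are both necessary and sufficient for optimality. Attaching multipliers $\mu_i\ge 0$ to the $M$ inequality constraints, stationarity in $\tau$ gives $\sum_{i\in[M]}\mu_i = 1$, while stationarity in $\m{P}$ \emph{relative to the subspace} $T$ gives $\Pi_T\!\big(\m{P} + \sum_{i}\mu_i\nabla f_i(\m{U}_t,\m{V}_t)\big)=\m{0}$, where $\Pi_T$ denotes orthogonal projection onto $T$. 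Reading off $\m{P}_t = -\Pi_T\big(\sum_i \mu_i \nabla f_i\big)$ and using the convention that $\nabla f_i(\m{U}_t,\m{V}_t)$ already denotes the Riemannian gradient (the projection of the Euclidean gradient onto $T$, consistent with $\m{P}_t\in T$), the projection acts as the identity and we recover $\m{P}_t = -\sum_{i}\mu_i\nabla f_i(\m{U}_t,\m{V}_t)$ with $\mu\succeq\m{0}$ and $\sum_i\mu_i=1$, as claimed. Conversely, any $\m{P}$ admitting such a representation satisfies the (sufficient) KKT system for a suitable $\tau$ and complementary multipliers, hence equals the unique minimizer $\m{P}_t$.

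The main obstacle I anticipate is handling the subspace constraint cleanly. Because $\m{P}$ is restricted to the tangent product $T$ rather than the whole ambient space, stationarity must be expressed through $\Pi_T$, and one must verify that only the tangential component of each $\nabla f_i$ enters the inner product $\trace(\m{P}^\top\nabla f_i)$; reconciling the projected stationarity condition with the unprojected form stated in \eqref{eqn:multi:sub:analy} is where care is needed, and it is resolved by the Riemannian-gradient convention above. A secondary point is confirming that Slater's condition together with the differentiability of the epigraph reformulation legitimately license the KKT equivalence; an equivalent route avoiding these is to use the identity $\max_{i\in[M]} a_i = \max\{\sum_i\mu_i a_i : \mu\succeq\m{0},\ \sum_i\mu_i=1\}$, apply Sion's minimax theorem to exchange the min over $T$ with the max over the compact simplex, solve the resulting inner quadratic in closed form as $\m{P}=-\sum_i\mu_i\nabla f_i$, and identify the optimal multipliers as the minimizers of $\mu\mapsto \tfrac12\|\sum_i\mu_i\nabla f_i\|_F^2$ over the simplex.
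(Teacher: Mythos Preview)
Your proposal is correct and takes a route closely related to, but more explicit than, the paper's. The paper establishes uniqueness by the same strong-convexity argument you give, and then handles the representation in two lines by invoking the optimality condition $\m{0}\in\partial Q_t(\m{P}_t)$ together with the standard formula for the subdifferential of a pointwise maximum of linear functions, $\partial\big(\max_i\langle\cdot,a_i\rangle\big)(\m{P})=\mathrm{conv}\{a_i:i\text{ active at }\m{P}\}$. Your epigraph-plus-KKT derivation unpacks exactly this subdifferential formula into its Lagrangian constituents, and your minimax alternative is the dual picture of the same computation. What your approach buys is more care about the tangent-space constraint: you make explicit the role of $\Pi_T$ and the need to read $\nabla f_i$ as Riemannian gradients, a point the paper leaves implicit. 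The paper's subdifferential route is terser and avoids the auxiliary variable $\tau$.

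One caution that applies equally to both arguments: the ``if'' direction as stated in the lemma is not literally true without complementary slackness. A representation $\m{P}=-\sum_i\mu_i\nabla f_i$ with $\mu$ in the simplex does not by itself force $\m{P}$ to be the minimizer unless each positive $\mu_i$ corresponds to an index achieving the maximum at $\m{P}$ (take $M=2$, $\nabla f_1=(1,0)$, $\nabla f_2=(0,1)$, $\mu=(1,0)$). Your sentence ``any $\m{P}$ admitting such a representation satisfies the (sufficient) KKT system for a suitable $\tau$ and complementary multipliers'' asserts precisely what would need to be checked. The paper's proof has the same gap; since only the ``only if'' direction is used downstream (in Lemma~\ref{lem:ineq.aux}), this does not affect the rest of the paper.
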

\begin{proof}
Recall that for each $ (\m{U}, \m{V}) \in \mc{U} \times \mc{V}$, we have $\m{P}=(\m{P}^\m{u}, \m{P}^{\m{v}})$ with $\m{P}^\m{u} \in \mc{T}_{\m{U}}\mc{U}$ and 
$\m{P}^\m{v}\in \mc{T}_{\m{V}}\mc{V}$. It can be easily seen that the first term of $Q(\m{P})$ (defined in \eqref{eqn:multi:sub}) is a summation of the maximum of linear functions in the linear space $\mc{T}_{\m{U}} \mc{U} \times \mc{T}_{\m{V}} \mc{V}$. Hence, it is convex, which implies that  $Q(\m{P})$ is strongly convex. Thus, the solution is unique.

From the convexity of the function, it is well known that $\m{P}_t$ is the solution of \eqref{eqn:multi:sub} if only if
$$
\m{0} \in \partial Q(\m{P})\m{P}_t,
$$
or equivalently,
$$
-\m{P}_t \in \partial\left(\max _{i \in [M]} \trace\left(\m{P}^\top\nabla f_{i}(\m{U}_t, \m{V}_t)\right)\right)\m{P}_t.
$$
Therefore, the second statement follows from the formula for the subdifferential of the maximum of convex functions.
\end{proof}


The (sub)-gradient formula \eqref{eqn:grads} can be substituted into Equation \eqref{eqn:multi:sub}, and subsequently, the \textit{fminimax} function from the MATLAB optimization toolbox can be applied to solve for the descent direction $\m{P}_t$. We found that the Goal Attainment Method  \cite{gembicki1975approach} gives a more stable and accurate solution for our subproblem \footnote{\url{https://ww2.mathworks.cn/help/optim/ug/multiobjective-optimization-algorithms.html}}. Given $\m{P}_t^{\m{u}}$, $\m{U}_t$ can be updated by:
\begin{equation}\label{eqn:retra:u:multi}
    \m{U}_{t+1}=R^{\m{u}}(\eta_t \m{P}_t^{\m{u}}),
\end{equation}
where \(\eta_t\) is the adaptive stepsize and \(R^{\m{u}}\) is the retraction operator onto \(\mc{U}=\{\m{U}\in\mathbb{R}^{D_x\times R}|\m{U}^\top\m{X}^\top\m{X}\m{U}=\m{I}_R\}\). 

Various retraction options are discussed in Sections \ref{sec:prel} and \ref{sec:retract}. Specifically, the generalized polar decomposition method defined in \eqref{eqn:gpd:ret} is employed in our numerical experiments described in Section \ref{sec:exp}.  The symmetric nature of the update process for canonical weights $\m{U}$ and $\m{V}$ in CCA allows us to employ identical procedures for updating $\m{V}$. 
By substituting $\m{X}$ with $\m{Y}$ and $\m{U}$ with $\m{V}$ in the aforementioned steps, we obtain the process for updating $\m{V}$. In each iteration, \(\m{U}\) and \(\m{V}\) are updated once in sequence.

\subsection{Subproblem Solver for SF-CCA }\label{app:sec:sfcca} 

To solve the optimization problem (\ref{eqn:faircca:slevel}), the method of finding the descent direction is introduced in Equation (\ref{eqn:single:sub}). Detailed calculations will be explained below. We expand it into two sub-optimization problems to update the canonical weight $\m{U}$ and the canonical weight $\m{V}$, respectively. First, to update the canonical weight $\m{U}_t$ in iteration $t$, we can take gradient of $f(\m{U}_t,\m{V}_t)$ with respect $\m{U}_t$ as follows:
\begin{subequations}\label{eqn:s:grads}
\begin{equation}
    \begin{split}
    &\nabla f\left(\m{U}_t,\m{V}_t\right)=\frac{\partial f(\m{U}_t,\m{V}_t)}{\partial\m{U}_t}\\
    &=-\m{X}^\top\m{Y}\m{V}_t+\lambda\frac{\partial\Delta(\m{U}_t,\m{V}_t)}{\partial\m{U}_t}\\
    &=-\m{X}^\top\m{Y}\m{V}_t\\
    &+\lambda\left(\sum_{k,s\in[K],k\neq s}\phi'\left( \mathcal{E}^k\left(\m{U}, \m{V}\right) - \mathcal{E}^s\left(\m{U}, \m{V}\right)\right)\cdot \left(\frac{\partial\mc{E}^k(\m{U}_t,\m{V}_t)}{\partial\m{U}_t}-\frac{\partial\mc{E}^s(\m{U}_t,\m{V}_t)}{\partial\m{U}_t}\right)\right),
    \end{split}
\end{equation}
where
\begin{equation}
    \begin{split}
\frac{\partial\mc{E}^k(\m{U}_t,\m{V}_t)}{\partial\m{U}_t}&=\trace \left({\m{U}_t^{k,\star}}^\top{\m{X}^k}^\top\m{Y}^{k}\m{V}_t^{k,\star}\right)-{\m{X}^k}^\top\m{Y}^k\m{V}_t,\\
\frac{\partial\mc{E}^s(\m{U}_t,\m{V}_t)}{\partial\m{U}_t}&=\trace \left({\m{U}_t^{s,\star}}^\top{\m{X}^s}^\top\m{Y}^{s}\m{V}_t^{s,\star}\right)-{\m{X}^s}^\top\m{Y}^s\m{V}_t.
    \end{split}
\end{equation}
\end{subequations}
Given the subgradients \eqref{eqn:s:grads} and the single-objective subproblem \eqref{eqn:single:sub}, the direction $\m{G}_t=(\m{G}_t^{\m{u}},\m{G}_t^{\m{v}})$  can be effectively computed using subgradient-based methods such as the \textit{fmincon} function from the MATLAB optimization toolbox.  Given $\m{G}_t^{\m{u}}$, the update for $\m{U}_t$ can be obtained as follows:
\begin{equation}\label{eqn:retra:u:signle}
    \m{U}_{t+1}=R^{\m{u}}(\eta_t\m{G}_t^{\m{u}}).
\end{equation}
Here, \(\eta_t\) is the adaptive stepsize and \(R^{\m{u}}\) is the retraction operator onto \(\mc{U}=\{\m{U}\in\mathbb{R}^{D_x\times R}|\m{U}^\top\m{X}^\top\m{X}\m{U}=\m{I}_R\}\). 

In our numerical experiments, we utilize polar decomposition as the retraction operator; see  \eqref{eqn:gpd:ret}. Similarly, the canonical weight $\m{V}$ can be updated using the same procedure. 

\subsection{Auxiliary Lemmas}
\begin{lem}\label{lem:quad:up}
Suppose Assumption~\ref{assum} holds. For any $(\m{U}_t , \m{V}_t)\in \mc{U} \times \mc{V}$, let $\m{P}_t$ be the solution of \eqref{eqn:multi:sub}. Then, for any $\eta_t\geq0$, we have the following:
\[
\m{F}(\m{U}_{t+1},\m{V}_{t+1})\preceq \m{F}(\m{U}_t, \m{V}_t) + \eta_t  \g{\nabla}_t
+ \eta^2_t\frac{L_F}{2}\left\| \m{P}_t\right\|^{2}_{\textnormal{F}}\m{1}_M, \qquad   \m{P}_t\in \mc{T}_{\m{U}} \mc{U} \times \mc{T}_{\m{V}} \mc{V},
\]
\end{lem}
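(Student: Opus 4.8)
The plan is to derive the stated quadratic bound as a direct scaling corollary of Assumption~\ref{assum}: I would apply that assumption not to the raw descent direction $\m{P}_t$ but to the scaled tangent vector $\eta_t \m{P}_t$, and then use the homogeneity of the linear and quadratic terms in $\eta_t$ to read off the claim.

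First I would note that the joint tangent space $\mc{T}_{\m{U}_t}\mc{U} \times \mc{T}_{\m{V}_t}\mc{V}$ is a linear subspace, so that $\eta_t \m{P}_t$ again lies in it for every $\eta_t \geq 0$; moreover, by Steps 5--6 of MF-CCA the update satisfies $(\m{U}_{t+1},\m{V}_{t+1}) = R^{\m{z}}(\eta_t \m{P}_t)$ with $\m{Z} = (\m{U}_t,\m{V}_t)$. Applying Assumption~\ref{assum} at $\m{Z}$ with the tangent vector $\eta_t \m{P}_t$ then gives
\[
\m{F}(R^{\m{z}}(\eta_t \m{P}_t)) \preceq \m{F}(\m{Z}) + \g{\nabla} + \frac{L_F}{2}\|\eta_t \m{P}_t\|^{2}_{\textnormal{F}}\m{1}_M,
\]
where $\nabla_i = \langle \nabla f_i(\m{Z}), \eta_t \m{P}_t\rangle$. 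Linearity of the inner product in its second argument yields $\nabla_i = \eta_t \langle \nabla f_i(\m{Z}), \m{P}_t\rangle$, i.e. $\g{\nabla} = \eta_t \g{\nabla}_t$, while positive homogeneity of the Frobenius norm gives $\|\eta_t \m{P}_t\|^{2}_{\textnormal{F}} = \eta_t^2 \|\m{P}_t\|^{2}_{\textnormal{F}}$. Substituting these identities and replacing $R^{\m{z}}(\eta_t \m{P}_t)$ by $(\m{U}_{t+1},\m{V}_{t+1})$ reproduces the asserted inequality.

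Since the argument is purely a rescaling of Assumption~\ref{assum}, there is no real analytic difficulty; the only subtlety I would flag is verifying that the pair $(\m{Z}, \eta_t \m{P}_t)$ belongs to the subset $\mc{S}$ on which the constant $L_F$ is guaranteed. I would handle this by taking $\mc{S}$ to contain the algorithmic trajectory $\{((\m{U}_t,\m{V}_t), \eta_t \m{P}_t)\}_t$ over the admissible stepsize range, which is the implicit setting in which Assumption~\ref{assum} is posed; with that convention the inequality holds verbatim for all $\eta_t \geq 0$ of interest.
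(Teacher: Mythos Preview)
Your proposal is correct and matches the paper's approach exactly: the paper's proof consists of the single sentence ``The proof is a straightforward consequence of Assumption~\ref{assum},'' and you have simply unpacked what that means by applying the assumption to the scaled tangent vector $\eta_t\m{P}_t$ and reading off the $\eta_t$ and $\eta_t^2$ factors.
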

where $\nabla_{it}:=\left \langle \nabla f_i(\m{U}_t, \m{V}_t), \m{P}_t \right\rangle$ and $\g{\nabla}_t:=[\nabla_{1t}, \cdots, \nabla_{Mt}]^\top \in \mb{R}^M$. 
\begin{proof}
The proof is a straightforward consequence of Assumption~\ref{assum}.
\end{proof}
\begin{lem}\label{lem:ineq.aux}
For any  $ (\m{U}_t, \m{V}_t) \in \mc{U} \times \mc{V}$, let $\m{P}_t$  be the solution of   \eqref{eqn:multi:sub}. Then, 
\begin{equation} \label{eq:AuxIneq}
\max _{i \in [M]} \trace\left(\m{P}^\top_t\nabla f_{i}(\m{U}_t, \m{V}_t)\right)=-\left\|\m{P}_t\right\|^2_{\textnormal{F}}.
\end{equation}
Hence,  $\trace\left(\m{P}^\top_t\nabla f_{i}(\m{U}_t, \m{V}_t)\right) \preceq   - \left\|\m{P}_t\right\|^2_{\textnormal{F}}.$
Further, $\m{P}_t$ is critical Pareto point  of $\m{F}$ if, and only if, $\left\|\m{P}_t\right\|_{\textnormal{F}}=0$.
\end{lem}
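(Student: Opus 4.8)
The plan is to leverage the explicit characterization of $\m{P}_t$ supplied by Lemma~\ref{lem:step:well}, namely that $\m{P}_t=-\sum_{i\in[M]}\mu_i\nabla f_i(\m{U}_t,\m{V}_t)$ for multipliers $\mu_i\ge 0$ with $\sum_{i}\mu_i=1$, together with the complementary-slackness structure inherited from the subdifferential of the $\max$ term. First I would set $a:=\max_{i\in[M]}\trace\!\left(\m{P}_t^\top\nabla f_i(\m{U}_t,\m{V}_t)\right)$ and record the active set $I:=\{i:\trace(\m{P}_t^\top\nabla f_i(\m{U}_t,\m{V}_t))=a\}$. The optimality condition $\m{0}\in\partial Q_t(\m{P}_t)$, via the same subdifferential formula for a pointwise maximum of linear functions already used in the proof of Lemma~\ref{lem:step:well}, forces $\mu_i>0$ only for $i\in I$; that is, the convex weights are supported on the active indices.

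With this in hand the first identity follows from a one-line inner-product computation. Substituting the representation of $\m{P}_t$ into $\|\m{P}_t\|_{\textnormal{F}}^2=\langle \m{P}_t,\m{P}_t\rangle$ gives $\|\m{P}_t\|_{\textnormal{F}}^2=-\sum_{i}\mu_i\,\trace(\m{P}_t^\top\nabla f_i(\m{U}_t,\m{V}_t))$, and since every index carrying positive weight is active, each such directional derivative equals $a$; using $\sum_i\mu_i=1$ then yields $\|\m{P}_t\|_{\textnormal{F}}^2=-a$, i.e. $a=-\|\m{P}_t\|_{\textnormal{F}}^2$, which is \eqref{eq:AuxIneq}. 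The componentwise bound $\trace(\m{P}_t^\top\nabla f_i(\m{U}_t,\m{V}_t))\preceq-\|\m{P}_t\|_{\textnormal{F}}^2$ is then immediate, since by definition each entry is at most the maximum $a$.

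For the final equivalence I would argue both directions by contraposition against the definition $\mathrm{Im}(\nabla\m{F}(\m{U}_t,\m{V}_t))\cap(-\mb{R}^M_{++})=\emptyset$ of a critical Pareto point. If $\|\m{P}_t\|_{\textnormal{F}}\neq 0$, the inequality just established gives $\trace(\m{P}_t^\top\nabla f_i(\m{U}_t,\m{V}_t))\le-\|\m{P}_t\|_{\textnormal{F}}^2<0$ for every $i$, so the Jacobian image contains a strictly negative vector and $(\m{U}_t,\m{V}_t)$ is not critical Pareto. Conversely, if $(\m{U}_t,\m{V}_t)$ is not critical Pareto there is a tangent direction $\m{P}$ with all directional derivatives strictly negative; evaluating $Q_t$ along the ray $s\m{P}$ gives $Q_t(s\m{P})=s\max_i\trace(\m{P}^\top\nabla f_i(\m{U}_t,\m{V}_t))+\tfrac12 s^2\|\m{P}\|_{\textnormal{F}}^2$, which is strictly negative for small $s>0$. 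Hence the optimal value satisfies $Q_t(\m{P}_t)<Q_t(\m{0})=0$, forcing $\m{P}_t\neq\m{0}$. Taking contrapositives yields the claimed equivalence.

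The main obstacle is the complementary-slackness step: the collapse of the inner product to exactly $-a$ (rather than merely $\ge -a$) hinges on the multipliers $\mu_i$ being supported on the active set, which must be extracted carefully from the subdifferential calculus of the maximum. An alternative that sidesteps explicit tracking of the active set is to first establish the Fliege--Svaiter dual identity $Q_t(\m{P}_t)=-\tfrac12\|\m{P}_t\|_{\textnormal{F}}^2$ and combine it with $Q_t(\m{P}_t)=a+\tfrac12\|\m{P}_t\|_{\textnormal{F}}^2$ to read off $a=-\|\m{P}_t\|_{\textnormal{F}}^2$ directly.
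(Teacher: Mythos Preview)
Your proposal is correct and takes essentially the same route as the paper: both invoke the convex-combination representation of $\m{P}_t$ from Lemma~\ref{lem:step:well} together with the subdifferential (complementary-slackness) structure of the pointwise maximum to obtain \eqref{eq:AuxIneq}, after which the componentwise bound is immediate. For the final equivalence, the paper argues one direction exactly as you do and defers the converse to \cite[Lemma~4.2]{bento2012unconstrained}, whereas you supply a short self-contained argument via $Q_t(s\m{P})<0$ for small $s>0$; this is a minor elaboration rather than a different approach.
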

\begin{proof}
It follows from \eqref{eqn:multi:sub} that
$$
-\left\|\m{P}_t\right\|^2_{\textnormal{F}}=\trace\Big(\m{P}_t^\top \sum_{i \in [M]} \mu_{i} \nabla f_{i}(\m{U}_t, \m{V}_t\Big) =\sum_{i \in [M]} \mu_{i}\trace\Big(\m{P}_t^\top  \nabla f_{i}(\m{U}_t, \m{V}_t)\Big).
$$
Hence, by the second equality in   \eqref{eqn:multi:sub:analy}, it is easy to verify that \eqref{eq:AuxIneq} holds. The second statement  follows  by using  the definitions of $\textnormal{trace}(\m{P}_t^\top  \nabla F((\m{U}_t, \m{V}_t))$. We proceed with the proof of the third statement of the lemma. Assuming that $\m{P}_t$ is a critical Pareto, it follows from the definition that there exists $i\in [M]$ such that 
$\trace(\m{P}_t^\top  \nabla f_{i}(\m{U}_t, \m{V}_t)\geq 0$. Then, by the first part of the lemma, we have $\left\|\m{P}_t\right\|=0$. The converse (only if) follows from the application of \cite[Lemma 4.2]{bento2012unconstrained}. 
\end{proof}

The following result provides an estimate for the decrease of a function $\m{F}$, along $\m{P}$. This result is crucial in determining the iteration-complexity bounds for the gradient method on a general Riemannian manifold.

\begin{lem}[Descent Property of MF-CCA]\label{lem:multi:descent}
Suppose Assumption~\ref{assum} holds. For any $(\m{U}_t , \m{V}_t)\in \mc{U} \times \mc{V}$, let $\m{P}_t$ be the solution of  \eqref{eqn:multi:sub}. Then, for any $\eta_t\geq0$, we have
$$
\m{F}(\m{U}_{t+1},\m{V}_{t+1})\preceq \m{F}(\m{U}_t, \m{V}_t)+\left(\frac{L_F \eta^2_t}{2}-\eta_t\right)\left\|\m{P}_t\right\|^2_{\textnormal{F}}\m{1}_M. 
$$   
\end{lem}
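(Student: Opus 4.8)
The plan is to combine the quadratic upper bound from Lemma~\ref{lem:quad:up} with the sharp characterization of the descent direction from Lemma~\ref{lem:ineq.aux}. The starting point is Lemma~\ref{lem:quad:up}, which under Assumption~\ref{assum} gives, for any $\eta_t \geq 0$,
\[
\m{F}(\m{U}_{t+1},\m{V}_{t+1})\preceq \m{F}(\m{U}_t, \m{V}_t) + \eta_t \g{\nabla}_t + \eta_t^2\frac{L_F}{2}\left\|\m{P}_t\right\|^2_{\textnormal{F}}\m{1}_M,
\]
where the $i$-th entry of $\g{\nabla}_t$ is $\nabla_{it}=\left\langle \nabla f_i(\m{U}_t, \m{V}_t), \m{P}_t\right\rangle = \trace(\m{P}_t^\top \nabla f_i(\m{U}_t, \m{V}_t))$. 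The only remaining task is to bound the linear term $\eta_t \g{\nabla}_t$ from above by $-\eta_t \|\m{P}_t\|^2_{\textnormal{F}} \m{1}_M$ in the componentwise order $\preceq$.

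First I would invoke the second assertion of Lemma~\ref{lem:ineq.aux}, namely that $\trace(\m{P}_t^\top \nabla f_i(\m{U}_t, \m{V}_t)) \leq -\|\m{P}_t\|^2_{\textnormal{F}}$ for every $i \in [M]$; this follows at once from \eqref{eq:AuxIneq}, since the maximum over $i$ already equals $-\|\m{P}_t\|^2_{\textnormal{F}}$, so each individual entry is no larger. In vector form, $\g{\nabla}_t \preceq -\|\m{P}_t\|^2_{\textnormal{F}} \m{1}_M$. Because $\eta_t \geq 0$, multiplying by $\eta_t$ preserves this partial order, and hence $\eta_t \g{\nabla}_t \preceq -\eta_t \|\m{P}_t\|^2_{\textnormal{F}} \m{1}_M$.

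Finally I would substitute this estimate into the inequality of Lemma~\ref{lem:quad:up}, using transitivity of $\preceq$ together with the fact that adding the common vector $\eta_t^2 (L_F/2)\|\m{P}_t\|^2_{\textnormal{F}} \m{1}_M$ to both sides respects the order. Collecting the two coefficients of $\|\m{P}_t\|^2_{\textnormal{F}} \m{1}_M$ then yields
\[
\m{F}(\m{U}_{t+1},\m{V}_{t+1})\preceq \m{F}(\m{U}_t, \m{V}_t)+\left(\frac{L_F \eta^2_t}{2}-\eta_t\right)\left\|\m{P}_t\right\|^2_{\textnormal{F}}\m{1}_M,
\]
which is precisely the claimed descent property.

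I do not anticipate a genuine obstacle, as the argument is a direct chaining of two already-established lemmas. The one point requiring mild care is that every inequality here is a vector inequality in the order $\preceq$, i.e.\ holds entrywise across the $M$ objectives; I would therefore be careful to apply the bound $\g{\nabla}_t \preceq -\|\m{P}_t\|^2_{\textnormal{F}} \m{1}_M$ componentwise and to note explicitly that scaling by the nonnegative scalar $\eta_t$ does not reverse the order. This componentwise bookkeeping is the only step where the multi-objective setting differs from the familiar single-objective descent lemma.
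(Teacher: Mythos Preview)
Your proposal is correct and matches the paper's approach exactly: the paper's proof simply states that the result is ``a straight combination of Lemmas~\ref{lem:quad:up} and \ref{lem:ineq.aux},'' and you have spelled out that combination in full detail, including the careful componentwise bookkeeping for the vector order $\preceq$.
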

\begin{proof}
The proof is a straight combination of  Lemmas~\ref{lem:quad:up} and \ref{lem:ineq.aux}.  
\end{proof}

\subsection{Proof of Theorem~\ref{thm:m:nonasym}}
\begin{proof}
It follows from Lemma~\ref{lem:multi:descent} that 
    $$ 
    \left(\frac{L_F \eta^2_t}{2}-\eta_t\right)\left\|\m{P}_t\right\|^{2}_{\textnormal{F}} \m{1}_M\preceq \m{F}(\m{U}_t,\m{V}_t)-\m{F}(\m{U}_{t+1},\m{V}_{t+1}),
    $$ for all $ t=0, 1,  \ldots, T-1$. 

Setting  $\eta_t=\eta$, and summing both sides of this inequality  for $t=0, 1, \ldots, T-1$, we get
$$
\left(\frac{L_F \eta^2}{2}-\eta\right)\sum_{t=0}^{T-1}\left\|\m{P}_t\right\|^{2}_{\textnormal{F}} \m{1}_M\preceq \m{F}(\m{U}_0,\m{V}_0)-\m{F}(\m{U}_T,\m{V}_{T}).
$$
Thus, by the definition of $i_*$, we obtain 
 $$
\left(\frac{L_F \eta^2}{2}-\eta\right)  T\min\left\{\| \m{P}_{t}\|^2_{\textnormal{F}}:~ t=0, 1,\ldots, T-1\right\}\leq  f_{i_*}(\m{U}_0,\m{V}_0)-f_{i_*}^*,
 $$
which together with  $ \eta \leq \frac{1}{L_F}$  implies 
\begin{equation*}
    \min \left\{\left\|\m{P}_{t}\right\|_{\textnormal{F}}:~ t=0,\ldots, T-1 \right\}\leq \frac{2}{\eta}\left[\frac{ f_{i_*}(\m{U}_0,\m{V}_0)-f_{ i_*}^*}{T}\right]^{\frac{1}{2}}.
\end{equation*}
\end{proof}
\begin{rem}
As a consequence, if we consider a tolerance level of $\epsilon$, we can bound the number of iterations required by the gradient method to obtain $\m{P}_T$ such that $\|\m{P}_{T}\|_{\textnormal{F}}\leq \epsilon$. This bound can be expressed as $\mathcal{O}\left((f_{i_*}(\m{U}_0,\m{V}_0)-f_{ i_*}^*)/\epsilon^2\right)$, where $f_{i_*}(\m{U}_0,\m{V}_0)-f_{ i_*}^*=\min \left\{ f_i(\m{U}_0,\m{V}_0)-f_{i}^*:~ i \in [M] \right\}$. In other words, the number of iterations needed by the gradient method to achieve a solution $(\m{U}_T, \m{V}_T)$ with $\|\m{P}_T\|_{\textnormal{F}}\leq \epsilon$ is proportional to the difference between the initial objective function value and the optimal objective function value, divided by the square of the tolerance level $\epsilon$. The "big O" notation, $\mathcal{O}$, indicates that the bound is an upper bound, providing an estimate of the worst-case behavior of the algorithm. This result showcases the convergence rate of the gradient method and provides a useful guideline for choosing the tolerance level and estimating the computational resources required to achieve the desired accuracy.
\end{rem}
\subsection{Proof of Theorem~\ref{thm:s:nonasym}}
\begin{proof}
This proof is a slightly modified version of the proof of Theorem~\ref{thm:m:nonasym}. Using Assumption \ref{assum:sing} and a single-objetive variant of Lemma~\ref{lem:multi:descent}, we obtain
\begin{align*}
& \quad f\left( \m{U}_{t+1},\m{V}_{t+1}\right) - f\left( \m{U}_{t},\m{V}_{t}\right)\\
& \leq\left\langle \nabla f\left( \m{U}_{t},\m{V}_{t} \right), \left((\m{U}_{t+1},\m{V}_{t+1})- (\m{U}_{t},\m{V}_{t})\right) \right\rangle+\frac{L_f}{2}\left\Vert  (\m{U}_{t+1},\m{V}_{t+1})- (\m{U}_{t},\m{V}_{t}) \right\Vert^{2}_{\textnormal{F}}\\
 &=-\eta\left\Vert  \nabla f\left( \m{U}_{t},\m{V}_{t} \right)\right\Vert ^{2}_{\textnormal{F}}+\frac{L_f\eta^{2}}{2}\left\Vert  \nabla f\left( \m{U}_{t},\m{V}_{t} \right)\right\Vert^{2}_{\textnormal{F}}\\
 &=-\eta\left(1-\frac{L_f\eta}{2}\right)\left\Vert \nabla f\left( \m{U}_{t},\m{V}_{t} \right)\right\Vert ^{2}_{\textnormal{F}}.
\end{align*}
Summing both sides of this inequality  for $t=0, 1, \ldots, T-1$, we get
$$
\left(\frac{L_f \eta^2}{2}-\eta\right)\sum_{t=0}^{T-1}\left\|\m{G}_t\right\|^{2}_{\textnormal{F}} \leq f(\m{U}_0,\m{V}_0)-f(\m{U}_T,\m{V}_{T}).
$$
Hence, using the fixed stepsize  $ \eta_t=\eta \leq \frac{1}{L_f}$, we get 
\begin{equation*}
    \min \left\{\|\m{G}_{t}\|_{\textnormal{F}}:~ t=0,\ldots, T-1 \right\}\leq \frac{2}{\eta}\left[\frac{ f(\m{U}_0,\m{V}_0)-f^*}{T}\right]^{\frac{1}{2}}.
\end{equation*}
\end{proof}

\begin{rem}As a result, given a tolerance $\epsilon$, the number of iterations required by the gradient method to obtain $(\m{U}_T, \m{V}_T)$ such that  $\|\m{G}_T\|_{\textnormal{F}} \leq \epsilon$ is bounded by $\mc{O}\left((f(\m{U}_0,\m{V}_0)-f^*) / \epsilon^2\right)$.
\end{rem}

\section{Addendum to Section~\ref{sec:exp}}\label{app:sec:exp}
\begin{table}[t]
\small
\centering
\caption{Statistic summary of real datasets. In the ADNI database, AV45 is used twice in two different datasets but with a different number of features. In AV45 vs. AV1451, we use the shared ROI (region of interest), thus, the two views have the same number of features. While in AV45 vs Cognitive Score, AV45 contains the total number of features.}
\label{tab:dataset}
\begin{tabular}{@{}c|c|c|c|c|c@{}}
\toprule
Datbase & Dataset & \begin{tabular}[c]{@{}c@{}}Number of \\ Features\end{tabular} & \begin{tabular}[c]{@{}c@{}}Number of \\ Samples\end{tabular} & \begin{tabular}[c]{@{}c@{}}Sensitive \\ Attribute\end{tabular} & \begin{tabular}[c]{@{}c@{}}Group \\ Distribution\end{tabular} \\ \midrule
\multirow{4}{*}{NHANES} & \multirow{2}{*}{Phenotypic Measures} & \multirow{2}{*}{96} & \multirow{4}{*}{8843} & \multirow{4}{*}{Education} & \multirow{4}{*}{\begin{tabular}[c]{@{}c@{}}\textless High School: 2495\\ = High School: 2203\\ \textgreater High School: 4145\end{tabular}} \\
 &  &  &  &  &  \\
 & \multirow{2}{*}{Environmental Measures} & \multirow{2}{*}{55} &  &  &  \\
 &  &  &  &  &  \\ \midrule
\multirow{4}{*}{NHANES} & \multirow{2}{*}{Phenotypic Measures} & \multirow{2}{*}{96} & \multirow{4}{*}{8136} & \multirow{4}{*}{Race} & \multirow{4}{*}{\begin{tabular}[c]{@{}c@{}}White: 4576\\ Black: 1793\\ Mexican: 1767\end{tabular}} \\
 &  &  &  &  &  \\
 & \multirow{2}{*}{Environmental Measures} & \multirow{2}{*}{55} &  &  &  \\
 &  &  &  &  &  \\ \midrule
\multirow{2}{*}{MHAAPS} & Psychological Performance & 3 & \multirow{2}{*}{600} & \multirow{2}{*}{Sex} & \multirow{2}{*}{\begin{tabular}[c]{@{}c@{}}Male: 273\\ Female: 327\end{tabular}} \\
 & Academic Performance & 4 &  &  &  \\ \midrule
\multirow{2}{*}{ADNI} & AV45 & 52 & \multirow{2}{*}{496} & \multirow{2}{*}{Sex} & \multirow{2}{*}{\begin{tabular}[c]{@{}c@{}}Male: 241\\ Female: 255\end{tabular}} \\
 & AV1451 & 52 &  &  &  \\ \midrule
\multirow{2}{*}{ADNI} & AV45 & 68 & \multirow{2}{*}{785} & \multirow{2}{*}{Sex} & \multirow{2}{*}{\begin{tabular}[c]{@{}c@{}}Male: 431\\ Female: 354\end{tabular}} \\
 & Cognitive Score & 46 &  &  &  \\ \bottomrule
\end{tabular}
\end{table}

\begin{figure}[t]
\small
\centering

\begin{minipage}{0.47\linewidth}
\centerline{\includegraphics[width=\textwidth]{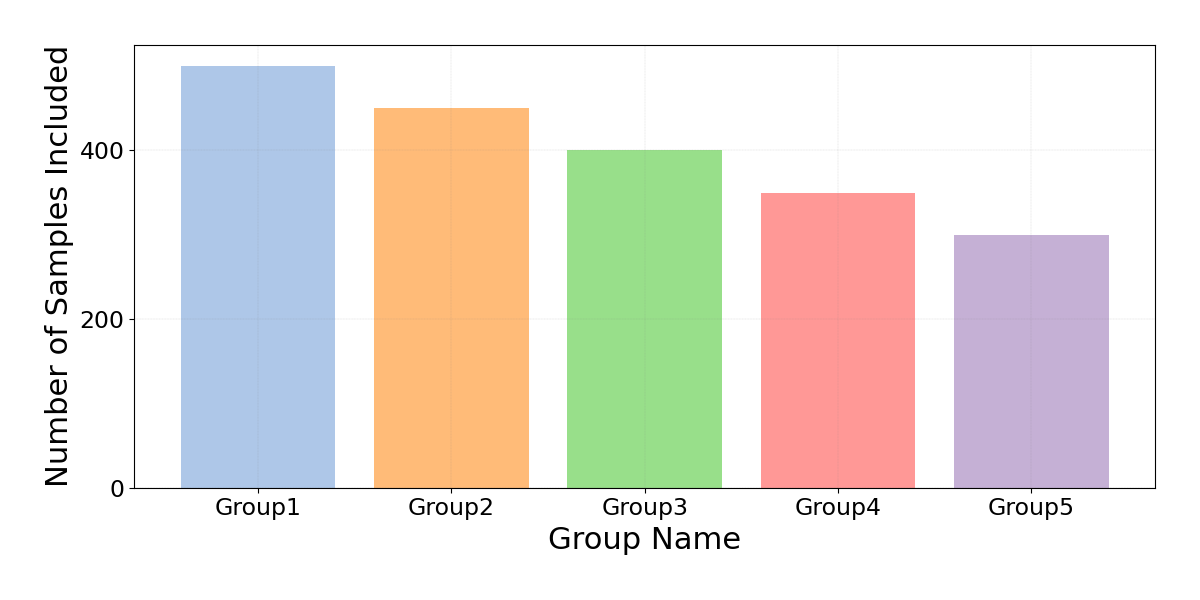}}
\subcaption{Synthetic Data}
\end{minipage}
\begin{minipage}{0.47\linewidth}
\centerline{\includegraphics[width=\textwidth]{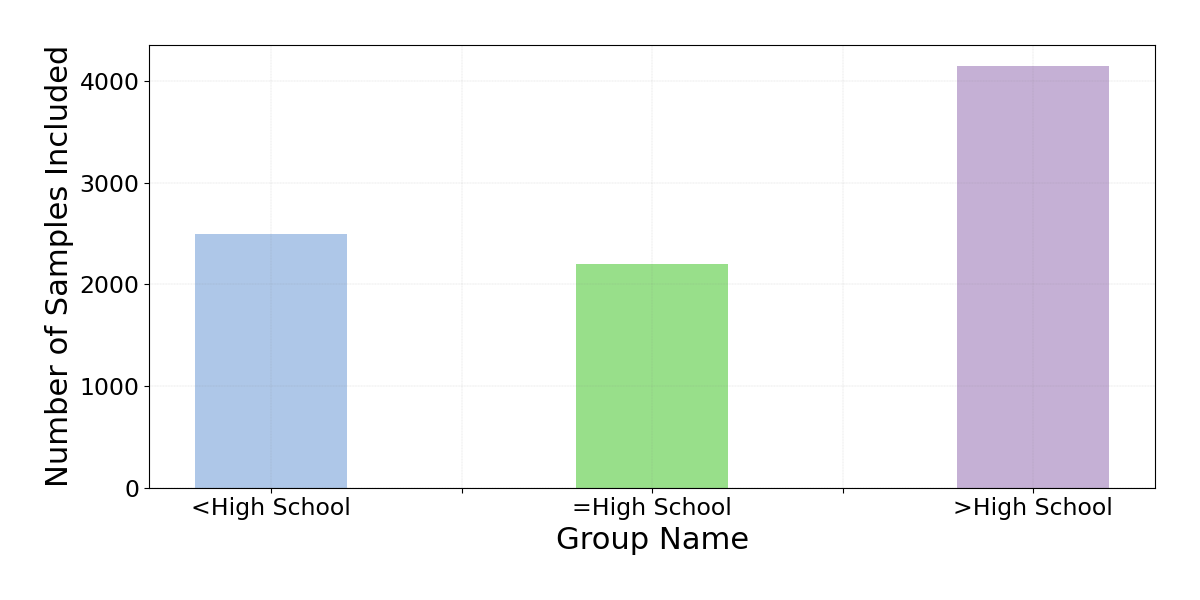}}
\subcaption{NHANES (Education)}
\end{minipage}
\begin{minipage}{0.47\linewidth}
\centerline{\includegraphics[width=\textwidth]{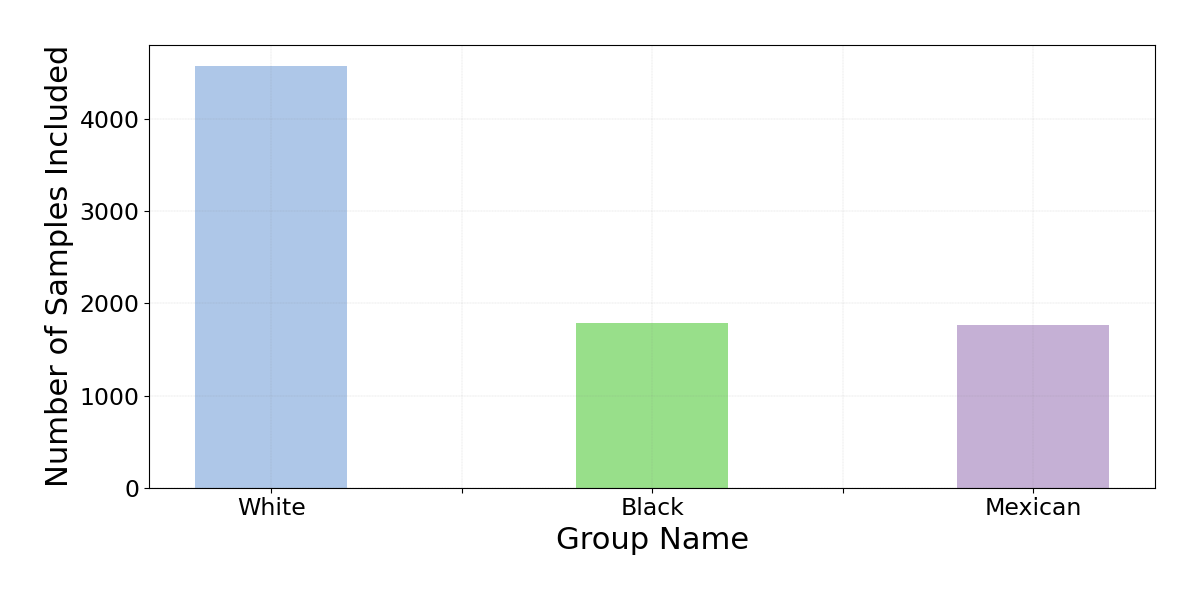}}
\subcaption{NHANES (Race)}
\end{minipage}
\begin{minipage}{0.47\linewidth}
\centerline{\includegraphics[width=\textwidth]{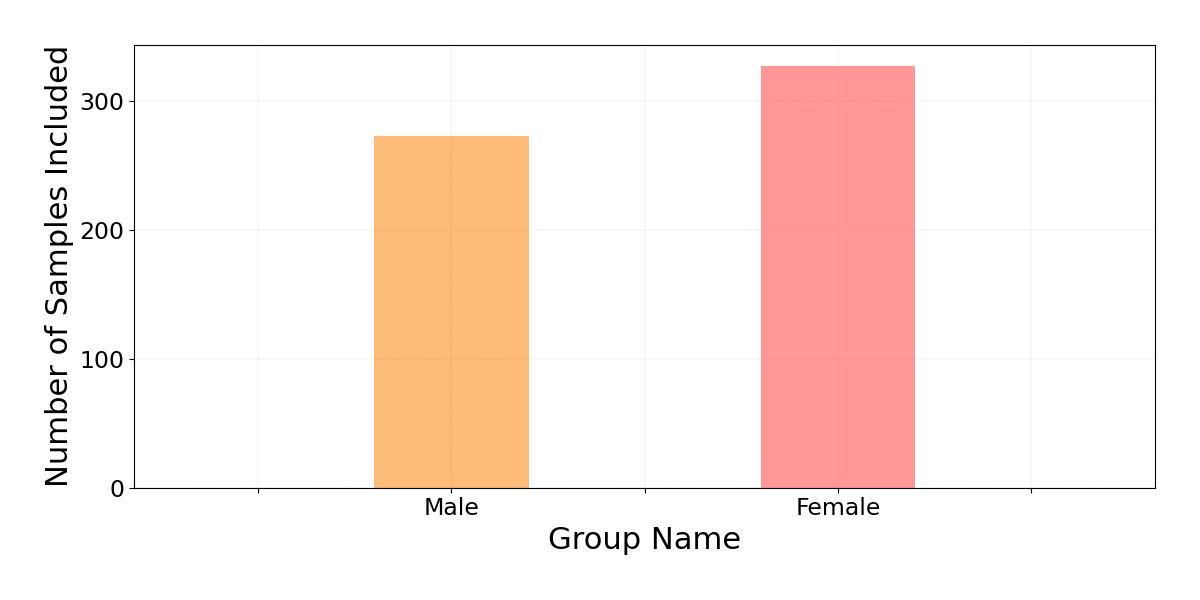}}
\subcaption{MHAAPS (Sex)}
\end{minipage}
\begin{minipage}{0.47\linewidth}
\centerline{\includegraphics[width=\textwidth]{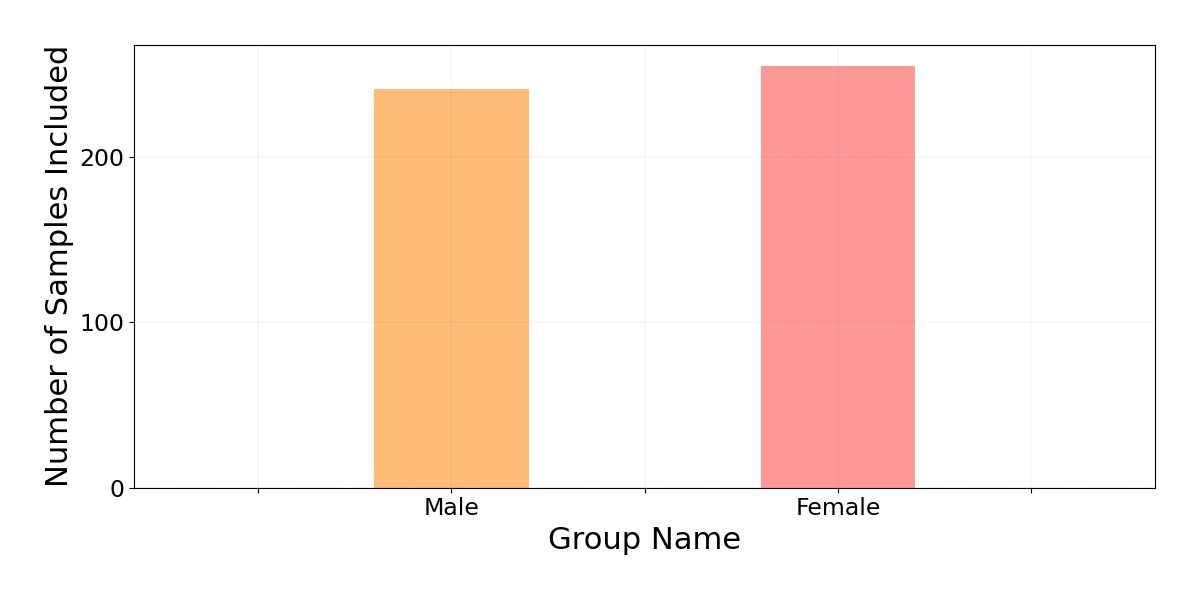}}
\subcaption{ADNI AV45/AV1451 (Sex)}
\end{minipage}
\begin{minipage}{0.47\linewidth}
\centerline{\includegraphics[width=\textwidth]{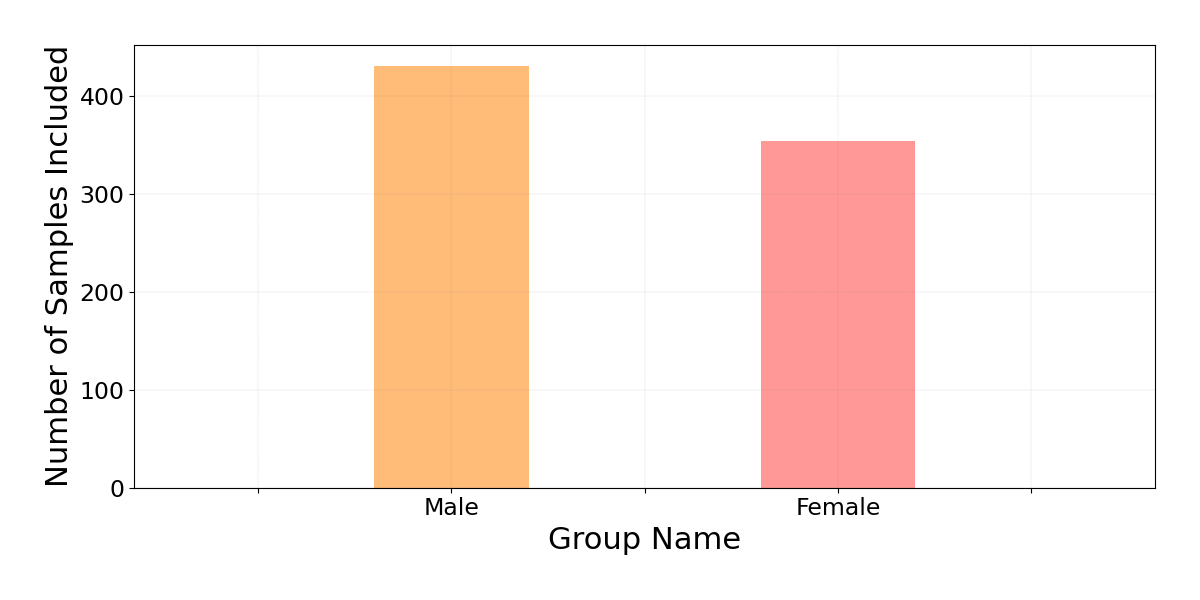}}
\subcaption{ADNI AV45/Cognition (Sex)}
\end{minipage}
\caption{Group distributions of the studied datasets.}
\label{fig:group_dist}
\end{figure}

\subsection{Detailed Description of Datasets}\label{app:data:desc}

Table~\ref{tab:dataset} provides a comprehensive overview of the relevant details associated with each real dataset. These details include the number of samples, the number of features, sensitive attributes, and group distribution. Notably, within the group distribution, it is important to acknowledge that in certain datasets, the total number of samples within different sensitive attributes may not be equivalent. This discrepancy arises due to the presence of missing information, as well as the insufficient sample sizes in certain subgroups. 
Figure~\ref{fig:group_dist} visualizes the distribution of all the groups in each dataset. We can see that the groups are imbalanced in synthetic data, NHANES (Education) and NHANES (Race), while balanced in MHAAPS (Sex), ADNI AV45/AV1451 (Sex), and ADNI AV45/Cognition (Sex). Imbalanced groups can make the problem more complicated, which is a good challenge to validate the effectiveness of our methods.

In the following, we provide comprehensive descriptions of all the datasets we used.

\subsubsection{National Health and Nutrition Examination Survey (NHANES)}

The NHANES dataset is extensively used for CCA analysis \cite{DBLP:journals/corr/Omogbai16} and fairness studies \cite{dang2023fairness,10.1093/jamiaopen/ooab077}. The dataset's diverse attributes give rise to fairness considerations, underscoring the importance of conducting equitable CCA experiments to tackle discrepancies in sample representation stemming from these inequalities.

In our study, we narrow our focus to the 2005-2006 subset of the NHANES database available at \url{https://www.cdc.gov/nchs/nhanes}. This subset comprises physical measurements and self-reported questionnaires from participants. To address missing data concerns, we employ the Multiple Imputation by Chained Equations (MICE) Forest methodology. Afterward, we divide the data into two distinct subsets: the \textit{"Phenotypic-Measure"} dataset and the \textit{"Environmental-Measure"} dataset, based on the inherent nature of the features. This stratified approach enables us to leverage F-CCA to gain a nuanced understanding of how phenotypic and environmental factors interact in influencing health outcomes while also accounting for the potential influence of education and race.

In our numerical experiments, we utilize education and race as sensitive attributes to partition the two datasets. In the initial experiment, we split the dataset into three subgroups based on participants' educational attainment. These subgroups comprise 2495, 2203, and 4145 observations, respectively. For the subsequent experiment, we work with a total of 8136 observations, dividing them into three subgroups based on racial categories. Specifically, there are 4576 white subjects, 1793 black subjects, and 1767 Mexican subjects. The \textit{"Phenotypic-Measure"} dataset encompasses 96 distinct features, while the \textit{"Environmental-Measure"} dataset includes 55 features.



\subsubsection{Mental Health and Academic Performance Survey (MHAAPS)}
The dataset employed in this study is obtained from the online repository available at \url{https://github.com/marks/convert_to_csv/tree/master/sample_data}. This particular dataset includes three distinct psychological variables and four academic variables in the form of standardized test scores, as well as gender information for a cohort of 600 individuals classified as college freshmen. The primary objective of this investigation revolves around examining the interrelationship between the aforementioned psychological variables and academic indicators, with careful consideration given to the potential influence exerted by gender. Specifically, the dataset consists of 327 female samples and 273 male samples.

\subsubsection{Alzheimer’s Disease Neuroimaging Initiative (ADNI)}
We utilize AV45 (amyloid), AV1451 (tau) positron emission tomography (PET), and Cognitive Score data from the Alzheimer's Disease Neuroimaging Initiative (ADNI) database (\url{http://adni.loni.usc.edu}) as three of our experiment datasets. Over- or under-representation of age, sex, or biometric groups might affect ADNI data fairness. Disease distribution among sensitivity features, such as higher AD occurrence in women, can also impact fairness \cite{mazure2016sex}. The unfairness in sex representation within the CCA study of the ADNI dataset arises from an imbalance in the number of male and female participants. This disparity undermines the generalizability and validity of the study findings, as it fails to account for potential sex-related differences in Alzheimer's disease progression and response to treatments. The skewed representation limits the ability to draw accurate conclusions and develop tailored interventions for both sexes, perpetuating sex bias in research and healthcare. Consequently, it becomes imperative to utilize F-CCA as a means to acknowledge and integrate sex disparities within research initiatives.

In the conducted numerical study, two distinct experiments are carried out utilizing the ADNI dataset, where the sensitive attribute considered is the sex of the participants. The first experiment aims to explore the correlation between AV45 and AV1451, whereas the second experiment focuses on investigating the correlation between AV45 and Cognitive Scores. In the first experiment, both AV45 and AV1451 include 52 regional features and comprise a total of 496 observations. Among these observations, 255 belong to the female subgroup, and 241 belong to the male subgroup. In the analysis involving the correlation between AV45 and Cognitive Scores, a subset of 785 common samples is extracted, consisting of 431 male samples and 354 female samples. This analysis employs 68 regional features associated with AV45 and 46 cognitive score features.

\subsection{ Trade-off Analysis of Correlation and Disparity}\label{app:sec:tradoff}

\subsubsection{Sensitivity of Correlation and Disparity error to $K$}
\begin{figure}[t]
\small
\centering
\includegraphics[width=0.7\textwidth]{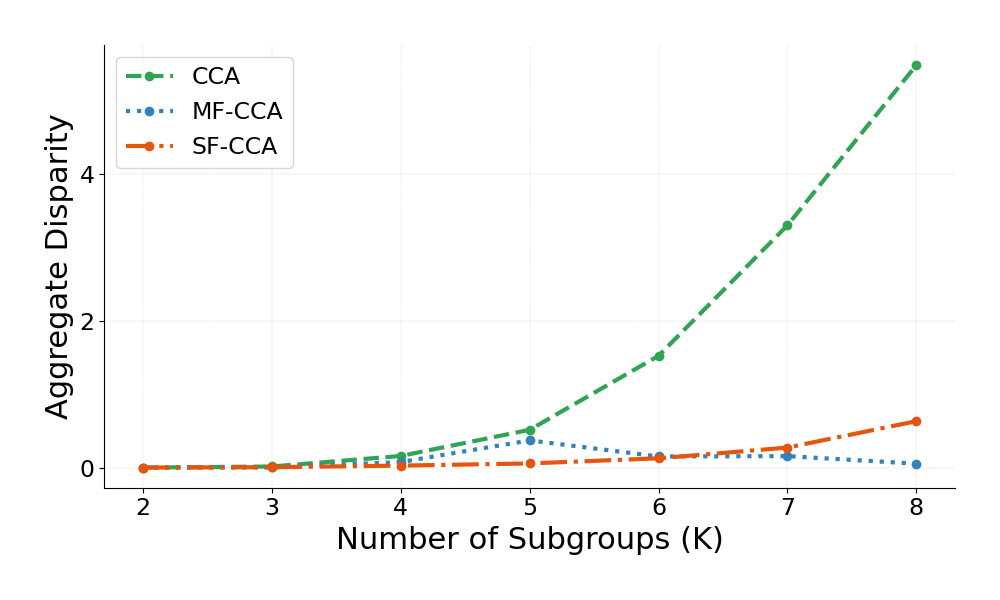}
\caption{Aggregate disparity of 1st projection dimension on synthetic data comprising varying numbers of subgroups ($K$).} \label{fig:disparity_K}
\end{figure}

We conducted an analysis to examine the sensitivity of fairness in relation to the number of subgroups $K$. To investigate this, we utilized synthetic data consisting of varying numbers of subgroups and compared the resulting aggregate disparity on the first projection dimension. 

Figure~\ref{fig:disparity_K} provides a visual representation of our findings. The results clearly demonstrate that as the number of subgroups increases, there is a substantial amplification in the aggregate disparity when employing CCA. However, both MF-CCA and SF-CCA effectively address this issue and mitigate the observed phenomenon. Specifically, they successfully reduce the level of disparity even as the number of subgroups grows.

Moreover, it is worth noting that the aggregate discrepancy of SF-CCA exhibits a certain degree of sensitivity beyond a threshold of six subgroups. Beyond this point, the effectiveness of SF-CCA in reducing disparity begins to diminish. Conversely, MF-CCA remains consistently unaffected by variations in the number of subgroups, consistently maintaining its fairness performance. This analysis underscores the robustness and scalability of MF-CCA in handling an increasing number of subgroups, as it consistently maintains fairness irrespective of this parameter. On the other hand, SF-CCA demonstrates effective fairness mitigation but exhibits limitations when faced with a large number of subgroups beyond a certain threshold.


\subsubsection{Sensitivity of Correlation and Disparity Error to $r$}

\begin{figure}[t]
\small
\centering
\begin{minipage}{0.32\linewidth}
\centerline{\includegraphics[width=\textwidth]{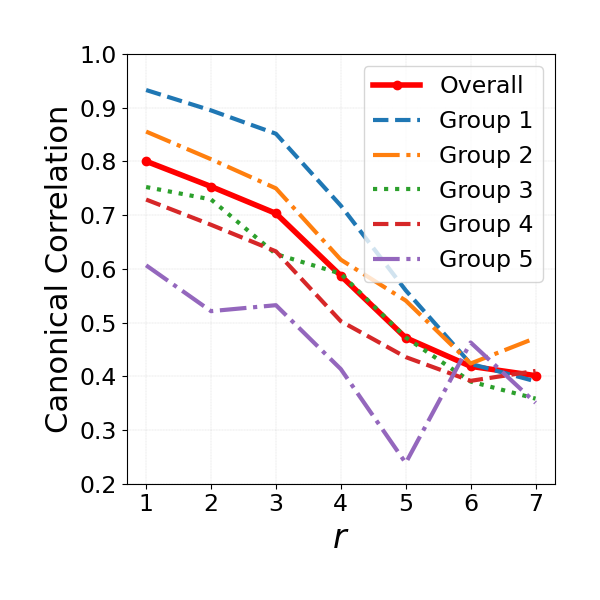}}
\centerline{(a) Correlation of CCA}\medskip
\end{minipage}
\begin{minipage}{0.32\linewidth}
\centerline{\includegraphics[width=\textwidth]{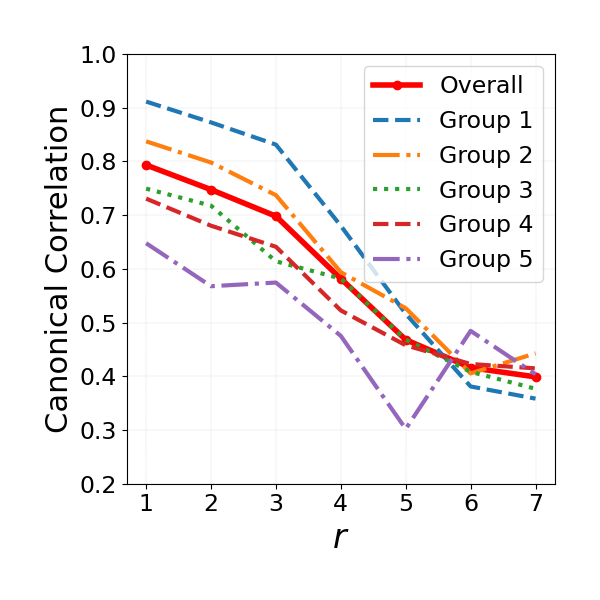}}
\centerline{(b) Correlation of MF-CCA}\medskip
\end{minipage}
\begin{minipage}{0.32\linewidth}
\centerline{\includegraphics[width=\textwidth]{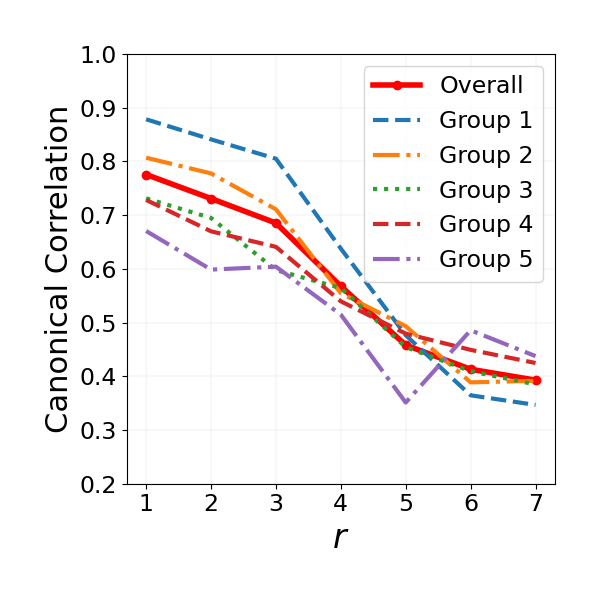}}
\centerline{(c) Correlation of SF-CCA}\medskip
\end{minipage}
\caption{Visualization of the canonical correlation results on synthetic data for the total five projection dimensions ($r$). All the methods are applied to both the entire dataset and individual subgroups. The closer each subgroup's curve is to the overall curve, the better.} \label{corr_fig}
\vspace{-0.2cm}
\end{figure}
\begin{figure}[t]
\small
\centering
\begin{minipage}{0.32\linewidth}
\centerline{\includegraphics[width=\textwidth]{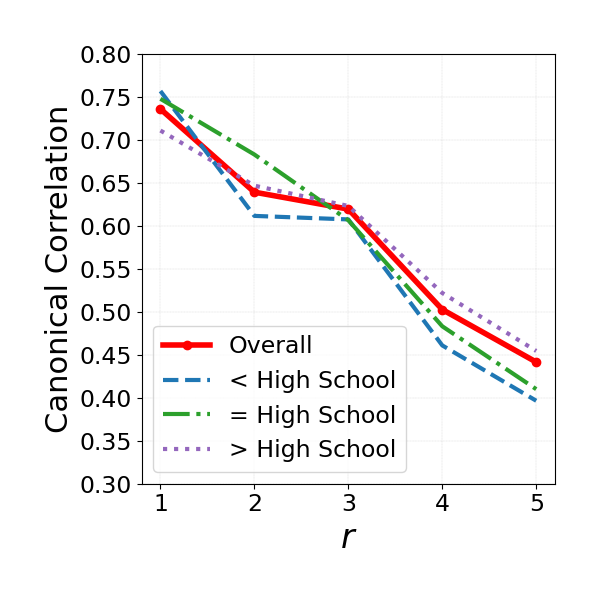}}
\subcaption{Correlation of CCA\\ on NHANES (Education)}
\end{minipage}
\begin{minipage}{0.32\linewidth}
\centerline{\includegraphics[width=\textwidth]{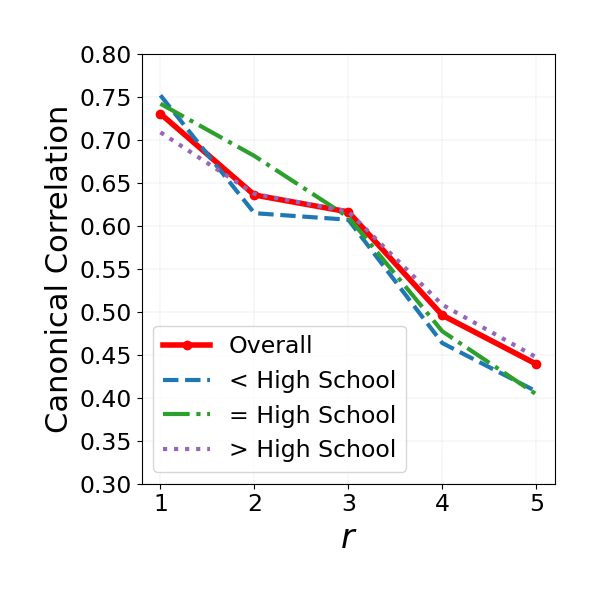}}
\subcaption{Correlation of MF-CCA\\ on NHANES (Education)}
\end{minipage}
\begin{minipage}{0.32\linewidth}
\centerline{\includegraphics[width=\textwidth]{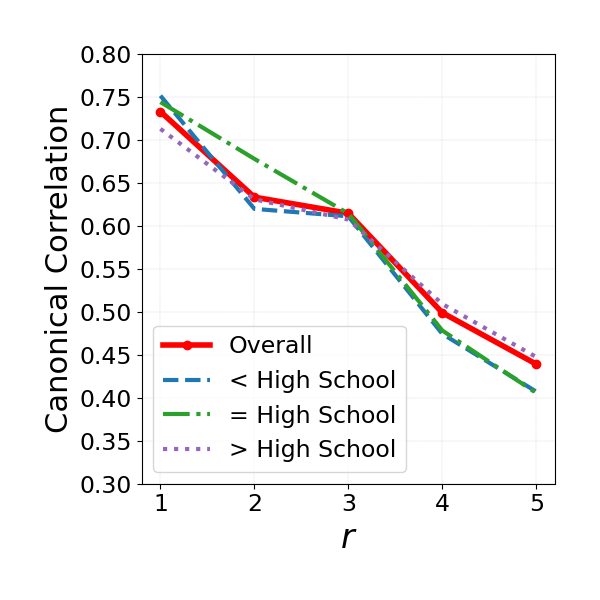}}
\subcaption{Correlation of SF-CCA\\ on NHANES (Education)}
\end{minipage}
\begin{minipage}{0.32\linewidth}
\centerline{\includegraphics[width=\textwidth]{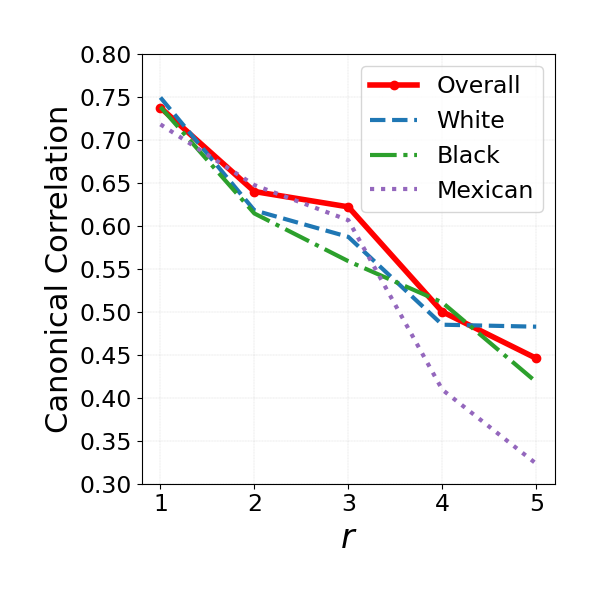}}
\subcaption{Correlation of CCA\\ on NHANES (Race)}
\end{minipage}
\begin{minipage}{0.32\linewidth}
\centerline{\includegraphics[width=\textwidth]{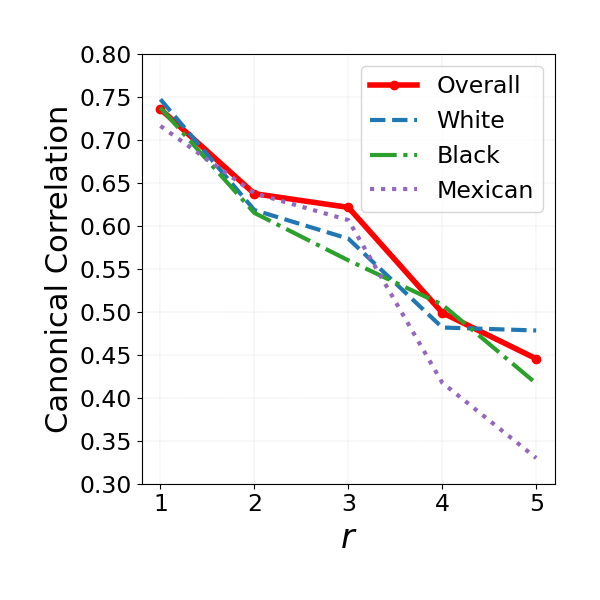}}
\subcaption{Correlation of MF-CCA\\ on NHANES (Race)}
\end{minipage}
\begin{minipage}{0.32\linewidth}
\centerline{\includegraphics[width=\textwidth]{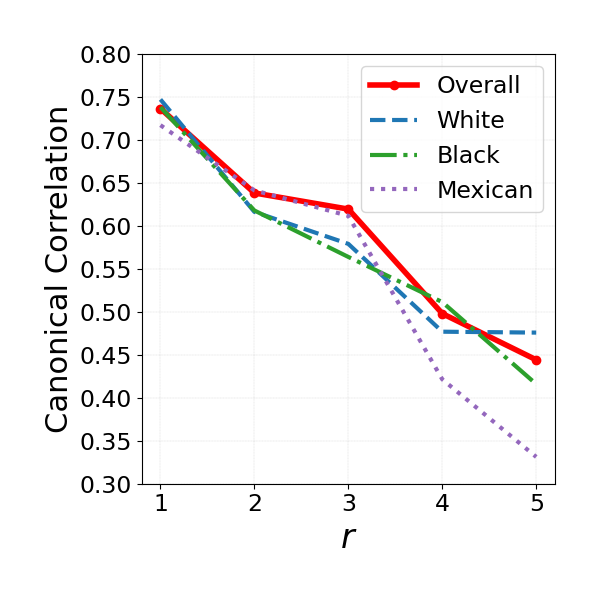}}
\subcaption{Correlation of SF-CCA\\ on NHANES (Race)}
\end{minipage}
\caption{Visualization of the canonical correlation results of NHANES (Education \& Race) for the total five projection dimensions ($r$). All the methods are applied to both the entire dataset and individual subgroups. The closer each subgroup's curve is to the overall curve, the better.} \label{corr_fig_all1}
\end{figure}
%
\begin{figure}[t]
\small
\centering
\begin{minipage}{0.32\linewidth}
\centerline{\includegraphics[width=\textwidth]{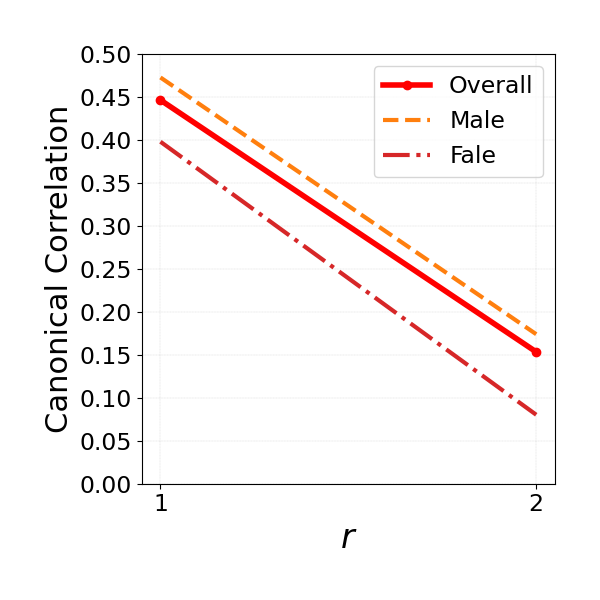}}
\subcaption{Correlation of CCA\\ on MHAAPS (Sex)}
\end{minipage}
\begin{minipage}{0.32\linewidth}
\centerline{\includegraphics[width=\textwidth]{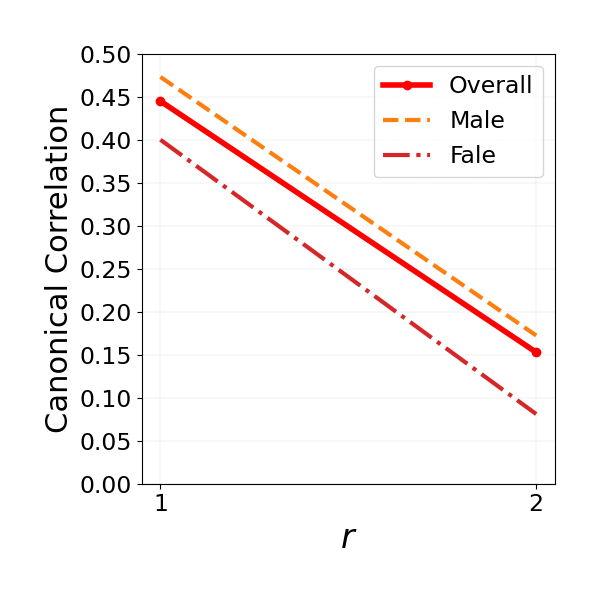}}
\subcaption{Correlation of MF-CCA\\ on MHAAPS (Sex)}
\end{minipage}
\begin{minipage}{0.32\linewidth}
\centerline{\includegraphics[width=\textwidth]{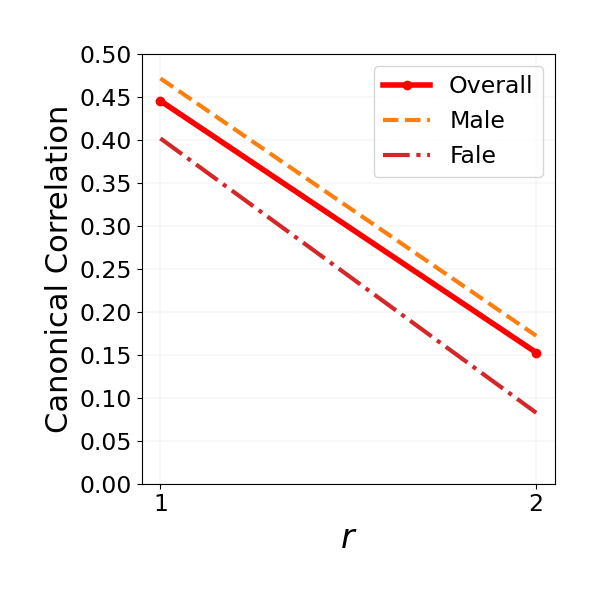}}
\subcaption{Correlation of SF-CCA\\ on MHAAPS (Sex)}
\end{minipage}
\vspace{-0.2cm}
\caption{Visualization of the canonical correlation results of MHAAPS (Sex) for the total two projection dimensions ($r$). All the methods are applied to both the entire dataset and individual subgroups. The closer each subgroup's curve is to the overall curve, the better.} \label{corr_fig_all2}
\end{figure}
\begin{figure}[t]
\small
\centering
\begin{minipage}{0.32\linewidth}
\centerline{\includegraphics[width=\textwidth]{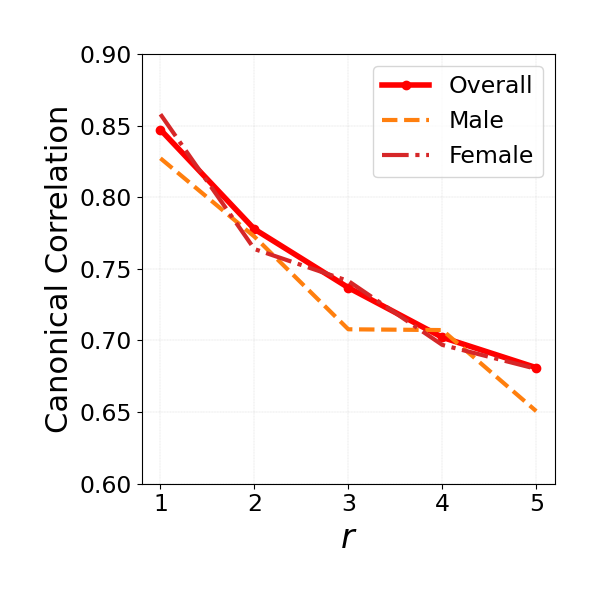}}
\subcaption{Correlation of CCA\\ on ADNI AV45/AV1451 (sex)}
\end{minipage}
\begin{minipage}{0.32\linewidth}
\centerline{\includegraphics[width=\textwidth]{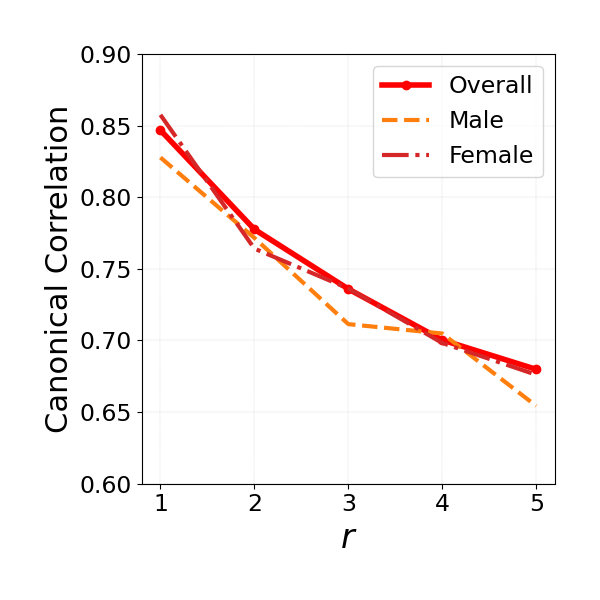}}
\subcaption{Correlation of MF-CCA\\ on ADNI AV45/AV1451 (sex)}
\end{minipage}
\begin{minipage}{0.32\linewidth}
\centerline{\includegraphics[width=\textwidth]{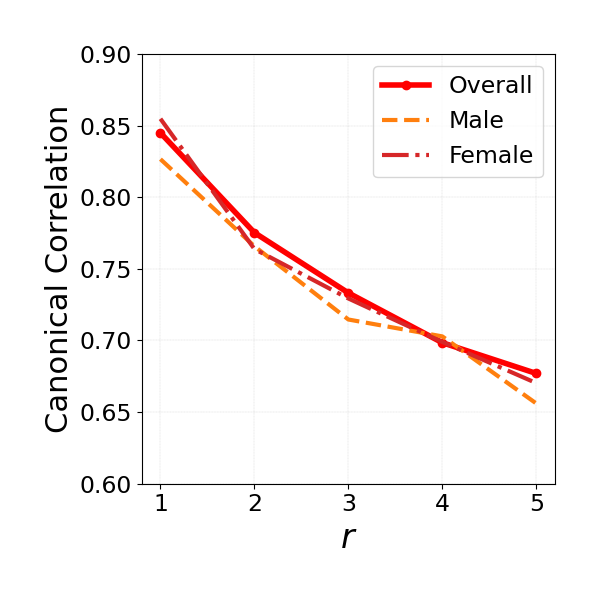}}
\subcaption{Correlation of SF-CCA\\ on ADNI AV45/AV1451 (sex)}
\end{minipage}
\begin{minipage}{0.32\linewidth}
\centerline{\includegraphics[width=\textwidth]{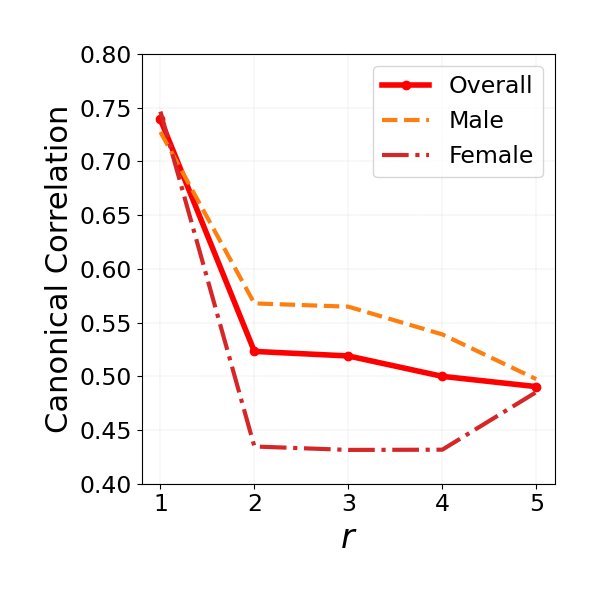}}
\subcaption{Correlation of CCA\\ on ADNI AV45/Cognition (Sex)}
\end{minipage}
\begin{minipage}{0.32\linewidth}
\centerline{\includegraphics[width=\textwidth]{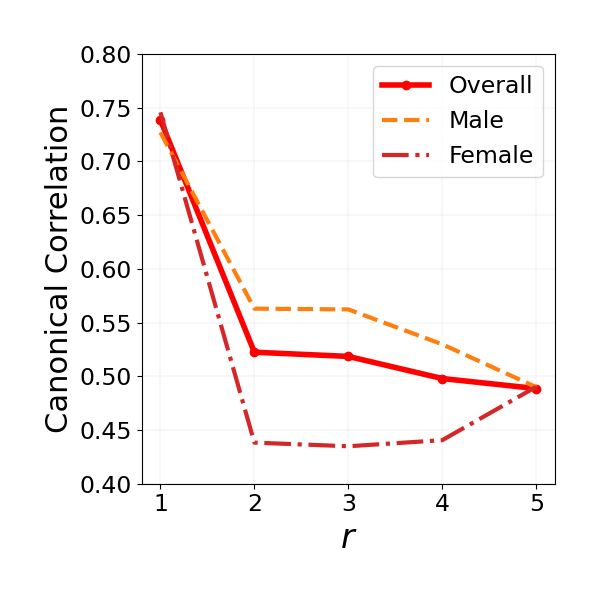}}
\subcaption{Correlation of MF-CCA\\ on ADNI AV45/Cognition (Sex)}
\end{minipage}
\begin{minipage}{0.32\linewidth}
\centerline{\includegraphics[width=\textwidth]{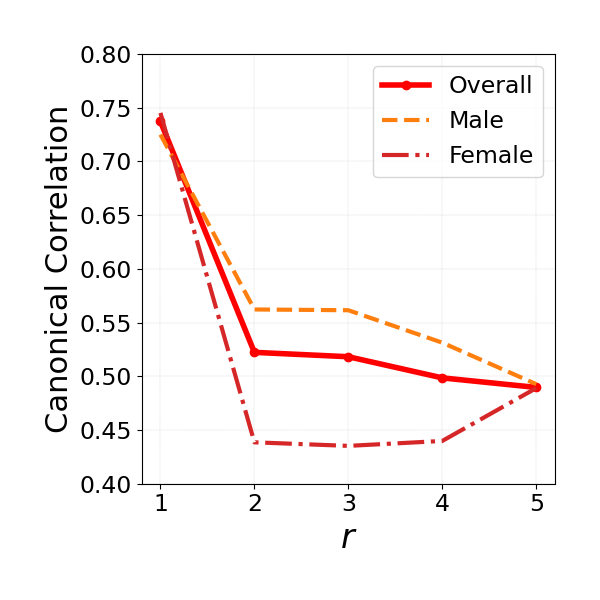}}
\subcaption{Correlation of SF-CCA\\ on ADNI AV45/Cognition (Sex)}
\end{minipage}
\caption{Visualization of the canonical correlation results of ADNI for the total five projection dimensions ($r$). All the methods are applied to both the entire dataset and individual subgroups. The closer each subgroup's curve is to the overall curve, the better.} \label{corr_fig_all3}
\end{figure}
\begin{table}[h]
\centering
\setlength{\tabcolsep}{2pt}
\small
\caption{Numerical results regarding three metrics including Correlation ($\rho_r$), Maximum Disparity (\textbf{$\Delta_{\max,r}$}), and Aggregate Disparity (\textbf{$\Delta_{\textnormal{sum},r}$}). The best ones in each row are bold, and the second best one is underlined.
The analysis focuses on the initial five projection dimensions for NHANES and ADNI, the initial seven projection dimensions for Synthetic Data, and the initial two projection dimensions for MHAAPS. ``$\uparrow$'' means the larger the better and ``$\downarrow$'' means the smaller the better.}
\label{tab:fullresults}
\begin{tabular}{@{}cl|c|ccccccccc@{}}
\toprule
\multicolumn{2}{c|}{\multirow{2}{*}{\textbf{Dataset}}} & \multicolumn{1}{c|}{\multirow{2}{*}{\textbf{Dim.}}} & \multicolumn{3}{c|}{$\rho_r\uparrow$} & \multicolumn{3}{c|}{$\Delta_{\max,r}\downarrow$} & \multicolumn{3}{c}{$\Delta_{\textnormal{sum},r}\downarrow$} \\ 
\cmidrule(l){4-12} 
\multicolumn{2}{c|}{} & \multicolumn{1}{c|}{($r$)} & CCA & MF-CCA & \multicolumn{1}{c|}{SF-CCA} & CCA & MF-CCA & \multicolumn{1}{c|}{SF-CCA} & CCA & MF-CCA & SF-CCA \\ \midrule

\multicolumn{2}{c|}{\multirow{7}{*}{\begin{tabular}[c]{@{}c@{}}Synthetic\\ Data\end{tabular}}} &\multicolumn{1}{c|}{1} & \textbf{0.8003} & \underline{0.7934} & \multicolumn{1}{c|}{0.7757} & 0.3013 & \underline{0.2386} & \multicolumn{1}{c|}{\textbf{0.1829}} & 2.8647 & \underline{2.2836} & \textbf{1.7258} \\
\multicolumn{2}{c|}{} & \multicolumn{1}{c|}{2} & \textbf{0.7533} & \underline{0.7475} & \multicolumn{1}{c|}{0.7309} & 0.3555 & \underline{0.2866} & \multicolumn{1}{c|}{\textbf{0.2241}} & 3.3802 & \underline{2.8119} & \textbf{2.2722} \\
\multicolumn{2}{c|}{} & \multicolumn{1}{c|}{3} & \textbf{0.7032} & \underline{0.6980} & \multicolumn{1}{c|}{0.6853} & 0.3215 & \underline{0.2591} & \multicolumn{1}{c|}{\textbf{0.2033}} & 3.0974 & \underline{2.5135} & \textbf{1.9705} \\
\multicolumn{2}{c|}{} & \multicolumn{1}{c|}{4} & \textbf{0.5872} & \underline{0.5818} & \multicolumn{1}{c|}{0.5677} & 0.3784 & \underline{0.2791} & \multicolumn{1}{c|}{\textbf{0.197}} & 3.6619 & \underline{2.6939} & \textbf{1.8134} \\
\multicolumn{2}{c|}{} & \multicolumn{1}{c|}{5} & \textbf{0.4717} & \underline{0.4681} & \multicolumn{1}{c|}{0.4581} & 0.4385 & \underline{0.3313} & \multicolumn{1}{c|}{\textbf{0.2424}} & 4.1649 & \underline{3.1628} & \textbf{2.2304} \\
\multicolumn{2}{c|}{} & \multicolumn{1}{c|}{6} & \textbf{0.4186} & \underline{0.4161} & \multicolumn{1}{c|}{0.4131} & 0.1163 & \underline{0.0430} & \multicolumn{1}{c|}{\textbf{0.0000}} & 1.161 & \underline{0.4142} & \textbf{0.0000} \\
\multicolumn{2}{c|}{} & \multicolumn{1}{c|}{7} & \textbf{0.4012} & \underline{0.3989} & \multicolumn{1}{c|}{0.3928} & 0.2156 & \underline{0.1336} & \multicolumn{1}{c|}{\textbf{0.0488}} & 2.0224 & \underline{1.1771} & \textbf{0.4117} \\ \midrule

\multicolumn{2}{c|}{\multirow{5}{*}{\begin{tabular}[c]{@{}c@{}}NHANES\\(Education)\end{tabular}}} & \multicolumn{1}{c|}{1} & \textbf{0.7360} & {0.7305} & \multicolumn{1}{c|}{\underline{0.7330}} & 0.0149 & \underline{0.0113} & \multicolumn{1}{c|}{\textbf{0.0092}} & 0.0596 & \underline{0.0450} & \textbf{0.0366} \\
\multicolumn{2}{c|}{} & \multicolumn{1}{c|}{2} & \textbf{0.6392} & \underline{0.6360} & \multicolumn{1}{c|}{0.6334} & 0.0485 & \underline{0.0359} & \multicolumn{1}{c|}{\textbf{0.0245}} & 0.1941 & \underline{0.1435} & \textbf{0.0980} \\
\multicolumn{2}{c|}{} & \multicolumn{1}{c|}{3} & \textbf{0.6195} & \underline{0.6163} & \multicolumn{1}{c|}{0.6147} & 0.0423 & \underline{0.0322} & \multicolumn{1}{c|}{\textbf{0.0187}} & 0.1691 & \underline{0.1287} & \textbf{0.0747} \\
\multicolumn{2}{c|}{} & \multicolumn{1}{c|}{4} & \textbf{0.5027} & {0.4961} & \multicolumn{1}{c|}{\underline{0.4990}} & 0.0506 & \underline{0.0342} & \multicolumn{1}{c|}{\textbf{0.0243}} & 0.2022 & \underline{0.1367} & \textbf{0.0974} \\
\multicolumn{2}{c|}{} & \multicolumn{1}{c|}{5} & \textbf{0.4416} & \underline{0.4393} & \multicolumn{1}{c|}{0.4392} & 0.1001 & \textbf{0.0818} & \multicolumn{1}{c|}{\underline{0.0824}} & 0.4003 & \textbf{0.3272} & \underline{0.3297} \\ \midrule

\multicolumn{2}{c|}{\multirow{5}{*}{\begin{tabular}[c]{@{}c@{}}NHANES\\(Race)\end{tabular}}} & \multicolumn{1}{c|}{1} & \textbf{0.7376} & {0.7359} & \multicolumn{1}{c|}{\underline{0.7365}} & 0.0482 & \underline{0.0479} & \multicolumn{1}{c|}{\textbf{0.0468}} & {0.1927} & \underline{0.1916} & \textbf{0.1874} \\
\multicolumn{2}{c|}{} & \multicolumn{1}{c|}{2} & \textbf{0.6400} & {0.6374} & \multicolumn{1}{c|}{\underline{0.6383}} & {0.0101} & \underline{0.0097} & \multicolumn{1}{c|}{\textbf{0.0048}} & {0.0402} & \underline{0.0389} & \textbf{0.0191} \\
\multicolumn{2}{c|}{} & \multicolumn{1}{c|}{3} & \textbf{0.6221} & \underline{0.6216} & \multicolumn{1}{c|}{0.6195} & 0.0739 & \underline{0.0708} & \multicolumn{1}{c|}{\textbf{0.0608}} & 0.2955 & \underline{0.2834} & \textbf{0.2432} \\
\multicolumn{2}{c|}{} & \multicolumn{1}{c|}{4} & \textbf{0.5001} & \underline{0.4990} & \multicolumn{1}{c|}{0.4980} & 0.0887 & \underline{0.0774} & \multicolumn{1}{c|}{\textbf{0.0685}} & 0.3549 & \underline{0.3096} & \textbf{0.2742} \\
\multicolumn{2}{c|}{} & \multicolumn{1}{c|}{5} & \textbf{0.4459} & \underline{0.4454} & \multicolumn{1}{c|}{0.4442} & 0.1561 & \underline{0.1451} & \multicolumn{1}{c|}{\textbf{0.1413}} & 0.6244 & \underline{0.5805} & \textbf{0.5653} \\ \midrule

\multicolumn{2}{c|}{\multirow{2}{*}{\begin{tabular}[c]{@{}c@{}}MHAAPS \\ (Sex) \end{tabular}}} & \multicolumn{1}{c|}{1} & \textbf{0.4464} & {0.4451} & \multicolumn{1}{c|}{\underline{0.4455}} & 0.0093 & \underline{0.0076} & \multicolumn{1}{c|}{\textbf{0.0044}} & 0.0187 & \underline{0.0152} & \textbf{0.0088} \\
\multicolumn{2}{c|}{} & \multicolumn{1}{c|}{2} & \textbf{0.1534} & \underline{0.1529} & \multicolumn{1}{c|}{{0.1526}} & 0.0061 & \underline{0.0038} & \multicolumn{1}{c|}{\textbf{0.0019}} & 0.0122 & \underline{0.0075} & \textbf{0.0039} \\ \midrule

\multicolumn{2}{c|}{\multirow{5}{*}{\begin{tabular}[c]{@{}c@{}}ADNI\\AV45 and\\AV1451 \\(Sex)\end{tabular}}} & \multicolumn{1}{c|}{1} & \textbf{0.8472} & \underline{0.8468} & \multicolumn{1}{c|}{{0.8450}} & 0.0190 & \underline{0.0180} & \multicolumn{1}{c|}{\textbf{0.0165}} & 0.038 & \underline{0.036} & \textbf{0.0329} \\
\multicolumn{2}{c|}{} & \multicolumn{1}{c|}{2} & \textbf{0.7778} & \underline{0.7776} & \multicolumn{1}{c|}{{0.7753}} & 0.0131 & \underline{0.0119} & \multicolumn{1}{c|}{\textbf{0.0064}} & 0.0263 & \underline{0.0238} & \textbf{0.0127} \\
\multicolumn{2}{c|}{} & \multicolumn{1}{c|}{3} & \textbf{0.7369} & \underline{0.7360} & \multicolumn{1}{c|}{{0.7332}} & 0.0460 & \underline{0.0371} & \multicolumn{1}{c|}{\textbf{0.0269}} & 0.0919 & \underline{0.0743} & \textbf{0.0538} \\
\multicolumn{2}{c|}{} & \multicolumn{1}{c|}{4} & \textbf{0.7022} & \underline{0.7003} & \multicolumn{1}{c|}{0.6985} & 0.0083 & \underline{0.0046} & \multicolumn{1}{c|}{\textbf{0.0018}} & 0.0167 & \underline{0.0092} & \textbf{0.0037} \\
\multicolumn{2}{c|}{} & \multicolumn{1}{c|}{5} & \textbf{0.6810} & \underline{0.6798} & \multicolumn{1}{c|}{0.6770} & 0.0477 & \underline{0.0399} & \multicolumn{1}{c|}{\textbf{0.0324}} & 0.0954 & \underline{0.0799} & \textbf{0.0648} \\ \midrule

\multicolumn{2}{c|}{\multirow{5}{*}{\begin{tabular}[c]{@{}c@{}}ADNI\\AV45 and\\Cognition\\(Sex)\end{tabular}}} & \multicolumn{1}{c|}{1} & \textbf{0.7391} & \underline{0.7386} & \multicolumn{1}{c|}{{0.7373}} & \underline{0.0146} & {0.0149} & \multicolumn{1}{c|}{\textbf{0.0135}} & \underline{0.0293} & 0.0299 & \textbf{0.0270} \\
\multicolumn{2}{c|}{} & \multicolumn{1}{c|}{2} & \textbf{0.5232} & \underline{0.5224} & \multicolumn{1}{c|}{{0.5223}} & 0.1212 & \underline{0.1127} & \multicolumn{1}{c|}{\textbf{0.1116}} & 0.2424 & \underline{0.2254} & \textbf{0.2233} \\
\multicolumn{2}{c|}{} & \multicolumn{1}{c|}{3} & \textbf{0.5189} & \underline{0.5185} & \multicolumn{1}{c|}{{{0.5182}}} & 0.1568 & \underline{0.1508} & \multicolumn{1}{c|}{\textbf{0.1497}} & 0.3135 & \underline{0.3017} & \textbf{0.2994} \\
\multicolumn{2}{c|}{} & \multicolumn{1}{c|}{4} & \textbf{0.4999} & {0.4979} & \multicolumn{1}{c|}{\underline{0.4985}} & 0.1242 & \underline{0.1061} & \multicolumn{1}{c|}{\textbf{0.1084}} & 0.2485 & \underline{0.2123} & \textbf{0.2168} \\
\multicolumn{2}{c|}{} & \multicolumn{1}{c|}{5} & \textbf{0.4904} & {0.4886} & \multicolumn{1}{c|}{\underline{0.4896}} & 0.0249 & \textbf{0.0124} & \multicolumn{1}{c|}{\underline{0.0165}} & 0.0498 & \textbf{0.0248} & \underline{0.0329} \\ \bottomrule
\end{tabular}%
\end{table}

\begin{table}[h]
\centering
\setlength{\tabcolsep}{2pt}
\small
\caption{Performance change from baseline CCA to F-CCA algorithms (MF-CCA and SF-CCA) in the format of percentage regarding three metrics: Correlation ($\rho_r$), Maximum Disparity (\textbf{$\Delta_{\max,r}$}), and Aggregate Disparity (\textbf{$\Delta_{\textnormal{sum},r}$}). The changes are computed using the numerical results from Table~\ref{tab:fullresults}: 
$P\rho_r = 
\frac{\rho_r \text{ of F-CCA}-\rho_r \text{ of CCA}
}{
\rho_r \text{ of CCA}
}\times 100$, 
$P\Delta_{\max,r} = 
-\frac{
\Delta_{\max,r} \text{ of F-CCA} - \Delta_{\max,r} \text{ of CCA}
}{
\Delta_{\max,r} \text{ of CCA}
}\times 100$, 
$P\Delta_{\textnormal{sum},r} = 
-\frac{
\Delta_{\textnormal{sum},r} \text{ of F-CCA}-\Delta_{\textnormal{sum},r} \text{ of CCA}
}{
\Delta_{\textnormal{sum},r} \text{ of CCA}
}\times 100$. 
Here, F-CCA is replaced with either MF-CCA or SF-CCA to obtain the results in each column. The formulations are determined based on the properties of the metrics where $\rho_r$ is the larger the better while $\Delta_{\max,r}$ and $\Delta_{\textnormal{sum},r}$ are the smaller the better.} 
\label{tab:percentshift}
\begin{tabular}{@{}cl|c|cccccc@{}}
\toprule
\multicolumn{2}{c|}{\multirow{2}{*}{\textbf{Dataset}}} & \multicolumn{1}{c|}{\multirow{2}{*}{\textbf{Dim.}}} & \multicolumn{2}{c|}{$P\rho_r$} & \multicolumn{2}{c|}{$P\Delta_{\max,r}$} & \multicolumn{2}{c}{$P\Delta_{\textnormal{sum},r}$} \\ \cmidrule(l){4-9} 
\multicolumn{2}{c|}{} & \multicolumn{1}{c|}{($r$)} & MF-CCA & \multicolumn{1}{c|}{SF-CCA} & MF-CCA & \multicolumn{1}{c|}{SF-CCA} & MF-CCA & SF-CCA \\ \midrule

\multicolumn{2}{c|}{\multirow{7}{*}{\begin{tabular}[c]{@{}c@{}}Synthetic\\ Data\end{tabular}}} &\multicolumn{1}{c|}{1} & -0.8688 & \multicolumn{1}{c|}{-3.0817} & 20.8021 & \multicolumn{1}{c|}{39.2877} & 20.2834 & 39.7572 \\
\multicolumn{2}{c|}{} & \multicolumn{1}{c|}{2} & -0.7726 & \multicolumn{1}{c|}{-2.9710} & 19.3625 & \multicolumn{1}{c|}{36.9495} & 16.8132 & 32.7792\\
\multicolumn{2}{c|}{} & \multicolumn{1}{c|}{3} & -0.7366 & \multicolumn{1}{c|}{-2.5461} & 19.3916 & \multicolumn{1}{c|}{36.7677} & 18.8495 & 36.3813\\
\multicolumn{2}{c|}{} & \multicolumn{1}{c|}{4} & -0.9243 & \multicolumn{1}{c|}{-3.3148} & 26.245 & \multicolumn{1}{c|}{47.9246} & 26.4333 & 50.4784\\
\multicolumn{2}{c|}{} & \multicolumn{1}{c|}{5} & -0.7660 & \multicolumn{1}{c|}{-2.8896} & 24.4426 & \multicolumn{1}{c|}{44.7114} & 24.0612 & 46.4484\\
\multicolumn{2}{c|}{} & \multicolumn{1}{c|}{6} & -0.5865 & \multicolumn{1}{c|}{-1.3126} & 63.0023 & \multicolumn{1}{c|}{99.9997} & 64.3202 & 99.9997\\
\multicolumn{2}{c|}{} & \multicolumn{1}{c|}{7} & -0.5767 & \multicolumn{1}{c|}{-2.0838} & 38.0292 & \multicolumn{1}{c|}{77.3821} & 41.7986 & 79.6436\\ \midrule

\multicolumn{2}{c|}{\multirow{5}{*}{\begin{tabular}[c]{@{}c@{}}NHANES \\ (Education)\end{tabular}}} &\multicolumn{1}{c|}{1} & -0.7511 & \multicolumn{1}{c|}{-0.4184} & 24.5248 & \multicolumn{1}{c|}{38.5621} & 24.5248 & 38.5621\\
\multicolumn{2}{c|}{} & \multicolumn{1}{c|}{2} & -0.5131 & \multicolumn{1}{c|}{-0.9070} & 26.0899 & \multicolumn{1}{c|}{49.5206} & 26.0899 & 49.5206\\
\multicolumn{2}{c|}{} & \multicolumn{1}{c|}{3} & -0.5243 & \multicolumn{1}{c|}{-0.7835} & 23.8580 & \multicolumn{1}{c|}{55.7921} & 23.8580 & 55.7921\\
\multicolumn{2}{c|}{} & \multicolumn{1}{c|}{4} & -1.3118 & \multicolumn{1}{c|}{-0.7309} & 32.4151 & \multicolumn{1}{c|}{51.833} & 32.4151 & 51.833\\
\multicolumn{2}{c|}{} & \multicolumn{1}{c|}{5} & -0.5175 & \multicolumn{1}{c|}{-0.5422} & 18.2615 & \multicolumn{1}{c|}{17.6268} & 18.2615 & 17.6268\\ \midrule
\multicolumn{2}{c|}{\multirow{5}{*}{\begin{tabular}[c]{@{}c@{}}NHANES\\(Race)\end{tabular}}} &\multicolumn{1}{c|}{1} & -0.2329 & \multicolumn{1}{c|}{-0.1450} & 0.5481 & \multicolumn{1}{c|}{2.7539} & 0.5481 & 2.7539\\
\multicolumn{2}{c|}{} & \multicolumn{1}{c|}{2} & -0.3972 & \multicolumn{1}{c|}{-0.2625} & 3.3229 & \multicolumn{1}{c|}{52.6573} & 3.3229 & 52.6573\\
\multicolumn{2}{c|}{} & \multicolumn{1}{c|}{3} & -0.0931 & \multicolumn{1}{c|}{-0.4209} & 4.1097 & \multicolumn{1}{c|}{17.7082} & 4.1097 & 17.7082\\
\multicolumn{2}{c|}{} & \multicolumn{1}{c|}{4} & -0.2334 & \multicolumn{1}{c|}{-0.4317} & 12.7708 & \multicolumn{1}{c|}{22.7437} & 12.7708 & 22.7437\\
\multicolumn{2}{c|}{} & \multicolumn{1}{c|}{5} & -0.1113 & \multicolumn{1}{c|}{-0.3914} & 7.0315 & \multicolumn{1}{c|}{9.4561} & 7.0315 & 9.4561\\ \midrule

\multicolumn{2}{c|}{\multirow{2}{*}{\begin{tabular}[c]{@{}c@{}} MHAAPS \\ (Sex) \end{tabular}}} & \multicolumn{1}{c|}{1} & -0.2917 & \multicolumn{1}{c|}{-0.2084} & 18.6150 & \multicolumn{1}{c|}{52.8984} & 18.6150 & 52.8984 \\
\multicolumn{2}{c|}{} & \multicolumn{1}{c|}{2} & -0.2724 & \multicolumn{1}{c|}{-0.4941} & 38.2692 & \multicolumn{1}{c|}{68.1768} & 38.2692 & 68.1768 \\ \midrule

\multicolumn{2}{c|}{\multirow{5}{*}{\begin{tabular}[c]{@{}c@{}}ADNI\\AV45 and\\AV1451 \\(Sex)\end{tabular}}} &\multicolumn{1}{c|}{1} & -0.0444 & \multicolumn{1}{c|}{-0.2604} & 5.3213 & \multicolumn{1}{c|}{13.4811} & 5.3213 & 13.4811\\
\multicolumn{2}{c|}{} & \multicolumn{1}{c|}{2} & -0.0222 & \multicolumn{1}{c|}{-0.3178} & 9.5217 & \multicolumn{1}{c|}{51.4446} & 9.5217 & 51.4446\\
\multicolumn{2}{c|}{} & \multicolumn{1}{c|}{3} & -0.1294 & \multicolumn{1}{c|}{-0.4999} & 19.1779 & \multicolumn{1}{c|}{41.4136} & 19.1779 & 41.4136\\
\multicolumn{2}{c|}{} & \multicolumn{1}{c|}{4} & -0.2648 & \multicolumn{1}{c|}{-0.5307} & 44.6624 & \multicolumn{1}{c|}{77.8152} & 44.6624 & 77.8152\\
\multicolumn{2}{c|}{} & \multicolumn{1}{c|}{5} & -0.1737 & \multicolumn{1}{c|}{-0.5918} & 16.288 & \multicolumn{1}{c|}{32.0918} & 16.288 & 32.0918\\ \midrule

\multicolumn{2}{c|}{\multirow{5}{*}{\begin{tabular}[c]{@{}c@{}}ADNI\\AV45 and\\Cognition\\(Sex)\end{tabular}}} &\multicolumn{1}{c|}{1} & -0.0631 & \multicolumn{1}{c|}{-0.2416} & -2.0189 & \multicolumn{1}{c|}{7.7392} & -2.0189 & -7.7392 \\
\multicolumn{2}{c|}{} & \multicolumn{1}{c|}{2} & -0.1436 & \multicolumn{1}{c|}{-0.1690} & 7.0116 & \multicolumn{1}{c|}{7.8657} & 7.0116 & 7.8657\\
\multicolumn{2}{c|}{} & \multicolumn{1}{c|}{3} & -0.0832 & \multicolumn{1}{c|}{-0.1386} & 3.7799 & \multicolumn{1}{c|}{4.5125} & 3.7799 & 4.5125\\
\multicolumn{2}{c|}{} & \multicolumn{1}{c|}{4} & -0.4074 & \multicolumn{1}{c|}{-0.2842} & 14.5679 & \multicolumn{1}{c|}{12.7538} & 14.5679 & 12.7538\\
\multicolumn{2}{c|}{} & \multicolumn{1}{c|}{5} & -0.3663 & \multicolumn{1}{c|}{-0.1533} & 50.094 & \multicolumn{1}{c|}{33.8674} & 50.094 & 33.8674\\ \bottomrule
\end{tabular}%
\end{table}

Table~\ref{tab:fullresults} presents the numerical results of three measures described in Section \ref{sec:eva}, namely correlation ($\rho_r$), maximum disparity ($\Delta_{\max,r}$), and aggregate disparity ($\Delta_{\textnormal{sum},r}$). The results are with respect to the first $r$ projection dimensions. It is an extension of Table~\ref{tab:results}. Specifically, we present the results of the first seven projection dimensions for synthetic data; the first five projection dimensions for NHANES and ADNI; and the first two projection dimensions for MHAAPS, respectively. Overall, we have consistent results and conclusions in Table~\ref{tab:results}.
%
%
Firstly, MF-CCA and SF-CCA show substantial improvements in fairness compared to CCA with mild compromises of correlation. Secondly, SF-CCA outperforms MF-CCA in terms of fairness improvement, although it sacrifices correlation. This highlights the effectiveness of the single-objective optimization approach in SF-CCA. F-CCA consistently performs well across these datasets, confirming its inherent scalability. 

Table~\ref{tab:percentshift} quantifies the improvement in fairness and the minor compromises in accuracy presented in Table~\ref{tab:fullresults} by depicting the percentage performance shift from baseline CCA to MF-CCA and SF-CCA. The table is a reference and an extension of Figure~\ref{fig:performance shift} in Section \ref{sec:eva}. Figure~\ref{fig:performance shift} only visualizes the percentage shifts of the second and fifth projection dimension for Synthetic Data, NHANES, and ADNI and the first two projection dimensions for MHAAPS regarding the results presented in Table~\ref{tab:results} for coherence. In Table~\ref{tab:percentshift}, the metrics are formulated as: 
$P\rho_r = 
\frac{\rho_r \text{ of F-CCA}-\rho_r \text{ of CCA}
}{
\rho_r \text{ of CCA}
}\times 100$, 
$P\Delta_{\max,r} = -
\frac{
\Delta_{\max,r} \text{ of F-CCA} - \Delta_{\max,r} \text{ of CCA}
}{
\Delta_{\max,r} \text{ of CCA}
}\times 100$, 
$P\Delta_{\textnormal{sum},r} = -
\frac{
\Delta_{\textnormal{sum},r} \text{ of F-CCA}-\Delta_{\textnormal{sum},r} \text{ of CCA}
}{
\Delta_{\textnormal{sum},r} \text{ of CCA}
}\times 100$, where F-CCA is switched to MF-CCA and SF-CCA to obtain the corresponding shift results. The formulations are designed regarding the properties of the metrics where $\rho_r$ is the larger the better while $\Delta_{\max,r}$ and $\Delta_{\textnormal{sum},r}$ are the smaller the better.
According to the table, compared to CCA, MF-CCA, and SF-CCA sacrifice slightly the performance in correlation ($\rho_r$) in exchange for significant fairness improvements in terms of maximum disparity ($\Delta_{\max,r}$) and aggregate disparity ($\Delta_{\textnormal{sum},r}$) across all datasets in general, which demonstrate the good performance of F-CCA.

We also provide visualizations demonstrating how the correlation changes as the dimensionality changes. Figures~\ref{corr_fig}, \ref{corr_fig_all1}, \ref{corr_fig_all2}, and Figure~\ref{corr_fig_all3} display these correlation curves. Our expectation is that the correlation curves of our methods will show a stronger tendency to concentrate around the overall correlation curve compared to the baseline method. The findings indicate that the utilization of both MF-CCA and SF-CCA approaches yields a convergence of subgroup performance towards the overall canonical correlation. Furthermore, the overall canonical correlations derived from these two models exhibit a comparable level of performance to that of the CCA model.

Upon examining the synthetic data in Figure~\ref{corr_fig}, we observe a clear concentration tendency. However, in the case of real data, this tendency is less pronounced. This is due to the fact that the disparities between different groups in the real data are already quite small. As a result, all the dashed curves in the CCA are already close to the solid red curve.

\subsubsection{Sensitivity of Correlation and Disparity of SF-CCA to $\lambda$} \label{sec: sensitivity_lambda}

\begin{figure}[t]
\small
\centering
\begin{subfigure}[b]{0.47\linewidth}
\centering
\includegraphics[width=\textwidth]{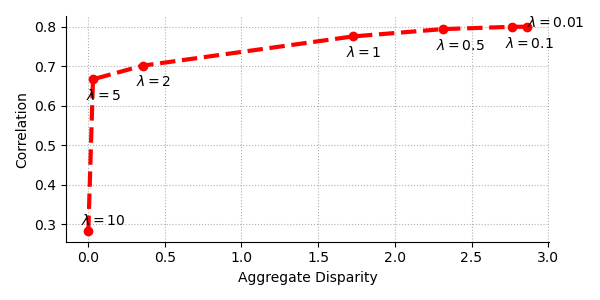}
\subcaption{Synthetic Data}
\end{subfigure}
\begin{subfigure}[b]{0.47\linewidth}
\centering
\includegraphics[width=\textwidth]{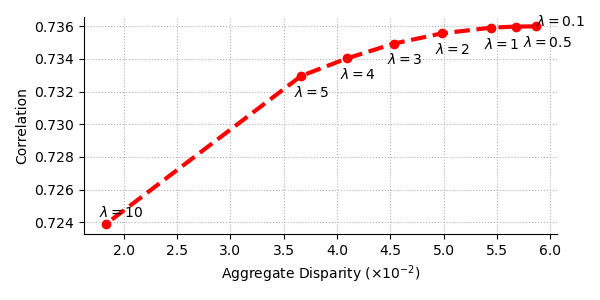}
\subcaption{NHANES (Education)}
\end{subfigure}
\begin{subfigure}[b]{0.47\linewidth}
\centering
\includegraphics[width=\textwidth]{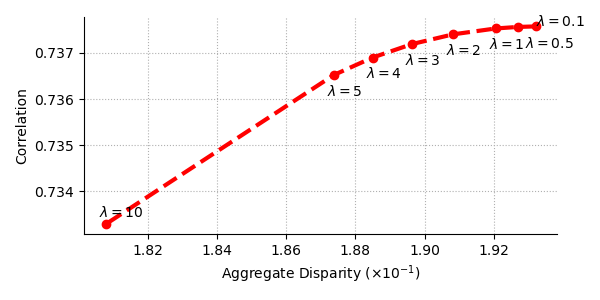}
\subcaption{NHANES (Race)}
\end{subfigure}
\begin{subfigure}[b]{0.47\linewidth}
\centering
\includegraphics[width=\textwidth]{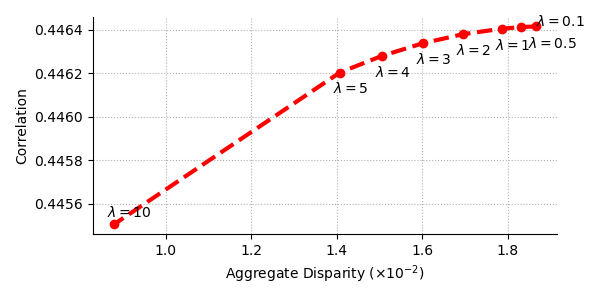}
\subcaption{MHAAPS (sex)}
\end{subfigure}
\begin{subfigure}[b]{0.47\linewidth}
\centering
\includegraphics[width=\textwidth]{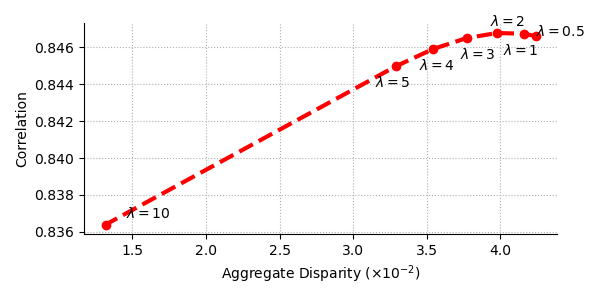}
\subcaption{ADNI AV45/AV1451 (sex)}
\end{subfigure}
\begin{subfigure}[b]{0.47\linewidth}
\centering
\includegraphics[width=\textwidth]{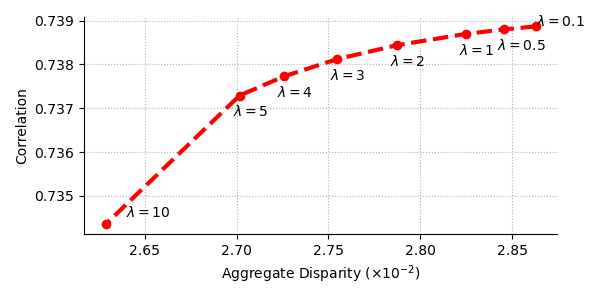}
\subcaption{ADNI AV45/Cognition (sex))}
\end{subfigure}
\caption{Sensitivity of correlation and disparity error to $\lambda$ in SF-CCA framework. Higher $\lambda$ emphasizes fairness over correlation (accuracy). Moving right to left, accuracy drops as fairness improves (smaller disparity). A notable trend links higher correlation with reduced fairness.} \label{fig:lambda}
\end{figure}

The SF-CCA experiment incorporates the hyperparameter $\lambda$ to strike a balance between correlation and fairness considerations. This hyperparameter allows us to modulate the emphasis placed on fairness within the SF-CCA model. Figure~\ref{fig:lambda} provides a visual representation of the effects of different $\lambda$ values on both correlation and fairness metrics.

The illustration clearly demonstrates that increasing the magnitude of $\lambda$ enhances the emphasis on fairness within the SF-CCA model. Consequently, this leads to a reduction in aggregate disparity, indicating improved fairness. Conversely, lower values of $\lambda$ prioritize correlation, emphasizing the preservation of the underlying relationship between the datasets.

This experimental finding aligns with the discussion presented in Section~\ref{sec:main:lfg} of our work, where we discussed the trade-off between correlation and fairness. By adjusting the value of $\lambda$, we can effectively control the balance between these two objectives in the SF-CCA model.


\subsection{Runtime Sensitivity of MF-CCA and SF-CCA}

When it comes to the issue of running time, it is important to consider the trade-off between computational efficiency and hyperparameter tuning effort. While SF-CCA may require more effort in tuning its hyperparameters, it still demonstrates a notable advantage in terms of time complexity compared to MF-CCA. To conduct a sensitivity analysis concerning the sample count ($N$), the feature count ($d$), and the subgroup count($K$), we conduct a series of experiments on synthetic datasets, ensuring the hyperparameters remained constant. These experiments are performed in three distinct settings, each replicated ten times, employing techniques including CCA, MF-CCA, and SF-CCA. The specifics of outcomes are depicted in Figures~\ref{fig:runtime_fixn}, \ref{fig:runtime_fixp}, and \ref{fig:runtime_K}.

\subsubsection{Runtime Sensitivity of MF-CCA and SF-CCA to $d$}
\begin{figure}[h]
\small
\centering
\includegraphics[width=0.7\textwidth]{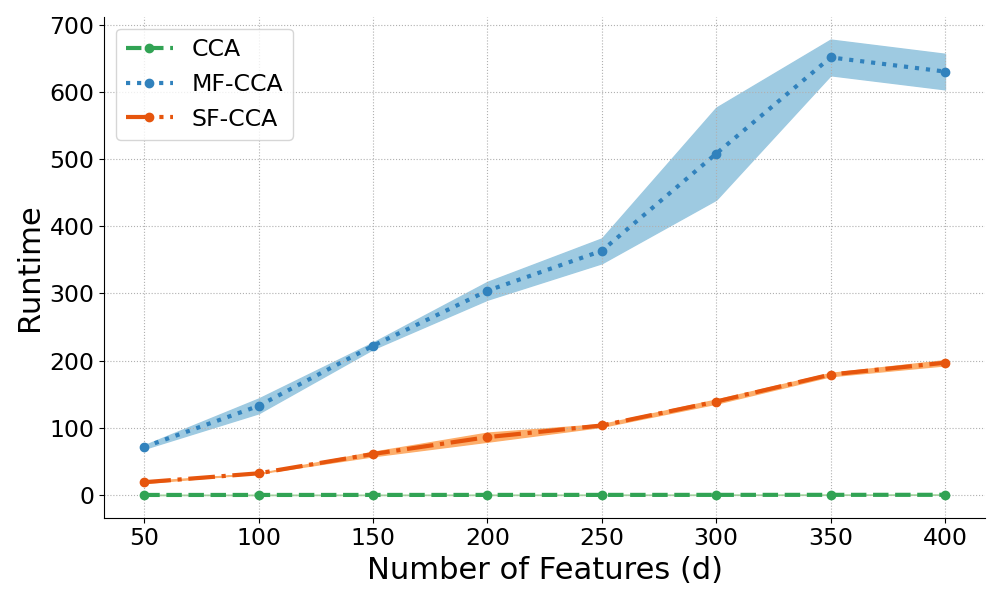}
\caption{Computation time (mean$\pm$std) of 10 repeated experiments for the total three projection dimensions on synthetic data comprising four subgroups ($K$). The number of samples is fixed at $N=2000$, while the number of features varies.} \label{fig:runtime_fixn}
\end{figure}
In the first experimental setup, with the sample size held constant at  $N=2000$ and the number of groups held constant at $K=5$, the dimensionality of features is varied across the set [50, 100, 150, 200, 250, 300, 350, 400]. According to Figure~\ref{fig:runtime_fixn}, it is discerned that as feature dimensionality increases, the runtime of MF-CCA exhibits a concomitant augmentation. Conversely, the runtime associated with SF-CCA remains comparatively stable throughout the varying feature sizes.

\subsubsection{Runtime Sensitivity of MF-CCA and SF-CCA to $N$}
\begin{figure}[h]
\small
\centering
\includegraphics[width=0.7\textwidth]{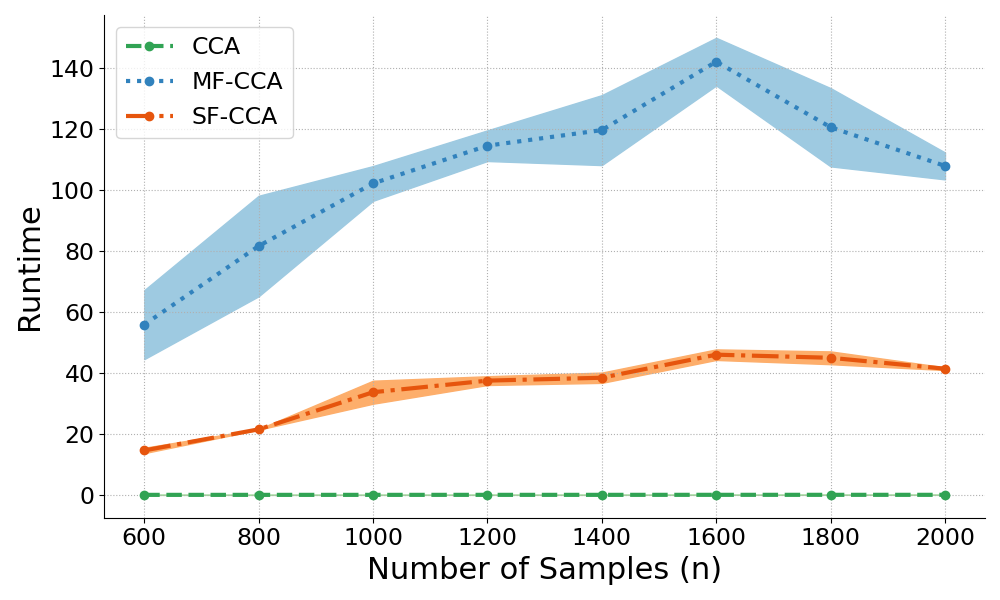}
\caption{Computation time (mean$\pm$std) of 10 repeated experiments for the total three projection dimensions on synthetic data comprising four subgroups ($K$). The number of features is fixed at $d=100$, and the number of groups is held constant at $K=5$, while the number of samples varies.} \label{fig:runtime_fixp}
\end{figure}
In the subsequent experimental configuration, the feature size is held constant at $d=100$, while the sample size varies across the set [600, 800, 1000, 1200, 1400, 1600, 1800, 2000]. Notably, according to Figure~\ref{fig:runtime_fixp}, the increment in the sample size exerts a minimal impact on the runtime duration of SF-CCA. This observation can be rationalized by considering that an increase in the number of features would lead to a covariance matrix of greater dimensionality. Consequently, the corresponding numerical computations, such as the resolution of eigenvalues, become increasingly time-intensive.

\subsubsection{Runtime Sensitivity of MF-CCA and SF-CCA to $K$} \label{sec:runtime_K}
\begin{figure}[h]
\small
\centering
\includegraphics[width=0.7\textwidth]{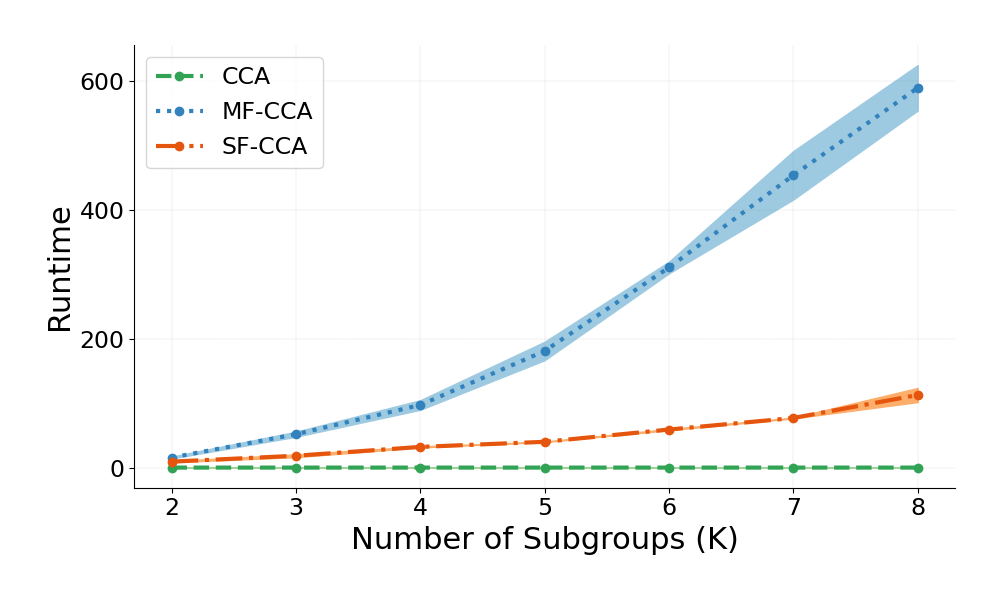}
\caption{Computation time (mean$\pm$std) of 10 repeated experiments for the total seven projection dimensions on synthetic data comprising varying numbers of subgroups ($K$). The number of features is fixed at $d=100$.} \label{fig:runtime_K}
\end{figure}
Finally, we examined the sensitivity of both SF-CCA and MF-CCA to the number of subgroups. Synthetic data was generated with varying numbers of subgroups, allowing us to assess the corresponding running time. The results are presented in Figure~\ref{fig:runtime_K}. From the figure, it is evident that conventional CCA is not significantly sensitive to the number of groups. However, MF-CCA exhibits a higher level of sensitivity to the number of groups compared to SF-CCA.
\section{Experimental Details and Hyperparameter Selection Procedure}\label{app:exp:detail}
In this section, we delve into the particulars of our experiments, which encompass the choice of the penalty function and hyperparameters. We initiate our discussion with the penalty function $\phi$ outlined in Equation~\eqref{eq:pde}. This function takes on various forms, such as absolute, square, and exponential functions. For our experimental purposes, we concentrate specifically on the absolute function.

The distinctive characteristic of the absolute function lies in its robustness against minor discrepancies, as opposed to the square function, which tends to converge rapidly towards zero. It's worth noting that due to the non-differentiability of the absolute function at the origin, its identification within the MATLAB environment requires the utilization of the sign() function. When the input to the penalty function reaches zero, the sign() function indicates that the discrepancy error has already reached zero, leading to the termination of the training process.

\subsection{Hyperparameters Selection}
When it comes to selecting hyperparameters, the key decisions revolve around providing the appropriate learning rate and the parameter $\lambda$ within the SF-CCA framework. Our approach consists of a two-step process. Firstly, we ascertain the optimal $\lambda$ for each dataset through an extensive sensitivity analysis, as detailed in Section~\ref{sec: sensitivity_lambda}. With these derived values in hand, we then conduct a comprehensive grid search to determine the best-fitting learning rate. The unique parameter combinations utilized for each experimental set are elucidated as follows:
\begin{itemize}
    \item \textbf{Synthetic Data}: $\lambda = 10$, learning rate of 2e-2 in SF-CCA, and 4e-1 in MF-CCA.
    \item \textbf{NHANES (Education)}: $\lambda = 5$, learning rate of 2e-3 in SF-CCA, and 5e-2 in MF-CCA.
    \item \textbf{NHANES (Race)}: $\lambda = 5$, learning rate of 1e-3 in SF-CCA, and 2e-2 in MF-CCA.
    \item \textbf{MHAAPS}: $\lambda = 10$, learning rate of 2e-2 in SF-CCA, and 4e-1 in MF-CCA.
    \item \textbf{ADNI (AV45 and AV1451)}: $\lambda = 5$, learning rate of 1e-3 in SF-CCA, and 1e-2 in MF-CCA.
    \item \textbf{ADNI (AV45 and Cognition)}: $\lambda = 5$, learning rate of 1e-3 in SF-CCA, and 2e-2 in MF-CCA.
\end{itemize}
\subsection{Fairness and Correlation Measures}\label{app:sec:fcorr}
The correlation and fairness between $\mathbf{X}$ and $\mathbf{Y}$ under projections of $\mathbf{U}$ and $\mathbf{V}$ within $R$-dimensional spaces can be quantitatively measured by
\begin{subequations}\label{eqn:matrix_measure}
\begin{align}
&\rho = \frac{\text{trace}(\mathbf{U}^\top \mathbf{X}^\top \mathbf{YV})}{\sqrt{\text{trace}(\mathbf{U}^\top \mathbf{X}^\top \mathbf{X} \mathbf{U}) \text{trace}(\mathbf{V}^\top \mathbf{Y}^\top \mathbf{Y} \mathbf{V})}}, \label{eqn:matrix_measure:rho}\\
&\Delta_{\max} = \max_{i,j\in[K]} \left| \mathcal{E}^i(\mathbf{U},\mathbf{V}) - \mathcal{E}^j(\mathbf{U},\mathbf{V}) \right|,\\
&\Delta_{\textnormal{sum}} = \sum_{i,j\in[K]} \left| \mathcal{E}^i(\mathbf{U},\mathbf{V}) - \mathcal{E}^j(\mathbf{U},\mathbf{V}) \right|.
\end{align}
\end{subequations} 
In Section~\ref{sec:eva}, instead of employing matrix-based measurements \eqref{eqn:matrix_measure}, we adopted component-wise measurements \eqref{eqn:measure} to facilitate detailed observations on each individual projection dimension $r$. In the following discussion, we demonstrate that the component-wise measurements, across all $r$ dimensions, provide a more aggressive measurement approach compared to the matrix variants \eqref{eqn:matrix_measure}. In other words, it is evident that having large component-wise correlation values $\rho_r$ and small disparity errors $\Delta_{\max,r}$ and $\Delta_{\textnormal{sum},r}$ for all $r \in [R]$ can guarantee large values of $\rho$ and small values of $\Delta_{\max}$ and $\Delta_{\textnormal{sum}}$.

The following lemma rigorously supports this observation. 

\begin{lem}
Let $\rho_r, \Delta_{\max,r}, \Delta_{\textnormal{sum},r}$ be defined as in Equation~\eqref{eqn:measure}, and let $\rho, \Delta_{\max}, \Delta_{\textnormal{sum}}$ be defined as in Equation~\eqref{eqn:matrix_measure}. Then,
\begin{equation*}
    \rho = \frac{1}{R} \sum_{r=1}^R \rho_r, \quad \Delta_{\max} \leq \sum_{r=1}^R \Delta_{\max,r}, \quad \Delta_{\textnormal{sum}} \leq \sum_{r=1}^R \Delta_{\textnormal{sum},r}.
\end{equation*}
\end{lem}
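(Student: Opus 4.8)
The plan is to reduce all three relations to one structural fact: every matrix-level quantity in \eqref{eqn:matrix_measure} decomposes additively over the $R$ canonical directions, after which the correlation identity becomes a direct trace computation and the two disparity bounds follow from the triangle inequality. The key input is feasibility, $(\m{U}, \m{V}) \in \mc{U} \times \mc{V}$, i.e. $\m{U}^\top \m{X}^\top \m{X} \m{U} = \m{I}_R$ and $\m{V}^\top \m{Y}^\top \m{Y} \m{V} = \m{I}_R$.

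First I would read off the diagonal entries of these constraints to get $\m{u}_r^\top \m{X}^\top \m{X} \m{u}_r = \m{v}_r^\top \m{Y}^\top \m{Y} \m{v}_r = 1$, so the denominator in the definition of $\rho_r$ is $1$ and $\rho_r = \m{u}_r^\top \m{X}^\top \m{Y} \m{v}_r$. Taking traces in the same constraints gives $\trace(\m{U}^\top \m{X}^\top \m{X} \m{U}) = \trace(\m{V}^\top \m{Y}^\top \m{Y} \m{V}) = R$, so the denominator of $\rho$ is $R$. Since $(\m{U}^\top \m{X}^\top \m{Y} \m{V})_{rr} = \m{u}_r^\top \m{X}^\top \m{Y} \m{v}_r$, expanding the trace as the sum of its diagonal entries yields $\trace(\m{U}^\top \m{X}^\top \m{Y} \m{V}) = \sum_{r=1}^R \rho_r$, and therefore $\rho = \frac{1}{R}\sum_{r=1}^R \rho_r$, the claimed equality.

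For the disparity terms the crucial step is the decomposition $\mc{E}^k(\m{U}, \m{V}) = \sum_{r=1}^R \mc{E}^k(\m{u}_r, \m{v}_r)$. This follows by writing both traces in Definition~\ref{defn:dispa:gen} as sums over columns: the global term $\trace(\m{U}^\top {\m{X}^k}^\top \m{Y}^k \m{V}) = \sum_r \m{u}_r^\top {\m{X}^k}^\top \m{Y}^k \m{v}_r$, and the group-optimal term $\trace({\m{U}^{k,\star}}^\top {\m{X}^k}^\top \m{Y}^k \m{V}^{k,\star}) = \sum_r (\m{u}^{k,\star}_r)^\top {\m{X}^k}^\top \m{Y}^k \m{v}^{k,\star}_r$, so that the per-dimension errors $\mc{E}^k(\m{u}_r, \m{v}_r)$ sum to the matrix-level error. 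Consequently $\mc{E}^i(\m{U}, \m{V}) - \mc{E}^j(\m{U}, \m{V}) = \sum_r (\mc{E}^i(\m{u}_r, \m{v}_r) - \mc{E}^j(\m{u}_r, \m{v}_r))$, and the triangle inequality gives $|\mc{E}^i(\m{U}, \m{V}) - \mc{E}^j(\m{U}, \m{V})| \leq \sum_r |\mc{E}^i(\m{u}_r, \m{v}_r) - \mc{E}^j(\m{u}_r, \m{v}_r)|$. Bounding each summand by $\Delta_{\max,r}$ and then maximizing over $(i,j)$ yields $\Delta_{\max} \leq \sum_r \Delta_{\max,r}$; summing instead over all pairs $(i,j)$ and swapping the order of summation yields $\Delta_{\textnormal{sum}} \leq \sum_r \Delta_{\textnormal{sum},r}$.

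The main obstacle is pinning down the component-wise interpretation of $\mc{E}^k(\m{u}_r, \m{v}_r)$ so that the decomposition $\mc{E}^k(\m{U}, \m{V}) = \sum_r \mc{E}^k(\m{u}_r, \m{v}_r)$ holds exactly: one must use the $r$-th columns $(\m{u}^{k,\star}_r, \m{v}^{k,\star}_r)$ of the group-specific optimizers, and invoke the fact that the group CCA solution diagonalizes ${\m{X}^k}^\top \m{Y}^k$ so that its trace is genuinely the sum of the per-direction canonical correlations. Once this bookkeeping is fixed, everything else is linearity of the trace and the triangle inequality, and the only slack introduced (turning equality into "$\leq$" for the two disparity measures) enters precisely at the triangle step.
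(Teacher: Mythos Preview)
Your proposal is correct and follows essentially the same route as the paper: reduce the correlation identity to the trace decomposition under the feasibility constraints $\m{U}^\top\m{X}^\top\m{X}\m{U}=\m{V}^\top\m{Y}^\top\m{Y}\m{V}=\m{I}_R$, and obtain both disparity bounds by writing $\mc{E}^k(\m{U},\m{V})=\sum_r \mc{E}^k(\m{u}_r,\m{v}_r)$ and applying the triangle inequality. One small remark: your concern about needing the group CCA solution to ``diagonalize'' ${\m{X}^k}^\top\m{Y}^k$ is unnecessary, since $\trace({\m{U}^{k,\star}}^\top{\m{X}^k}^\top\m{Y}^k\m{V}^{k,\star})=\sum_r (\m{u}^{k,\star}_r)^\top{\m{X}^k}^\top\m{Y}^k\m{v}^{k,\star}_r$ holds for any matrices simply because the trace is the sum of diagonal entries; the only bookkeeping needed is, as you say, that the component-wise $\mc{E}^k(\m{u}_r,\m{v}_r)$ uses the $r$-th columns of the group-specific optimizer.
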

\begin{proof}
Since $\m{U}=[\m{u}_1, \cdots, \m{u}_R] \in \mb{R}^{D_x \times R}$ and $\m{V}=[\m{v}_1, \cdots, \m{v}_R] \in \mb{R}^{D_y \times R}$, we have 
\begin{equation}
\begin{split}
\trace(\m{U}^\top\m{X}^\top\m{YV})&=\sum_{r=1}^R \m{u}_r^\top\m{X}^\top\m{Y}\m{v}_r, \\
\trace(\m{U}^\top\m{X}^\top\m{X}\m{U})&=\sum_{r=1}^R\m{u}_r^\top\m{X}^\top\m{X}\m{u}_r,\\
\trace(\m{V}^\top\m{Y}^\top\m{Y}\m{V})&=\sum_{r=1}^R\m{v}_r^\top\m{Y}^\top\m{Y}\m{v}_r.
\end{split}
\end{equation}
Note that $\mc{U}=\{ \m{U}\big| \m{U}^\top \m{X}^\top \m{X}\m{U}=\m{I}_R\}$ and $\mc{V} = \{\m{V}\big|\m{V}^\top\m{Y}^\top\m{Y}\m{V}=\m{I}_R\}$. Using the implementation of the retraction operation on $\m{U}$ and $\m{V}$ (see, \eqref{eqn:retra:u:signle} and \eqref{eqn:retra:u:multi}), we obtain
\begin{equation}
\begin{split}
&\m{u}_r\m{X}^\top\m{X}\m{u}_r=\m{v}_r\m{Y}^\top\m{Y}\m{v}_r=1,\\
&\m{U}^\top\m{X}^\top\m{X}\m{U}=\m{V}^\top\m{Y}^\top\m{Y}\m{V}=\m{I}_R, \\
&\trace(\m{U}^\top\m{X}^\top\m{X}\m{U})\trace(\m{V}^\top\m{Y}^\top\m{Y}\m{V})=\trace(\m{I}_R)^2=R^2.
\end{split}
\end{equation}
Thus, for the component-wise measure $\rho_r$ defined in \eqref{eqn:measure:rhor} and the matrix variant $\rho$ defined in \eqref{eqn:matrix_measure:rho}, we have
\begin{equation}
\begin{split}
\rho&=\frac{\trace(\m{U}^\top\m{X}^\top\m{YV})}{\sqrt{\trace(\m{U}^\top\m{X}^\top\m{X}\m{U})\trace(\m{V}^\top\m{Y}^\top\m{Y}\m{V})}}\\
&=\frac{\sum_{r=1}^R \m{u}_r^\top\m{X}^\top\m{Y}\m{v}_r}{\sqrt{R^2}}\\
&=\frac{1}{R}\sum_{r=1}^R \frac{\m{u}_r^\top\m{X}^\top\m{Y}\m{v}_r}{1}\\
&=\frac{1}{R}\sum_{r=1}^R \frac{\m{u}_r^\top\m{X}^\top\m{Y}\m{v}_r}{\sqrt{\m{u}_r^\top\m{X}^\top\m{X}\m{u}_r\m{v}_r^\top\m{Y}^\top\m{Y}\m{v}_r}}\\
&=\frac{1}{R}\sum_{r=1}^R \rho_r.
\end{split}
\end{equation}

For component-wise measure $\Delta_{\max,r}$ and matrix measure $\Delta_{\max}$, we have
\begin{equation}
\begin{split}
\Delta_{\max}=&\max_{i,j\in[K]}\left|\mathcal{E}^i(\m{U},\m{V})-\mathcal{E}^j(\m{U},\m{V})\right|\\
=&\max_{i,j\in[K]}\left|\left(\trace \left({\m{U}^{i,\star}}^\top{\m{X}^i}^\top\m{Y}^{i}\m{V}^{i,\star}\right)-\trace \left({\m{U}}^\top{\m{X}^i}^\top\m{Y}^{i}\m{V}\right)\right)\right.\\
&\left.\quad\quad\quad-\left(\trace \left({\m{U}^{j,\star}}^\top{\m{X}^j}^\top\m{Y}^{j}\m{V}^{j,\star}\right)-\trace \left({\m{U}}^\top{\m{X}^j}^\top\m{Y}^{j}\m{V}\right)\right)\right|\\
=&\max_{i,j\in[K]}\left|\left(\sum_{r=1}^R{\m{u}_r^{i,*}}^\top{\m{X}^i}^\top\m{Y}^i\m{v}_r^{i,*}-\sum_{r=1}^R{\m{u}_r}^\top{\m{X}^i}^\top\m{Y}^i\m{v}_r\right)\right.\\
&\left.\quad\quad\quad-\left(\sum_{r=1}^R{\m{u}_r^{j,*}}^\top{\m{X}^j}^\top\m{Y}^j\m{v}_r^{j,*}-\sum_{r=1}^R{\m{u}_r}^\top{\m{X}^j}^\top\m{Y}^j\m{v}_r\right)\right|\\
=&\max_{i,j\in[K]}\left|\sum_{r=1}^R\left(\left({\m{u}_r^{i,*}}^\top{\m{X}^i}^\top\m{Y}^i\m{v}_r^{i,*}-{\m{u}_r}^\top{\m{X}^i}^\top\m{Y}^i\m{v}_r\right)\right.\right.\\
&\quad\quad\quad\quad\quad\left.\left.-\left({\m{u}_r^{j,*}}^\top{\m{X}^j}^\top\m{Y}^j\m{v}_r^{j,*}-{\m{u}_r}^\top{\m{X}^j}^\top\m{Y}^j\m{v}_r\right)\right)\right|\\
\leq&\sum_{r=1}^R\max_{i,j\in[K]}\left|\left({\m{u}_r^{i,*}}^\top{\m{X}^i}^\top\m{Y}^i\m{v}_r^{i,*}-{\m{u}_r}^\top{\m{X}^i}^\top\m{Y}^i\m{v}_r\right)\right.\\
&\quad\quad\quad\quad\left.-\left({\m{u}_r^{j,*}}^\top{\m{X}^j}^\top\m{Y}^j\m{v}_r^{j,*}-{\m{u}_r}^\top{\m{X}^j}^\top\m{Y}^j\m{v}_r\right)\right|\\
=&\sum_{r=1}^R\max_{i,j\in[K]}\left|\mathcal{E}^i(\m{u}_r,\m{v}_r)-\mathcal{E}^j(\m{u}_r,\m{v}_r)\right|\\
=&\sum_{r=1}^R\Delta_{\max,r}.
\end{split}
\end{equation}
Similarly, for $\Delta_{\textnormal{sum},r}$ and $\Delta_{\textnormal{sum}}$, we have \(\Delta_{\textnormal{sum}}\leq\sum_{r=1}^R\Delta_{\textnormal{sum},r}\).
\end{proof}

\end{document}